\DeclarePairedDelimiter\norm{\lVert}{\rVert}
\DeclareMathOperator{\p}{\mathbb{P}}
\DeclareMathOperator{\X}{\mathcal{X}}
\DeclareMathOperator{\N}{\mathcal{N}}
\DeclareMathOperator{\bE}{\mathbb{E}}
\DeclareMathOperator{\Pp}{\mathbb{P}}
\DeclareMathOperator{\var}{Var}
\DeclareMathOperator{\one}{\mathbb{I}}
\newcommand{\indep}{\perp\!\!\!\perp}
\newcommand{\bbracket}[1]{\ensuremath{\left({#1}\right)}}
\newcommand{\E}[2]{\ensuremath{\mathbb{E}_{#1}\left[{#2}\right]}}
\newcommand{\condE}[3]{\ensuremath{\mathbb{E}_{#1}\left[\left.{#2}\right|{#3}\right]}}
\newcommand{\Var}[2]{\ensuremath{\mathrm{Var}_{#1}\left({#2}\right)}}
\newcommand{\Cov}[2]{\ensuremath{\mathrm{Cov}\left({#1}, {#2}\right)}}
\newcommand{\EE}{\mathbb{E}}
\newcommand{\hGamma}{\widehat{\Gamma}}
\theoremstyle{plain}
\newtheorem{proposition}{Proposition}
\newtheorem{lemma}{Lemma}
\newtheorem{theorem}{Theorem}
\newtheorem{assumption}{Assumption}
\newcounter{remark}
\newenvironment{remark}[1][]{\refstepcounter{remark}\par\medskip
   \noindent \textsc{Remark~\theremark. #1} \rmfamily}{\medskip}
\newcounter{example}[section]
\newenvironment{example}[1][]{\refstepcounter{example}\par\medskip
   \noindent \textsc{Example~\theexample. #1} \rmfamily}{\medskip}
\newcommand{\StableVar}{\texttt{StableVar} }
\newcommand{\minvar}{\texttt{MinVar} }
\title{Off-Policy Evaluation via Adaptive Weighting with Data from Contextual Bandits~\thanks{This paper has been accepted in the Proceedings of the 27th ACM SIGKDD Conference on Knowledge Discovery and Data Mining (KDD '21) on May 17, 2021. The published version includes a short appendix with sketches of the proofs, while the appendix in this version includes complete proofs.}
}
\newcommand{\printfnsymbol}[1]{%
  \textsuperscript{\@fnsymbol{#1}}%
}
\author{Ruohan Zhan\thanks{Institute for Computational and Mathematical Engineering, Stanford University.} ~~ Vitor Hadad\thanks{Graduate School of Business, Stanford University.} ~~ David A. Hirshberg\printfnsymbol{3} ~~ Susan Athey\printfnsymbol{3}
}
\begin{document}

\maketitle

\begin{abstract}
\noindent It has become increasingly common for data to be collected adaptively, for example using contextual bandits. Historical data of this type can be used to evaluate
other treatment assignment policies to guide future innovation or experiments. However, policy evaluation is challenging if the target policy differs from the one used to collect data, and popular estimators, including doubly robust (DR) estimators, can be plagued by bias, excessive variance, or both. 
In particular, when the pattern of treatment assignment in the collected data looks little like the pattern generated by the policy to be evaluated,
the importance weights used in DR estimators explode, leading to excessive variance. 

In this paper, we improve the DR estimator by adaptively weighting observations to control its variance.
We show that a $t$-statistic based on our improved estimator is asymptotically normal under certain conditions, 
allowing us to form confidence intervals and test hypotheses. 
Using synthetic data and public benchmarks, we provide empirical evidence for our estimator's improved accuracy and inferential properties 
relative to existing alternatives. 
\end{abstract}


\section{Introduction}\label{sec:introduction}
Off-policy evaluation is the problem of estimating the benefits of one treatment assignment policy using historical data that was collected using another. For example, in personalized healthcare we may wish to use historical data to evaluate how particular groups of patients will respond to a given  treatment regime for the design of future clinical trials \citep{murphy2003optimal}; in targeted advertising one may want to understand how alternative advertisements perform for different consumer segments \citep{li2011unbiased}. The estimation challenge arises since an individual's outcome is observed only for the assigned treatments, so counterfactual outcomes for alternative treatments are not observed. 
There is a large literature on this problem in the cases where historical observations, which are collected by one or multiple policies, are independent from one another~\citep{dudik2011doubly,imbens2015causal,agarwal2017effective,kallus2020optimal}. However, it has been increasingly common for data to be collected in adaptive experiments, for example using \emph{contextual bandit} algorithms \citep[e.g.,][]{agrawal2013thompson,lattimore2018bandit, russo2017tutorial}. Contextual bandits trade off exploration and exploitation in an attempt to learn the policy that best targets the treatment assignment to an observation's context (e.g., a patient's characteristics). In these experiments, assignment probabilities depend on the context in a way that is updated over time as the algorithm learns from past data. In this paper, we focus on off-policy evaluation using data collected by contextual bandits.   

\begin{figure}[t]
    \centering
    \includegraphics[width=.6\linewidth]{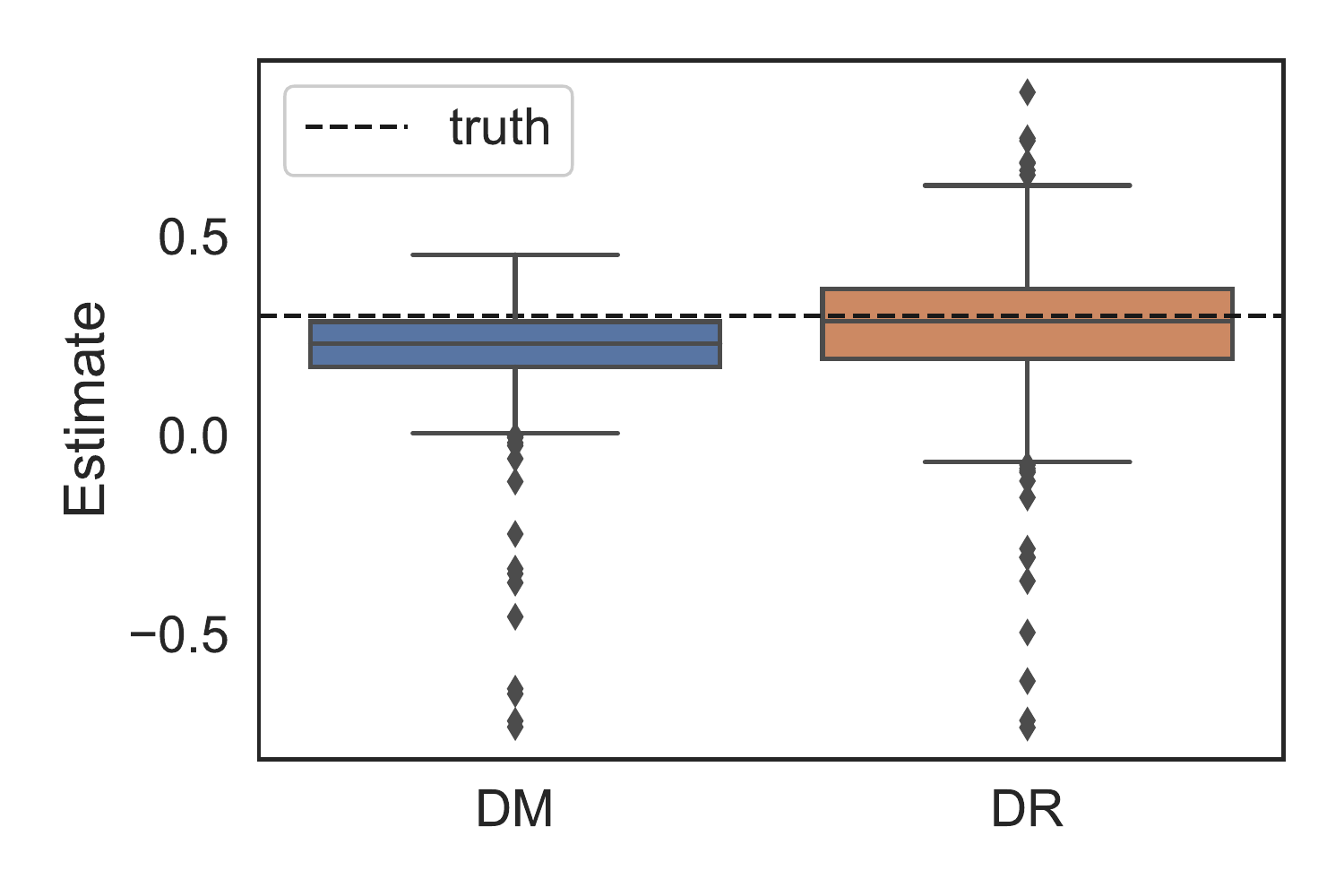}
    \caption{Distribution of DM and DR policy value estimates in Example~\ref{example} over $1000$ simulations. The interquartile range (IQR) box represents the middle $50\%$ of the data, and the whiskers are extended by $1.5$ of the IQR past the low and high quartiles. DM is highly skewed and has large downward bias due to adaptive data collection, while DR has high variance and heavy tails due to large importance weights from the poor overlap between the data-collection mechanism and the target policy. 
    }
    \label{fig:intro}
\end{figure}

One common approach to off-policy evaluation, often called the \emph{direct method} (DM), is to use the historical data to estimate a regression model that
predicts outcomes for each treatment/context pair, then averages this model's predictions for treatments assigned by the target policy. 
However, regression models fit to adaptively collected data tend to be biased \citep{villar2015multi,nie2017adaptively,shin2019sample,shin2020conditional},
and this bias is inherited by DM estimators. In particular, downward bias tends to arise when fitting data generated by a bandit, 
as treatments for which we observe random downward fluctuations early on tend to be sampled less afterward than those for which we observe upward ones.
Therefore, downward fluctuations get corrected less than upward ones when more data arrives. Consider the following example.

\begin{example}
\label{example}
We run a contextual bandit experiment with a Thompson sampling agent  \citep{thompson1933likelihood}, collecting $1000$ observations for one simulation. At each time $t$, a context $X_t$ is sampled from the context space $\{x_0, x_1, x_2, x_3, x_4\}$ with probability $[0.6, 0.1, 0.1, 0.1, 0.1]$. The agent chooses one $W_t=w_j$ out of five treatments  $\{w_0, w_1, w_2, w_3, w_4\}$ based on assignment probabilities that are computed using previous observations, and observes the outcome $\one\{i=j\} + \varepsilon_t$,  where $\varepsilon_t$ is standard normal.  The target policy always assigns treatment $W_t=w_0$.
\end{example}

\noindent A natural implementation of the direct method estimates the target policy value by $T^{-1}\sum_{t=1}^{T}  \hat\mu_0(X_t)$, where $\hat\mu_0(x)$ is the sample mean calculated from observed outcomes under treatment $w_0$ on context $x$. Figure~\ref{fig:intro} shows that it has substantial bias.

An unbiased alternative, \emph{inverse propensity weighting} (IPW) \citep{horvitz1952generalization}, weights observations from the collected data to look like treatment were assigned using the target policy. 
However, when the assignments made during data collection differ substantially from those made by the target policy, called \emph{low overlap} between policies,
this requires the weights, and therefore the variance, to be large \citep{imbens2004nonparametric}. 
The doubly robust (DR) estimator \citep{dudik2011doubly}, also unbiased, reduces variance by combining DM and IPW.
It uses a weighted average of regression residuals to correct the bias of the DM estimator,
reducing variance to a degree determined by the accuracy of the regression model's predictions. 
This estimator is optimal in large samples when historical observations are independent and identically distributed \citep{bickel1993efficient}. 
With observations from adaptive experiments, while what is optimal is not well understood, there is strong evidence that DR estimator is outperformed by other estimators in terms of mean squared error \citep{hadad2021confidence}.

This difference in behavior is caused by \emph{drifting overlap} in adaptive experiments. 
As the experimenter shifts the data-collection policy in response to what they observe,
overlap with the target policy can deteriorate. As a result, the variance of both the IPW and DR estimators 
can be unacceptably large. This is exacerbated in contextual settings, as importance weights can grow without bound even when there exists a relatively small set of contexts for which overlap is low.
In Example~\ref{example}, the set of contexts for which overlap worsens has total probability $0.4$, 
but, as shown in Figure~\ref{fig:intro}, this is enough to inflate the variance of the DR estimator.

Recently, shrinking importance weights toward one has been explored as a variance reduction strategy for the DR estimator \citep{charles2013counterfactual,wang2017optimal,su2019cab,su2020doubly}. This can substantially reduce variance by introducing a potentially small bias if the amount of shrinkage is chosen well,
but choosing this is a challenging tuning problem. An alternative approach focuses on local stabilization. 
For example,  when data is collected in batches, \cite{zhang2020inference} proposes the use of 
a hypothesis test for the policy value based on the average of batch-specific studentized statistics. \citeauthor{zhang2020inference} shows that the variance is stabilized
within each batch using this hypothesis test. \cite{luedtke2016statistical} propose a locally studentized version of the DR estimator that does not require batched collection,
using a rolling-window estimate of a term in the DR estimator's standard deviation. 
\cite{hadad2021confidence} refine this approach, observing that the optimal way to aggregate 
observations with different variances is not an inverse standard deviation weighted (standardized/studentized) average
 but an inverse variance weighted one. Focusing on the case of multi-armed bandits, 
they show that locally inverse variance weighted averages can reduce asymptotic variance while retaining its desirable inferential properties
if a particular approximate rolling-window variance estimate is used.
We view our method as an extension of \cite{hadad2021confidence} to contextual bandits,
where the story of local stabilization is more complex 
because the relevant notion of local variance depends on context as well as treatment.


\section{Setting}

We begin by formalizing the problem of off-policy evaluation in contextual bandits and introducing some notation.
We use potential outcome notation \citep{imbens2015causal}, denoting by $Y_t(w)$ a random variable representing the outcome that would be observed 
if at time $t$ an individual were assigned to a treatment $w$ from a finite set of options $\{1 \ldots K\}$.
In any given experiment, this individual can be assigned only one treatment $W_t$, 
so we observe only one realized outcome $Y_t = Y_t(W_t)$. Associated to each individual is
a context $X_t$; a policy $\pi$ assigns individuals with context $x$ to a treatment $w$ with probability $\pi(x,w)$.

We focus on the ``stationary'' environment where individuals, represented by 
a context $X_t  \in \mathcal{X}$ and a vector of potential outcomes $\big(Y_t(1),$ $\ldots, Y_t(K)\big)$, are independent and identically distributed.
However, our observations $(X_t, W_t, Y_t)$ are neither independent nor identically distributed because the assignment $W_t$ depends on the observations before it.
We assume that an individual's treatment assignment is randomized, with nonzero probability of receiving each treatment,\footnote{We do, however, permit the assignment probability $e_t(x,w)$ to decay to zero as $t$ grows. This is typically what happens in bandit experiments when 
treatment $w$ is, on average, suboptimal for individuals with context $x$.}
as a function of their context and previous observations. We write $H_t = \{(X_s,W_s,Y_s): s \le t\}$ for the history
of observations up to time $t$; $e_t(x,w)=\Pp(W_t=w \mid X_t=x, H_{t-1})$ for the conditional randomization probability, also known as the propensity score;
and $\mu_t(x,w)=\bE[Y_t(w) \mid X_t=x]$ for the regression of the outcome on context. We summarize these assumptions 
as follows.


\begin{assumption} For all periods $t$, \\ 
\label{assu:data}
\vspace{-.4cm}
 \begin{enumerate}[(a)]
     \item $(X_t, Y_t(1) \ldots Y_t(K))$  are independent and identically distributed;
     \item $(Y_t(1) \ldots Y_t(K)) \indep W_t \mid X_t,\ H_{t-1}$
     \item $e_t(x,w)>0$ for all $(x,w)$.

 \end{enumerate}
 \end{assumption}

Our goal is to estimate the quantity 
\begin{equation}
    Q(\pi) 
    := \sum_{w=1}^K  \bE[\pi(X_t, w) Y_t(w)],
\end{equation}
which is the average outcome attained when individuals are assigned treatment in accordance with the policy $\pi$, typically called the \emph{average policy value}.

Because Assumption~\ref{assu:data}(a) implies $\mu_t(x,w)$ is constant in $t$, we will write $\mu(x,w)$ in its place from here on.
To simplify notation, we will also write \smash{$\sum_{w} f(\cdot, w)$} for \smash{$\sum_{w=1}^K f(\cdot, w)$}, write $||f(z)||_{\infty}$
for the sup-norm  $\sup_{z \in \mathcal{Z}} |f(z)|$ of a function $f: \mathcal{Z} \to \mathbb{R}$,
and let $Q(x, \pi) := \sum_{w} \E{}{\pi(X_t, w) Y_t(w)|X_t=x}$.

\section{Doubly robust estimator}
\label{sec:dr}

The doubly robust  (DR) estimator considered here takes the form
\begin{equation}
    \label{eq:qdr}
    \widehat{Q}_T^{DR}(\pi) := \frac{1}{T} \sum_{t=1}^{T} \widehat{\Gamma}_t(X_t, \pi),
\end{equation}
\noindent where the objects being averaged are \emph{doubly-robust scores},
\begin{equation*}
    \label{eq:aipw}
    \widehat{\Gamma}_t(X_t, \pi) 
    := \sum_{w} \pi(X_t,w) \Big( \widehat{\mu}_t(X_t, w) + \frac{\one\{W_t=w\}}{e_t(X_t,w)}(Y_t-\widehat{\mu}_t(X_t, w))\Big).
\end{equation*}
Above, $\widehat{\mu}_t(x,w)$ is an estimate of the outcome regression $\mu(x, w)$ based only on the history $H_{t-1}$. 
Similar estimators are widely used, including in bandit settings \citep{dudik2011doubly, howard2018uniform, van2008construction,hadad2021confidence}.

Recalling that the assignment probabilities $e_t(X_t, w)$ 
and the estimated regression $\widehat{\mu}_{t}(X_t, w)$ are functions of the history, 
it is straightforward to show that the DR scores \smash{$\widehat\Gamma_t(\pi)$} are unbiased estimators of the policy value $Q(\pi)$ conditional on past history 
and, by the tower property, their average $\widehat Q_T^{DR}(\pi)$  is unbiased.
\begin{proposition}[Unbiasedness] For any $\pi\in\Pi$, we have
\label{prop:aipw_unbiased}
\[ \bE\big[\widehat{\Gamma}_t(X_t, \pi)|H_{t-1}, X_t\big] = \sum_w \pi(X_t, w)\mu(X_t, w) \ \mbox{ and } 
\ \bE\big[\widehat{Q}_T^{DR}(\pi)\big]
    =
    \bE\Big[\frac{1}{T} \sum_{t=1}^{T}
        \bE\big[\widehat{\Gamma}_t(X_t, \pi)|H_{t-1}\big]
        \Big]
    = Q(\pi).
\]
\end{proposition}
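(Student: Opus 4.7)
The plan is to prove part~(1) directly from the definitions, using Assumption~\ref{assu:data}(b) to handle the inverse-propensity-weighted term, and then derive part~(2) by an iterated expectation argument that leverages the i.i.d.\ structure of the contexts and potential outcomes given in Assumption~\ref{assu:data}(a).

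For part~(1), I would expand $\widehat\Gamma_t(X_t,\pi)$ and observe that, conditional on $(H_{t-1}, X_t)$, the quantities $\pi(X_t,w)$, $\widehat\mu_t(X_t,w)$, and $e_t(X_t,w)$ are all measurable (hence constants that can be pulled out of the conditional expectation). The only stochastic piece inside the sum is $\one\{W_t=w\}(Y_t - \widehat\mu_t(X_t,w))$. Using $Y_t = Y_t(W_t)$ and the fact that on the event $\{W_t=w\}$ we have $Y_t = Y_t(w)$, I would rewrite
\[
\bE\!\left[\one\{W_t=w\}(Y_t-\widehat\mu_t(X_t,w))\,\big|\,H_{t-1},X_t\right]
=\bE\!\left[\one\{W_t=w\}(Y_t(w)-\widehat\mu_t(X_t,w))\,\big|\,H_{t-1},X_t\right].
\]
Now Assumption~\ref{assu:data}(b) gives $Y_t(w) \indep W_t \mid X_t, H_{t-1}$, so this factors as $e_t(X_t,w)\,(\mu(X_t,w) - \widehat\mu_t(X_t,w))$. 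Dividing by $e_t(X_t,w)$ (which is strictly positive by Assumption~\ref{assu:data}(c)) and adding the $\widehat\mu_t(X_t,w)$ term yields $\mu(X_t,w)$, so the $\widehat\mu_t$ terms cancel and summing over $w$ weighted by $\pi(X_t,w)$ produces the claimed identity.

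For part~(2), I would apply the tower property twice. First, $\bE[\widehat\Gamma_t(X_t,\pi)\mid H_{t-1}] = \bE[\,\bE[\widehat\Gamma_t(X_t,\pi)\mid H_{t-1},X_t]\mid H_{t-1}] = \bE[\sum_w \pi(X_t,w)\mu(X_t,w)\mid H_{t-1}]$ by part~(1). Since $X_t$ is drawn from the marginal (independent of $H_{t-1}$ by Assumption~\ref{assu:data}(a)), this conditional expectation equals $\sum_w \bE[\pi(X_t,w)\mu(X_t,w)] = Q(\pi)$, which by construction does not depend on $t$. Averaging over $t=1,\dots,T$ and taking an outer expectation gives $\bE[\widehat Q_T^{DR}(\pi)] = Q(\pi)$.

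The only real subtlety is bookkeeping the conditioning $\sigma$-algebras: one must make sure $\widehat\mu_t$, $e_t$, and $\pi$ depend only on information that is measurable with respect to $(H_{t-1},X_t)$, and that the unconfoundedness in Assumption~\ref{assu:data}(b) is invoked at the right step to swap $Y_t$ for $Y_t(w)$ and decouple it from $\one\{W_t=w\}$. Once those measurability and independence facts are cleanly stated, the computation is essentially a two-line manipulation; I do not anticipate any analytic obstacle beyond this.
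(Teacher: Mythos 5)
Your proposal is correct and follows essentially the same route as the paper: both isolate the inverse-propensity-weighted residual term, use Assumption~\ref{assu:data}(b) to decouple $\one\{W_t=w\}$ from $Y_t(w)$ given $(H_{t-1},X_t)$ so that the $\widehat\mu_t$ terms cancel, and then integrate out $X_t$ (using its independence from $H_{t-1}$) and apply the tower property to conclude. The only cosmetic difference is that the paper writes the decoupling step as conditioning on $\{W_t=w\}$ and multiplying by $e_t(X_t,w)$, whereas you factor the expectation directly via conditional independence; these are the same calculation.
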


\noindent We refer readers to Appendix~\ref{sec:proof_aipw_unbiased} for proof details. The variance of the DR estimator is largely governed by the similarity between the policy being evaluated and the assignment probabilities. This phenomenon is expected since we should be better able to evaluate policies that assign treatments in a similar manner as that used when collecting data. The next proposition, which quantifies this intuition, is convenient when considering variance-minimizing schemes in the next section. 

\begin{proposition}[Dominant Term of Variance]
\label{prop:aipw_var} Under Assumption~\ref{assu:data}, suppose there are positive, finite constants $C_0$, $C_1$ such that for all $w$ and $t$,  $\var(Y_t(w)\mid X_t)\in[C_0, C_1]$. Suppose there is a finite constant $C_2$ such that 
$\norm{\mu}_{\infty} < C_2$ and $\norm{\widehat{\mu}_t}_{\infty} < C_2$ for all $t$. Then, there exist positive, finite constants $L$, $U$ depending only on $C_0, C_1, C_2$ such that for any $\pi\in\Pi$ and all $t$,
\begin{equation}
    \label{eq:gamma_condvar_x}
    L\leq \Var{}{\widehat{\Gamma}_t(X_t, \pi)|H_{t-1}, X_t} \Big/ \bE\Big[\sum_{w}\frac{\pi^2(X_t,w)}{e_t(X_t,w)}\Big|H_{t-1}, X_t\Big] \leq U.
\end{equation}
\end{proposition}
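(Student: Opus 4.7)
The plan is to fix $(H_{t-1}, X_t)$ throughout, so that $\pi(X_t, w)$, $e_t(X_t, w)$, and $\widehat{\mu}_t(X_t, w)$ are all deterministic. The conditional expectation in the denominator of~\eqref{eq:gamma_condvar_x} then collapses to the deterministic quantity $S_t := \sum_{w} \pi^2(X_t,w)/e_t(X_t,w)$, and since $\widehat{\Gamma}_t(X_t,\pi)$ differs from $Z_t := \pi(X_t, W_t)(Y_t - \widehat{\mu}_t(X_t, W_t))/e_t(X_t, W_t)$ by the additive constant $\sum_w \pi(X_t,w)\widehat{\mu}_t(X_t,w)$, it suffices to establish $L\,S_t \le \Var{}{Z_t \mid H_{t-1}, X_t} \le U\,S_t$.

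The main step is the law of total variance, further conditioning on $W_t$:
\begin{equation*}
\Var{}{Z_t \mid H_{t-1}, X_t}
= \bE\big[\Var{}{Z_t \mid H_{t-1}, X_t, W_t} \,\big|\, H_{t-1}, X_t\big]
+ \Var{}{\bE[Z_t \mid H_{t-1}, X_t, W_t] \mid H_{t-1}, X_t}.
\end{equation*}
Assumption~\ref{assu:data}(a)--(b) together yield $Y_t(w) \indep (W_t, H_{t-1}) \mid X_t$, so the inner variance at $W_t = w$ equals $\pi^2(X_t,w)\Var{}{Y_t(w)\mid X_t}/e_t^2(X_t,w)$; taking expectation over $W_t$ produces $\sum_{w} \pi^2(X_t,w)\Var{}{Y_t(w)\mid X_t}/e_t(X_t,w)$, which lies in $[C_0 S_t,\, C_1 S_t]$ by the assumed bounds on $\Var{}{Y_t(w)\mid X_t}$. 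Similarly, the inner expectation at $W_t = w$ equals $g(w) := \pi(X_t,w)(\mu(X_t,w) - \widehat{\mu}_t(X_t,w))/e_t(X_t,w)$; the variance of $g(W_t)$ under the randomness of $W_t$ is nonnegative, and is at most its second moment $\sum_{w} \pi^2(X_t,w)(\mu(X_t,w) - \widehat{\mu}_t(X_t,w))^2/e_t(X_t,w) \le 4 C_2^2 S_t$ via $\norm{\mu}_{\infty}, \norm{\widehat{\mu}_t}_{\infty} \le C_2$. Combining these bounds I would take $L = C_0$ and $U = C_1 + 4C_2^2$.

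The subtle point, and the reason to condition on $W_t$ rather than expand via $\Var{}{Z_t \mid \cdot} = \bE[Z_t^2 \mid \cdot] - \bE[Z_t \mid \cdot]^2$, is the lower bound. The squared mean $\bE[Z_t \mid H_{t-1}, X_t]^2 = \big(\sum_{w} \pi(X_t,w)(\mu(X_t,w) - \widehat{\mu}_t(X_t,w))\big)^2$ is bounded only by $4 C_2^2$ and carries no $S_t$ factor, so it could in principle overwhelm the $C_0 S_t$ lower bound on $\bE[Z_t^2 \mid H_{t-1}, X_t]$ when $S_t$ sits near its Cauchy--Schwarz floor of $1$. Conditioning on $W_t$ first channels the $S_t$ scaling entirely into the nonnegative conditional-variance summand and quarantines the problematic mean contribution in the second (also nonnegative) summand, which I discard when deriving the lower bound.
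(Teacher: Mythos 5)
Your proof is correct and lands on the same decomposition and the same constants ($L=C_0$, $U=C_1+4C_2^2$) as the paper, but you organize the computation differently. The paper expands $\Var{}{\widehat\Gamma_t(X_t,\pi)\mid H_{t-1},X_t}$ as a double sum over arm-wise variances and covariances of the scores $\widehat\Gamma_t(X_t,w)$, computes each piece, collects the regression-error terms into a remainder $A=\sum_w \pi^2(\hat\mu_t-\mu)^2/e_t - \big(\sum_w\pi(\hat\mu_t-\mu)\big)^2$, and then proves $A\ge 0$ by Cauchy--Schwarz using $\sum_w e_t(X_t,w)=1$. You instead condition on $W_t$ and apply the law of total variance, which produces exactly the same two pieces — the expected conditional variance is $\sum_w\pi^2\Var{}{Y_t(w)\mid X_t}/e_t$, and the variance of the conditional mean is precisely $A$ — but with $A\ge 0$ automatic because it is itself a variance. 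Indeed, the paper's Cauchy--Schwarz step is literally the inequality $\bE[g(W_t)]^2\le\bE[g(W_t)^2]$ for your $g$, so the two arguments are equivalent; your packaging is arguably cleaner in that the one nontrivial inequality needed for the lower bound comes for free, and your closing remark correctly identifies why discarding the mean contribution (rather than trying to bound it against $C_0 S_t$) is the right move.
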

\noindent The proof is deferred to Appendix~\ref{sec:proof_aipw_var}. Averaging over the contexts in  \eqref{eq:gamma_condvar_x}, we derive a variant for the context-free conditional variance.
\begin{equation}
    \label{eq:gamma_condvar}
    L\leq \Var{}{\widehat{\Gamma}_t(X_t, \pi)|H_{t-1}} \Big/ \bE\Big[\sum_{w}\frac{\pi^2(X_t,w)}{e_t(X_t,w)}\Big|H_{t-1}\Big]  \leq U.
\end{equation}

\section{Adaptive weighting}
\label{sec:noncontextual_weighting}

When overlap between the evaluation policy and the assignment mechanism deteriorates as $t$ grows
(in the sense that $\sum_w \pi^2(X_t, w) / e_t(X_t, w)$ increases), 
the DR estimator \eqref{eq:qdr} can have high variance and heavy tails with large samples. This phenomenon is demonstrated in Figure~\ref{fig:intro}. 
In light of Proposition~\ref{prop:aipw_var}, the problem is that the DR estimator is a uniformly-weighted average of scores \smash{$\widehat \Gamma_t$} 
with variances that can differ dramatically. We can improve on the DR estimator by averaging those scores with non-uniform weights $h_t$ that stabilize variance \smash{$\var(\widehat{\Gamma}_t(X_t, \pi) \mid H_{t-1})$} term-by-term.
\begin{equation}
\label{eq:qnc}
    \widehat{Q}^{NC}_T(\pi) := 
    \sum_{t=1}^T 
        \frac{h_t}{\sum_{s=1}^T h_s}
        \widehat{\Gamma}_t(X_t, \pi),
\end{equation}

As stabilizing weights, we consider both weights that approximately standardize the terms \smash{$\widehat \Gamma_t$} 
and approximate inverse variance weights. Both rely on the variance proxy from \eqref{eq:gamma_condvar}:
\begin{equation}
\label{eq:hnc_infeasible}
    h_t := \phi\Big(
      \bE\Big[
        \sum_{w}
            \frac{\pi^2(X_t;w)}{e_t(X_t;w)}\Big|H_{t-1}\Big]
            \Big),
\end{equation}
where different functions $\phi$ yields weights that have different properties:
\begin{enumerate}[(i)]
    \item \texttt{StableVar}: $\phi(v)=\sqrt{1/v}$ yields  weights $h_t$ that approximately standardize the terms of $\widehat{Q}^{NC}_T$.
    \item \texttt{MinVar}: $\phi(v)=1/v$ yields weights $h_t$ that approximately minimize the variance of $\widehat{Q}^{NC}_T$.
\end{enumerate}

We refer to both weights in \eqref{eq:hnc_infeasible} as \emph{non-contextual} weights, since the realized context $X_t$ is integrated out. In practice, one may not compute oracle weights \eqref{eq:hnc_infeasible} exactly. We thus use a feasible approximation in our simulation,
\begin{equation}
\label{eq:hnc_feasible}
    \widetilde h_t := \phi\Big(
      \frac{1}{t-1}
      \sum_{s=1}^{t-1}
        \sum_{w}
            \frac{\pi^2(X_s;w)}{e_t(X_s;w)}
            \Big).
\end{equation}

The idea of \minvar weighting scheme is similar to the weighted IPW estimator proposed in \cite{agarwal2017effective}, where the authors use the (context-independent) inverse variance to weight samples from multiple logging policies. 
The essential difference is that, in the adaptive setting we consider, each observation is from a new logging policy, so we must estimate inverse variance weights for each observation instead of for a relatively small number of logging policies.

The \texttt{StableVar} weights are less effective at reducing variance than \texttt{MinVar},
 but would allow us to construct asymptotically valid confidence intervals if we had access to an oracle that would allow us to compute the conditional expectation in \eqref{eq:hnc_infeasible}.  
 To start, we introduce additional assumptions on the data generating process and the nuisance estimator $\hat{\mu}_t$.

\begin{assumption}
\label{assu:clt_condition}
Suppose that the following conditions are satisfied:
\begin{enumerate}[(a)]
    \item $\var(Y_t(w)|X_t=x)\geq C_0$ and $\bE[Y_t^4(w)|X_t=x]\leq C_1$ for all $w, x$ and some positive constants $C_0, C_1$.
    \item $e_t(x,w) \geq Ct^{-\alpha}$ 
for all $w, x$ and some constants $C$ and $ \alpha\in[0,\frac{1}{2})$.
    \item $\sup_{w,x}|\mu(x,w)|$ and $\sup_{t,w,x}|\hat\mu_t(x,w)|$  are uniformly bounded and $\hat \mu_t$ is convergent in the sense that,
    for some function $\mu_\infty$,
    $\sup_{w,x}|\hat\mu_t(x, w)-\mu_\infty(x, w)| \to 0$ almost surely.
\end{enumerate} 
\end{assumption}
Assumption~\ref{assu:clt_condition}(a) is satisfied for various types of non-constant random variables including, for example, bounded rewards. Assumption~\ref{assu:clt_condition}(b) requires that exploration in the adaptive experiments should not be reduced faster than some polynomial decay rate, and is also enforced when analyzing the asymptotics of adaptive weighting for multi-armed bandits in \cite{hadad2021confidence}. Assumption~\ref{assu:clt_condition}(c) is standard in the literature on DR estimators \citep{imbens2004nonparametric,chernozhukov2016double}. We state a limit theorem,
proven in Appendix~\ref{sec:proof_noncontextual_clt}, for for $\hat{Q}_T^{NC}$ using \StableVar weighting. 

\begin{theorem}
\label{thm:non_contextual} 
Under Assumptions~\ref{assu:data} and~\ref{assu:clt_condition}, estimator $\widehat{Q}_T^{NC}(\pi)$ with \texttt{StableVar} weights is consistent for true policy value $Q(\pi)$. Furthermore,  suppose that  assignment probabilities $e_t$ satisfy
\begin{equation}
\label{eq:e_convergence}
    \sup_{x,w}\bigg|\frac{e_t^{-1}(x,w)}{\bE[e_t^{-1}(x,w)]} - 1\bigg|\rightarrow0, \quad a.s.
\end{equation}
Then non-contextual \texttt{StableVar} weighting yields an asymptotically normal studentized statistic 
\begin{equation}
\label{eq:clt_noncontextual}
    \frac{\widehat{Q}_T^{NC}(\pi)-Q(\pi)}{\big(\widehat{V}^{NC}_T(\pi)\big)^{1/2}}
    \xrightarrow{d}\N(0,1), 
    \  \text{ where } \ 
    \widehat{V}_T^{NC}(\pi) := \frac{\sum_{t=1}^T h_t^2\big(\widehat{\Gamma}_t(X_t, \pi) -\widehat{Q}_T^{NC}(\pi) \big)^2 }{\big(\sum_{t=1}^T h_t \big)^2}.
\end{equation}
\end{theorem}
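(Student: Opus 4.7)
The plan is to treat $\sum_{t=1}^T h_t(\widehat\Gamma_t(X_t,\pi)-Q(\pi))$ as a martingale, obtain asymptotic normality via a martingale CLT, and recover the self-normalised form through Slutsky. Set $\Delta_t := \widehat\Gamma_t(X_t,\pi)-Q(\pi)$. Since $e_t$ and $\hat\mu_t$ are $H_{t-1}$-measurable and $X_t$ is independent of $H_{t-1}$, Proposition~\ref{prop:aipw_unbiased} together with the tower rule yields $\bE[\Delta_t\mid H_{t-1}]=0$; as the \texttt{StableVar} weight $h_t=v_t^{-1/2}$ is itself $H_{t-1}$-measurable, $\xi_t := h_t\Delta_t$ is a martingale difference sequence. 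By Cauchy--Schwarz applied to the probability vector $\pi(X_t,\cdot)$ one has $\sum_w\pi^2(X_t,w)/e_t(X_t,w)\ge 1$, hence $v_t\ge 1$ and $h_t\le 1$, which combined with Proposition~\ref{prop:aipw_var} gives $s_T^2 := \sum_t\bE[\xi_t^2\mid H_{t-1}]=\sum_t h_t^2\Var(\widehat\Gamma_t\mid H_{t-1})\in[L,U]\cdot T$. Since Assumption~\ref{assu:clt_condition}(b) implies $h_t\gtrsim t^{-\alpha/2}$ almost surely and hence $\sum_t h_t\gtrsim T^{1-\alpha/2}$, Chebyshev applied to $\sum_t\xi_t/\sum_t h_t$ gives the stated consistency $\widehat Q_T^{NC}(\pi)\xrightarrow{p}Q(\pi)$ at rate $O_p(T^{(\alpha-1)/2})$.

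For the CLT, rewrite
\[
\frac{\widehat Q_T^{NC}(\pi)-Q(\pi)}{\sqrt{\widehat V_T^{NC}(\pi)}}
=\frac{\sum_{t=1}^T \xi_t}{\sqrt{\sum_{t=1}^T h_t^2(\widehat\Gamma_t-\widehat Q_T^{NC})^2}},
\]
expand $\widehat\Gamma_t-\widehat Q_T^{NC}=\Delta_t-(\widehat Q_T^{NC}-Q(\pi))$ inside the denominator, and note that $\sum_t h_t^2\le T$, $\sum_t h_t^2\Delta_t$ is an MDS with predictable variance bounded by $U\sum_t h_t^2\le UT$, so $\sum_t h_t^2\Delta_t=O_p(\sqrt T)$. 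Combining these with the consistency rate above, the cross term and quadratic piece introduced by the expansion are both $o_p(s_T^2)$. The theorem therefore reduces to proving (i) $s_T^{-1}\sum_{t=1}^T\xi_t\xrightarrow{d} N(0,1)$ and (ii) $s_T^{-2}\sum_{t=1}^T h_t^2\Delta_t^2\xrightarrow{p}1$.

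Part (i) follows from the Hall--Heyde martingale CLT via a conditional Lyapunov condition. A direct calculation using the explicit form of $\widehat\Gamma_t$, the uniform bounds on $\mu$ and $\hat\mu_t$ in Assumption~\ref{assu:clt_condition}(c), the bounded conditional fourth moment of $Y_t(w)$ from Assumption~\ref{assu:clt_condition}(a), and the polynomial floor $e_t(x,w)\ge Ct^{-\alpha}$ from Assumption~\ref{assu:clt_condition}(b) gives $\bE[\Delta_t^4\mid H_{t-1}]\lesssim t^{3\alpha}$, so $\sum_t\bE[\xi_t^4\mid H_{t-1}]\lesssim\sum_t h_t^4 t^{3\alpha}\lesssim T^{1+3\alpha}=o(T^2)=o(s_T^4)$ whenever $\alpha<1/2$. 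Part (ii) is a Chebyshev-type law of large numbers for the MDS $Z_t := h_t^2\Delta_t^2-\bE[h_t^2\Delta_t^2\mid H_{t-1}]$: the same fourth-moment estimate yields $\sum_t\bE[Z_t^2]\lesssim T^{1+3\alpha}=o(s_T^4)$, so the realised $\sum_t h_t^2\Delta_t^2$ equals its predictable quadratic variation $s_T^2$ up to $o_p(s_T^2)$. Condition~\eqref{eq:e_convergence} together with the convergence of $\hat\mu_t$ in Assumption~\ref{assu:clt_condition}(c) ensure that the random predictable quantities $\Var(\widehat\Gamma_t\mid H_{t-1})$ and $v_t$ stabilise along sample paths, so that this equivalence between empirical and predictable variance is genuine rather than only average. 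Slutsky then assembles (i) and (ii) into the claimed $N(0,1)$ limit.

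The main obstacle is the fourth-moment accounting supporting both (i) and (ii). The \texttt{StableVar} rescaling $h_t=v_t^{-1/2}$ standardises each $\xi_t$ only in an integrated sense through $v_t$, not pointwise in $X_t$, so the pointwise inflation of $1/e_t(X_t,w)$ up to $t^\alpha$ propagates directly into the tails of $\xi_t$. Driving the Lyapunov remainder and $\sum_t\bE[Z_t^2]$ simultaneously to $o(s_T^4)$ consumes precisely the $\alpha<1/2$ budget of Assumption~\ref{assu:clt_condition}(b); any slower decay of the assignment probabilities breaks both estimates at once. Verifying these moment bounds carefully, and using condition~\eqref{eq:e_convergence} to control the interaction between the random weights $h_t$ and the residual $\widehat\Gamma_t-\widehat Q_T^{NC}$ when passing from the oracle predictable variance to the empirical one, is where the substantive work of the proof lies.
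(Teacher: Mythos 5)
Your overall architecture coincides with the paper's (martingale difference sequence from the weighted centered scores, martingale CLT, equivalence of empirical and predictable variance, Slutsky), and your consistency argument is essentially the paper's. There are, however, two genuine problems. The first and most serious is in your step (i): you normalize the martingale by the \emph{random} predictable variance $s_T^2=\sum_{t=1}^T\bE[\xi_t^2\mid H_{t-1}]$ and invoke the Hall--Heyde CLT with only a Lyapunov condition. The martingale CLT being used (Proposition~\ref{prop:martingale_clt}) requires the conditional variance of the \emph{normalized} array to converge in probability to a constant; since $s_T$ is $H_{T-1}$-measurable but not $H_{t-1}$-measurable, $\xi_t/s_T$ is not a martingale difference array and the variance condition cannot even be checked in your formulation. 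One must normalize by a deterministic sequence $c_T$ --- the paper takes $c_T=\sum_{t=1}^T\bE\big[h_t^2(\widehat{\Gamma}_t(X_t,\pi)-Q(\pi))^2\big]$ --- and prove $s_T^2/c_T\xrightarrow{p}1$. That is precisely where condition \eqref{eq:e_convergence} and the convergence $\hat\mu_t\to\mu_\infty$ from Assumption~\ref{assu:clt_condition}(c) do their work: the paper decomposes $\bE[(\widehat{\Gamma}_t(X_t,\pi)-Q(\pi))^2\mid H_{t-1}]$ into a dominant part $\bE[\sum_w C_1(X_t,w)\pi^2(X_t,w)/e_t(X_t,w)\mid H_{t-1}]+C_2$ that depends on the history only through $e_t$, plus a remainder of order $\sup_{x,w}|\hat\mu_t(x,w)-\mu_\infty(x,w)|$ times the variance proxy, and then uses \eqref{eq:e_convergence} to replace history-conditional expectations by unconditional ones uniformly. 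Your single sentence about the predictable quantities ``stabilising along sample paths'' is attached to step (ii), not to the CLT, and does not substitute for this argument; without it the normal limit asserted in (i) is unproved, and it is unclear where \eqref{eq:e_convergence} would otherwise enter your proof at all.

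The second problem is the moment accounting. You claim $\sum_t\bE[\xi_t^4\mid H_{t-1}]\lesssim T^{1+3\alpha}=o(T^2)$ ``whenever $\alpha<1/2$,'' but $T^{1+3\alpha}=o(T^2)$ forces $\alpha<1/3$, so as written your Lyapunov bound and your Chebyshev bound for step (ii) cover only part of the assumed range. The pairing of the crude estimate $\bE[\Delta_t^4\mid H_{t-1}]\lesssim t^{3\alpha}$ with $h_t\le1$ throws away the cancellation that makes the theorem true for all $\alpha<1/2$: since $\sum_w\pi^4(X_t,w)/e_t^3(X_t,w)\le C^{-2}t^{2\alpha}\sum_w\pi^2(X_t,w)/e_t(X_t,w)$, one has $h_t^4\,\bE[\Delta_t^4\mid H_{t-1}]\lesssim t^{2\alpha}\,h_t^2\,\bE[\sum_w\pi^2(X_t,w)/e_t(X_t,w)\mid H_{t-1}]= t^{2\alpha}$ by the definition of the \texttt{StableVar} weight, which gives $\sum_t\bE[\xi_t^4\mid H_{t-1}]=O(T^{2\alpha-1})\to0$ for every $\alpha<1/2$, exactly as in the paper. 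Both the Lyapunov step and the law of large numbers for $h_t^2\Delta_t^2$ need this sharper bookkeeping; with it, and with the deterministic-normalization repair above, your outline becomes the paper's proof.
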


Condition \eqref{eq:e_convergence} says that the inverse of assignment probabilities, which essentially represents the variance of estimating the value of each arm, is asymptotically equivalent to its expectation. To understand it, consider the thought experiment of running multiple simulations of a bandit setting in which arms are identical. If the bandit algorithm is ``too greedy'', spurious fluctuations in arm value estimates at the beginning can decide the algorithm's long-run behavior. In some of these simulations, $e_t(x, w)$ may increase to one while in others it may decay to zero. Condition \eqref{eq:e_convergence}
requires that the long-run behavior of the assignment algorithm is  stable. For many bandit algorithms, 
it will hold if, for every point of the context space, the signal from the best arm is strong enough to have it discovered eventually.
For some tractable contextual bandit algorithms, for example \cite{krishnamurthy2020tractable},
it holds without qualification.




\section{Adaptive contextual weighting}
\label{sec:contextual_weighting}

We can control variance better by using context-dependent weights $h_t(x)$. Thinking of the DR estimator $\widehat Q_T^{DR}(\pi)$ as a double sum \smash{$T^{-1}\sum_{t=1}^T \sum_{x \in \X} \mathbb{I}\{X_t=x\} \widehat \Gamma_t(x,\pi)$}, it is natural to stabilize terms in the inner sum rather than the outer, as proposed in Section~\ref{sec:noncontextual_weighting}. 

Figure~\ref{fig:arm0} demonstrates how, in Example~\ref{example},
the contextual variance proxy from \eqref{eq:gamma_condvar_x} varies with context. Recall that the target policy always assigns treatment $w_0$: $\pi(\cdot, w_0)=1$. For context $x_0$, treatment $w_0$ is optimal, so $e_t(x_0, w_0)$ increases over time as the experimenter learns to assign it. Thus,  the variance proxy $\sum_{w}\pi^2(x_0,w)/e_t(x_0,w)$ decreases. For other contexts, $w_0$ is suboptimal 
and the variance proxy increases, which
drives the increase in the non-contextual variance proxy (dashed black line in Figure~\ref{fig:arm0}) used in \eqref{eq:hnc_infeasible}.

\begin{figure}[H]
    \centering
    \includegraphics[width=.8\linewidth]{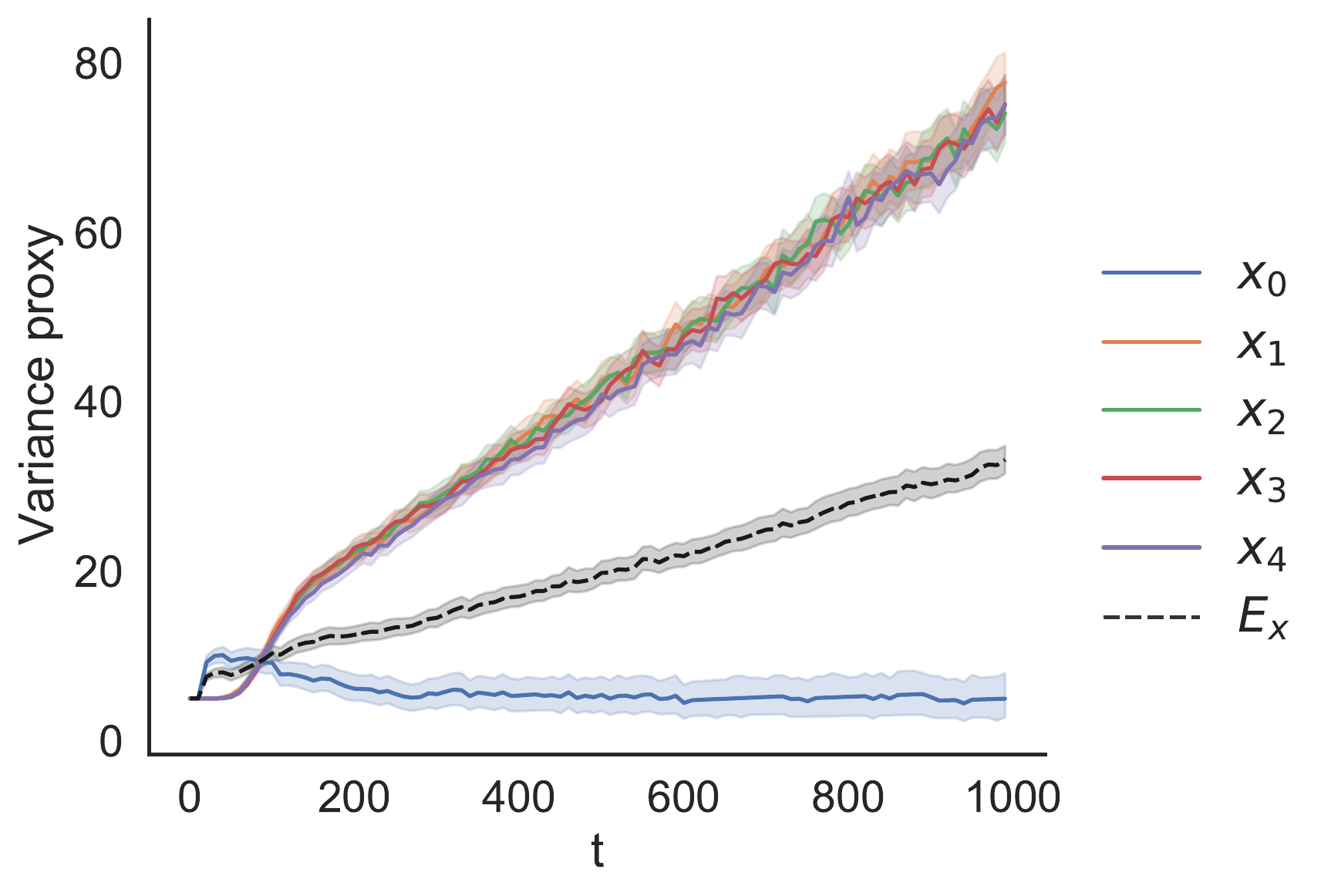}
    \caption{In Example~\ref{example}, the contextual variance proxy $\sum_{w}\pi^2(X_t,w)/e_t(X_t,w)$ from Proposition~\ref{prop:aipw_var} varies significantly by context. We plot
    the median and bands indicating $2.5$th and $97.5$th percentiles over $1000$ simulations.}
    \label{fig:arm0}
\end{figure}

We thus propose \emph{contextual} weighting  to observe such variation of score variance in context:
\begin{equation}
\label{eq:qc}
    \widehat{Q}_T^C(\pi) = \sum_{t=1}^T \frac{h_t(X_t)}{\sum_{s=1}^T h_s(X_t)}\widehat{\Gamma}_t(X_t, \pi).
\end{equation}
Here, adaptive weight $h_t(\cdot)$  is a function of context $x\in\X$. The normalized term \smash{$h_t(X_t)/\sum_{s=1}^T h_s(X_t)$} weights the score \smash{$\widehat{\Gamma}_t$.} We define the contextual adaptive weight $h_t(x)$ as a function of the context-specific variance proxy from \eqref{eq:gamma_condvar_x}, 
\begin{equation}
    \label{eq:hc}
    h_t(x) = \phi\Big(\sum_{w}\frac{\pi^2(x,w)}{e_t(x,w)}\Big), \quad x\in \X,
\end{equation}
where, as in Section~\ref{sec:noncontextual_weighting}, 
$\phi(v)=\sqrt{1/v}$
for \texttt{StableVar} weights and $\phi(v)=1/v$ for \texttt{MinVar} weights. 




Like its counterpart with non-contextual weights, the estimator  \eqref{eq:qc} admits a central limit theorem when contextual \texttt{StableVar} weights are used. However, our result requires more fragile assumptions on the data generating process and assignment mechanism. Again, we focus on evaluating a fixed-arm policy that always assigns a pre-specified treatment $w$.

\begin{theorem}
\label{thm:contextual} 
Suppose Assumptions~\ref{assu:data} and~\ref{assu:clt_condition} hold and the context space $\X$ is  discrete, estimator $\widehat{Q}^C_T(\pi)$ with \StableVar weights is consistent for the true policy value $Q(\pi)$.
Further, suppose that  assignment probabilities $e_t$ satisfy that,
\begin{equation}
    \label{eq:e_pair_convergence}
    \begin{aligned}
   \sup_{x,w} \bigg|\frac{\bE[e_t^{-1}(x,w)]}{e_t^{-1}(x,w)}-1 \bigg|\rightarrow 0, \mbox{ and }\sup_{x,x',w,w'} \bigg|\frac{\bE[e_t^{-1}(x,w) e_t^{-1}(x',w')]}{e_t^{-1}(x,w) e_t^{-1}(x',w')} -1 \bigg|\rightarrow 0, \quad \mbox{a.s. and in }L_1.
    \end{aligned}
\end{equation}
Then contextual \texttt{StableVar} weighting \eqref{eq:hc} yields an asymptotically normal studentized statistic:
\begin{equation}
\label{eq:clt_contextual}
\begin{aligned}
       &\frac{\widehat{Q}_T^{C}(\pi)-Q(\pi)}{\big(\widehat{V}^{C}_T(\pi)\big)^{1/2}}\xrightarrow{d}\N(0,1),\mbox{ where } \\
       \widehat{V}_T^{C}(\pi) =  \sum_{t=1}^T \Big(&\frac{h_t(X_t)}{\sum_{s=1}^T h_s(X_t)}\widehat{\Gamma}_t(X_t, \pi) - \sum_{s=1}^T\frac{h_t(X_s)h_s(X_s)}{\big(\sum_{s'=1}^T h_{s'}(X_s)\big)^2} \widehat{\Gamma}_s(X_s, \pi) \Big)^2 . 
\end{aligned}
\end{equation}
\end{theorem}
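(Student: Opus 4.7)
The plan is to reduce the statement to a triangular-array martingale CLT, exploiting the discreteness of $\X$ to decompose the estimator error context by context, and then using condition \eqref{eq:e_pair_convergence} to replace the non-adapted random normalizer $W_T(x) := \sum_{s=1}^T h_s(x)$ (which depends on times $s>t$ and so breaks adaptedness of the weights) by a deterministic surrogate $\bar W_T(x) := \sum_{s=1}^T \bE[h_s(x)]$. Throughout I set $\mathcal{F}_t := \sigma(H_t)$, $p(x):=\Pp(X_t=x)$, and write the target arm as $w^\ast$ so that for a fixed-arm policy with \texttt{StableVar} weights, $h_s(x) = \sqrt{e_s(x,w^\ast)}$.

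The first step is the decomposition
\[
\widehat Q_T^C(\pi) - Q(\pi) = \sum_{x\in\X}\frac{1}{W_T(x)}\sum_{t=1}^T h_t(x)\bigl[\mathbb{I}\{X_t=x\}\widehat\Gamma_t(x,\pi) - p(x) Q(x,\pi)\bigr].
\]
Proposition~\ref{prop:aipw_unbiased} and the independence of $X_t$ from $H_{t-1}$ guarantee that, once summed over $x$ against the \emph{deterministic} weights $h_t(x)/\bar W_T(x)$, the bracketed quantity becomes an $\mathcal{F}_t$-martingale increment. The marginal part of \eqref{eq:e_pair_convergence}, in both a.s.\ and $L_1$ modes, gives $\sup_x|W_T(x)/\bar W_T(x)-1|\to 0$, and combined with uniform boundedness of $\widehat\Gamma_t$ this shows the replacement $W_T\leadsto \bar W_T$ incurs error of order $o_P(\sqrt{\widehat V_T^C(\pi)})$. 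A byproduct is consistency $\widehat Q_T^C(\pi)\to Q(\pi)$, which follows from a Chebyshev argument on the same martingale sum.

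The core of the proof is then a martingale CLT (e.g.\ of Hall--Heyde type) applied to $S_T := \sum_{t=1}^T D_t$ with
\[
D_t := \sum_{x\in\X}\frac{h_t(x)}{\bar W_T(x)}\bigl[\mathbb{I}\{X_t=x\}\widehat\Gamma_t(x,\pi) - p(x) Q(x,\pi)\bigr].
\]
For the conditional quadratic variation, disjointness of the events $\{X_t=x\}$ lets $\bE[D_t^2\mid\mathcal{F}_{t-1}]$ split over $x$; Proposition~\ref{prop:aipw_var} with $\phi(v)=v^{-1/2}$ then makes $h_t(x)^2\,\mathrm{Var}(\widehat\Gamma_t(x,\pi)\mid\mathcal{F}_{t-1},X_t{=}x)$ bounded above and below by positive constants, so $V_T:=\sum_t\bE[D_t^2\mid\mathcal{F}_{t-1}]$ reduces to a sum of deterministic terms with residuals controlled by Assumption~\ref{assu:clt_condition}(c). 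For the conditional Lindeberg condition, Assumptions~\ref{assu:clt_condition}(a,b) give $\bE[\widehat\Gamma_t^4\mid\mathcal{F}_{t-1},X_t{=}x] = O(e_t(x,w^\ast)^{-3})$, and since $\alpha<1/2$ this forces $\sum_t\bE[D_t^4\mid\mathcal{F}_{t-1}]/V_T^2\to 0$, which is more than enough.

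The last step is $\widehat V_T^C(\pi)/V_T\to 1$ in probability. Expanding the square in \eqref{eq:clt_contextual} and using the identity $\sum_t \tfrac{h_t(X_s) h_s(X_s)}{(\sum_{s'}h_{s'}(X_s))^2} = \tfrac{h_s(X_s)}{\sum_{s'}h_{s'}(X_s)}$, which says the inner ``centerings'' sum over $t$ to the $s$-th term of $\widehat Q_T^C(\pi)$, one finds $\widehat V_T^C(\pi)$ decomposes into a raw quadratic piece plus a cross piece of order $(\widehat Q_T^C(\pi)-Q(\pi))^2$ that vanishes by consistency. The raw piece matches $\sum_t D_t^2$ after the normalizer swap, and $\sum_t D_t^2 - V_T\to 0$ by a standard martingale LLN using the same fourth-moment bound. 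The hardest part of the proof is precisely this normalizer swap: one must propagate the ratio convergence of \eqref{eq:e_pair_convergence} through the square root defining the \texttt{StableVar} weights and through the sums $\sum_s h_s$, \emph{uniformly} in $x$ and in pairs $(x,x')$, under the polynomial propensity decay permitted by Assumption~\ref{assu:clt_condition}(b) -- which is exactly why \eqref{eq:e_pair_convergence} is stated in both pointwise and pairwise form and in both a.s.\ and $L_1$ modes, the pairwise $L_1$ form being what lets one replace $W_T(x)W_T(x')$ by $\bar W_T(x)\bar W_T(x')$ inside the variance expansion without leaving covariance residuals that survive the limit.
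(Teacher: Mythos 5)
Your overall architecture matches the paper's: decompose context by context, swap the non-adapted normalizer $\sum_{s}h_s(x)$ for its deterministic surrogate $\sum_s\bE[h_s(x)]$ via \eqref{eq:e_pair_convergence}, apply a martingale CLT to the aggregated increments, and finally show $\widehat V_T^C/V_T\to1$. However, there is a genuine gap in your treatment of the conditional quadratic variation. You claim that disjointness of the events $\{X_t=x\}$ lets $\bE[D_t^2\mid\mathcal F_{t-1}]$ split over $x$. It does not: for $x\neq y$ the product $[\one\{X_t=x\}\widehat\Gamma_t(x,\pi)-p(x)Q(x,\pi)][\one\{X_t=y\}\widehat\Gamma_t(y,\pi)-p(y)Q(y,\pi)]$ has conditional expectation $-p(x)p(y)Q(x,\pi)Q(y,\pi)$ by unbiasedness, so $\bE[D_t^2\mid\mathcal F_{t-1}]$ contains cross-context terms $-\sum_{x\neq y}p(x)p(y)Q(x,\pi)Q(y,\pi)\,h_t(x)h_t(y)/(\bar W_T(x)\bar W_T(y))$ of the \emph{same order} as the diagonal terms. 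These are random (the $h_t$ are $\mathcal F_{t-1}$-measurable), and establishing that $V_T$ normalized by its mean converges to one requires (i) the pairwise part of \eqref{eq:e_pair_convergence} to make $\sum_t h_t(x)h_t(y)$ concentrate around its expectation, and (ii) a separate argument that the difference ``diagonal minus cross'' remains bounded below by a constant multiple of each piece, so that convergence of each piece transfers to their difference (the paper does this with a Cauchy--Schwarz lower bound and a dedicated lemma). Your sketch uses the pairwise condition only in the last step (consistency of $\widehat V_T^C$), but it is already indispensable here, in verifying the CLT's variance condition.

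Two smaller points. First, your justification of the normalizer swap invokes ``uniform boundedness of $\widehat\Gamma_t$,'' which is false: the scores contain $\one\{W_t=w\}/e_t(X_t,w)$ and can grow like $t^{\alpha}$. The swap still goes through, but the correct argument bounds the per-context error by $\big|\sum_t\eta_{T,t}(x)\big|\cdot\big|1-\bar W_T(x)/W_T(x)\big|$ and uses that the studentized context-specific martingale sum is $O_p(1)$ (being asymptotically normal), not boundedness of the scores. Second, ``bounded above and below by positive constants'' for $h_t(x)^2\Var{}{\widehat\Gamma_t(x,\pi)\mid \mathcal F_{t-1},X_t=x}$ from Proposition~\ref{prop:aipw_var} is not by itself enough to conclude variance convergence; you need the marginal ratio condition in \eqref{eq:e_pair_convergence} to make these products asymptotically deterministic, together with Assumption~\ref{assu:clt_condition}(c) for the $\hat\mu_t\to\mu_\infty$ residuals. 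With these repairs your outline would align with the paper's proof.
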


\noindent We defer the proof to Appendix~\ref{sec:proof_contextual_clt}. Condition \eqref{eq:e_pair_convergence} requires the similar stability from assignment probabilities as  Condition \eqref{eq:e_convergence}, while this time it in addition requires a more strict assumption---a pair of inverse of $e_t$ needs to be asymptotically equivalent to its expectation.

\section{Experimental results}
\label{sec:simulation}

This section provides an empirical investigation of different estimators for off-policy evaluation in contextual bandits.\footnote{Reproduction code can be found at \url{https://github.com/gsbDBI/contextual_bandits_evaluation}.} We consider four estimators: (i) the  DM estimator 
\begin{equation}
    \widehat{Q}^{DM}_T(\pi)=T^{-1}\sum_{t=1}^T\sum_{w}\widehat{\mu}_{T}(X_t,w)\pi(X_t, w),
\end{equation}
 (ii) the DR estimator $\smash{T^{-1} \sum_{t=1}^T \widehat{\Gamma}_t}(X_t, \pi)$, (iii) the adaptively-weighted estimator with non-contextual \texttt{MinVar} and \texttt{StableVar} weights as in Section~\ref{sec:noncontextual_weighting}, and  (iv) the adaptively-weighted estimator with contextual \texttt{MinVar} and \texttt{StableVar} weights  as in Section~\ref{sec:contextual_weighting}.
 In all estimators, $\hat \mu_t(\cdot,w)$
 is a linear model fit via least squares on the observations $\{(X_s,Y_s) : W_s=w,\ s < t\}$.

We consider two setups. In Section~\ref{sec:synth}, we have a synthetic data-generating process (DGP). In Section~\ref{sec:class}, the DGP is adapted from multi-class classification datasets on OpenML \citep{OpenML2013}. In both cases, we show that adaptive weighting reduces variance and MSE significantly and that  contextual weighting outperforms non-contextual weighting.

\paragraph{Target policies.} We estimate the contrast
$\Delta = Q(\pi_1)-Q(\pi_2)$ between the best contextual policy $\pi_1$ 
and the best non-contextual policy $\pi_2$. The former
 assigns individuals with context $x$ to the treatment $w^*(x)=\arg\max_w \bE[Y_t(w) \mid X=x]$; the latter assigns all individuals to $w^*=\arg\max_w \bE[Y_t(w)]$. To construct this estimator $\widehat{\Delta}_T$ and corresponding estimated variance $\widehat{V}_T$, we re-visit the discussion starting from Section~\ref{sec:dr}, defining the doubly robust score for the contrast as the difference in doubly robust scores for the two policies. We use variants of the adaptive weights discussed in Sections~\ref{sec:noncontextual_weighting} \&~\ref{sec:contextual_weighting} with the variance proxy $\sum_w (\pi_1(x,w)-\pi_2(x,w))^2/e_t(x,w)$. 

\paragraph{Metrics.} In Section~\ref{sec:synth}, we evaluate the performance of each estimator in terms of its root mean squared error (RMSE), bias, and the radius and coverage of $95\%$ confidence intervals based on the approximate normality of
studentized estimates, as in Theorems~\ref{thm:non_contextual} and~\ref{thm:contextual}. In Section~\ref{sec:class}, we focus on RMSE, bias, and standard errors.


\subsection{Synthetic datasets}
\label{sec:synth}


\begin{figure}
    \centering
+    \includegraphics[width=\linewidth]{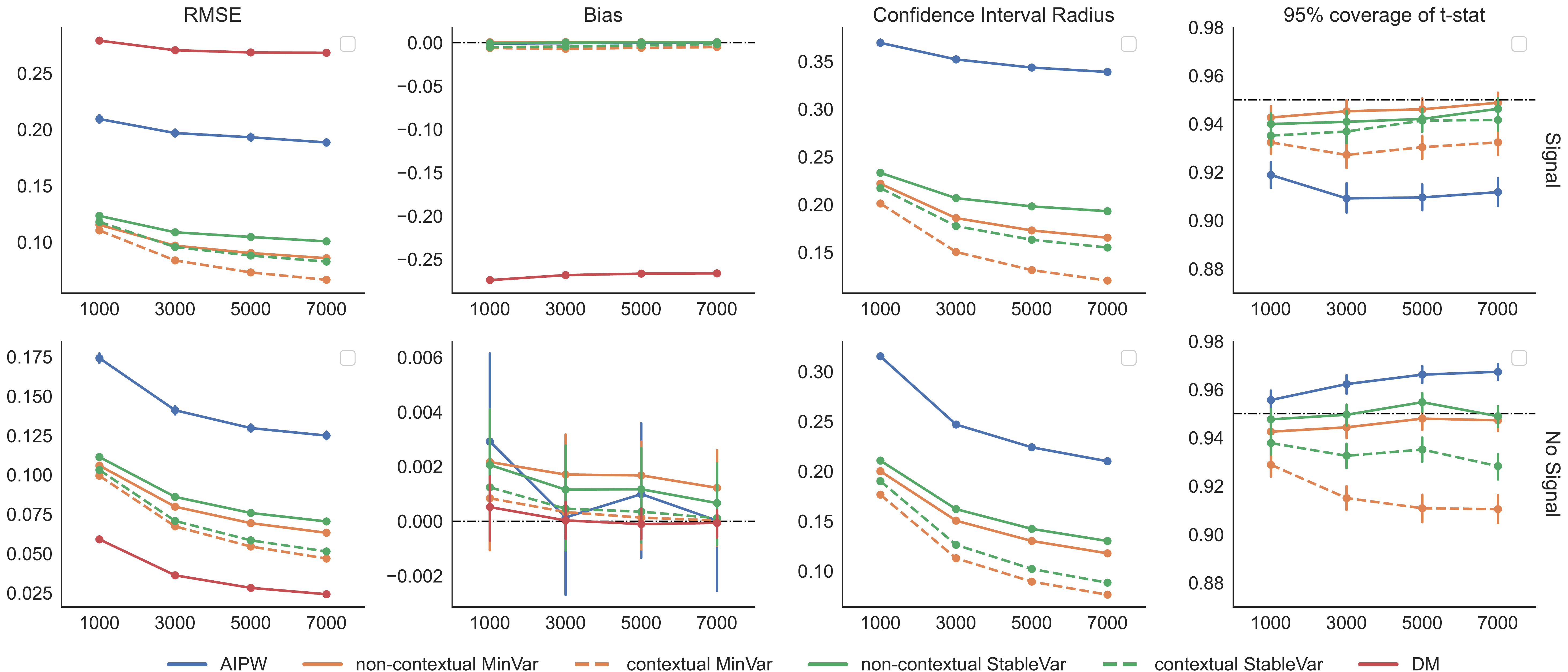}
    \caption{Evolution of estimates of contrast value between best contextual policy and best non-contextual policy with synthetic data in Section \ref{sec:synth}. X-axis is the sample size. Error bars indicate $95\%$ confidence intervals based on averaging over $10^5$ replications of the simulation.} 
    \label{fig:region}
\end{figure}
\begin{figure}
    \centering
    \includegraphics[width=\linewidth]{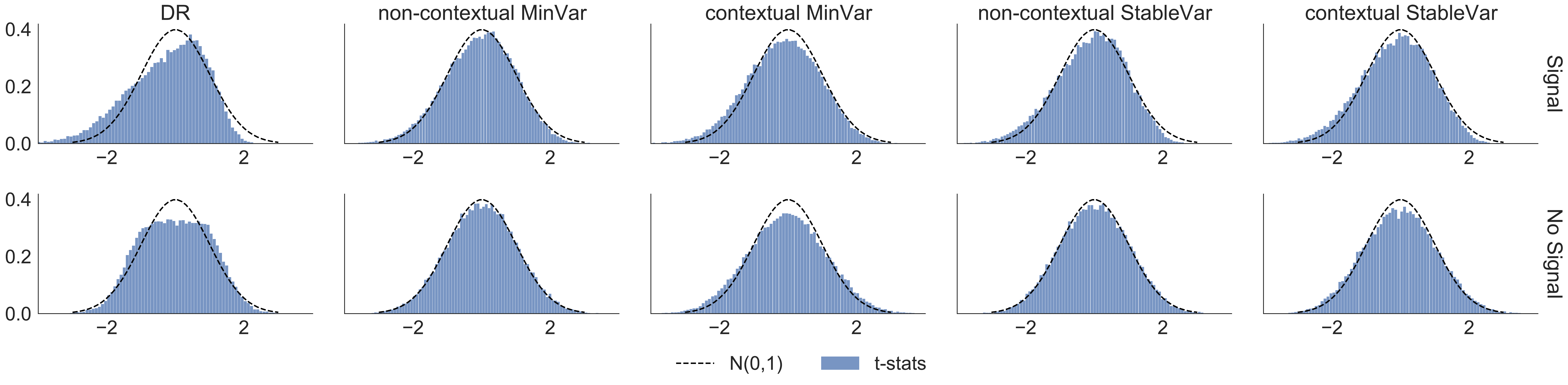}
    \caption{Histogram of studentized statistics of the form $(\widehat{\Delta}_T - \Delta) / \widehat{V}_T^{1/2}$ for  estimating the contrast value between best contextual policy and best non-contextual policy at $T=7000$ with synthetic data in Section \ref{sec:synth}. Numbers aggregated over $10^5$ replications.} 
    \label{fig:region_hist}
\end{figure}

\paragraph{Data-generating process} We consider a four-arm contextual bandit setting. Each time step, the algorithm observes a vector of context $X_t \sim \N(0, I_3)$ drawn independently. We consider two outcome models. In the \emph{no-signal} setting, $\mu(x,w)=0$ for all contexts and treatments. In the \emph{signal} setting, we divide the context space into four regions, depending on whether the first two coordinates $X_{t, 1}$ and $X_{t, 2}$ are above or below $0.5$. In each region, one of the treatments has  $\mu(x,w)=0.7$, where all others have $\mu(x,w)=0$, so each region is associated with a distinct treatment with positive mean outcome. In both settings, we observe $Y_t = \mu(X_t,W_t) + \varepsilon_t$ where $\varepsilon_t \sim \N(0,1)$. 

\paragraph{Assignment probabilities}  A modified Thompson sampling algorithm interacts with the environment, collecting data in batches  of size $100$. During the first batch, the agent assigns arms uniformly at random. For each subsequent batch, the agent first discretizes context space using PolicyTree \citep{sverdrup2020policytree} based on previous observations. Next, for each subspace, the algorithm computes tentative Thompson sampling assignment probabilities. Finally, a lower bound is imposed on these probabilities, so that final assignment probabilities are guaranteed to satisfy $e_t\geq t^{-0.8}/4$.

\begin{itemize}
    \item \textbf{RMSE.} Figure \ref{fig:region} shows that in both scenarios, adaptive weighting reduces RMSE significantly as compared to DR, and  contextual weighting has a larger improvement than non-contextual weighting. When there is signal, DM is outperformed by all other methods as a result of substantial bias. When there is no signal, it performs best, as the systematic biases in evaluating each of the two policies---which generate identical rewards in the case of no signal---cancel out.
    
    \item \textbf{Inference.} Figures \ref{fig:region} shows that
    in the signal scenario, adaptive weighting has nominal coverage with improved power relative to DR; in the no-signal scenario, only non-contextual weighting has nominal coverage. Such inferential properties are also verified by the t-statistics shown in Figure \ref{fig:region_hist}. This is in line with our theorems---Theorem ~\ref{thm:contextual} for contextual weighting requires higher-order stability of assignment probabilities as compared to Theorem~\ref{thm:non_contextual}. DM tends to be much more biased than variable, so confidence intervals based on the assumption of asymptotic normality with negligible bias and variance of approximately $\widehat{V}^{DM}_T(\pi) = T^{-2}\sum_{t=1}^T\big(\widehat{Q}^{DM}_T(\pi)-\sum_{w}\widehat{\mu}_{T}(X_t,w)\pi(X_t,w)\big)^2$ do not yield much coverage in our simulations; thus, we omit it in our coverage plots.
\end{itemize}

\subsection{Sequential classification with bandit feedback}
\label{sec:class}

\begin{table}[t]
    \centering
    \begin{tabular}{|cc|cc|cc|}
     \hline
      Classes   & Count &Features   & Count &Observations  & Count \\
     \hline
     $2$   & 66  &  $\leq 10$   & 46 & $\leq 1$k   & 44\\
     $>2$, $\leq 8$ & 16 &$>10$, $\leq 50$ & 36 & $>1$k, $\leq 10$k & 32\\
     $>8$ & 4 & $>50$ & 4 &   $>10$k & 10\\
  \hline
    \end{tabular}
    \hfill
\caption{Characteristics of $86$ public datasets used for sequential classification in Section~\ref{sec:public_datasets}.}
    \label{tab:dataset}
\end{table}

\begin{figure}[tb]
    \centering
    \includegraphics[width=\linewidth]{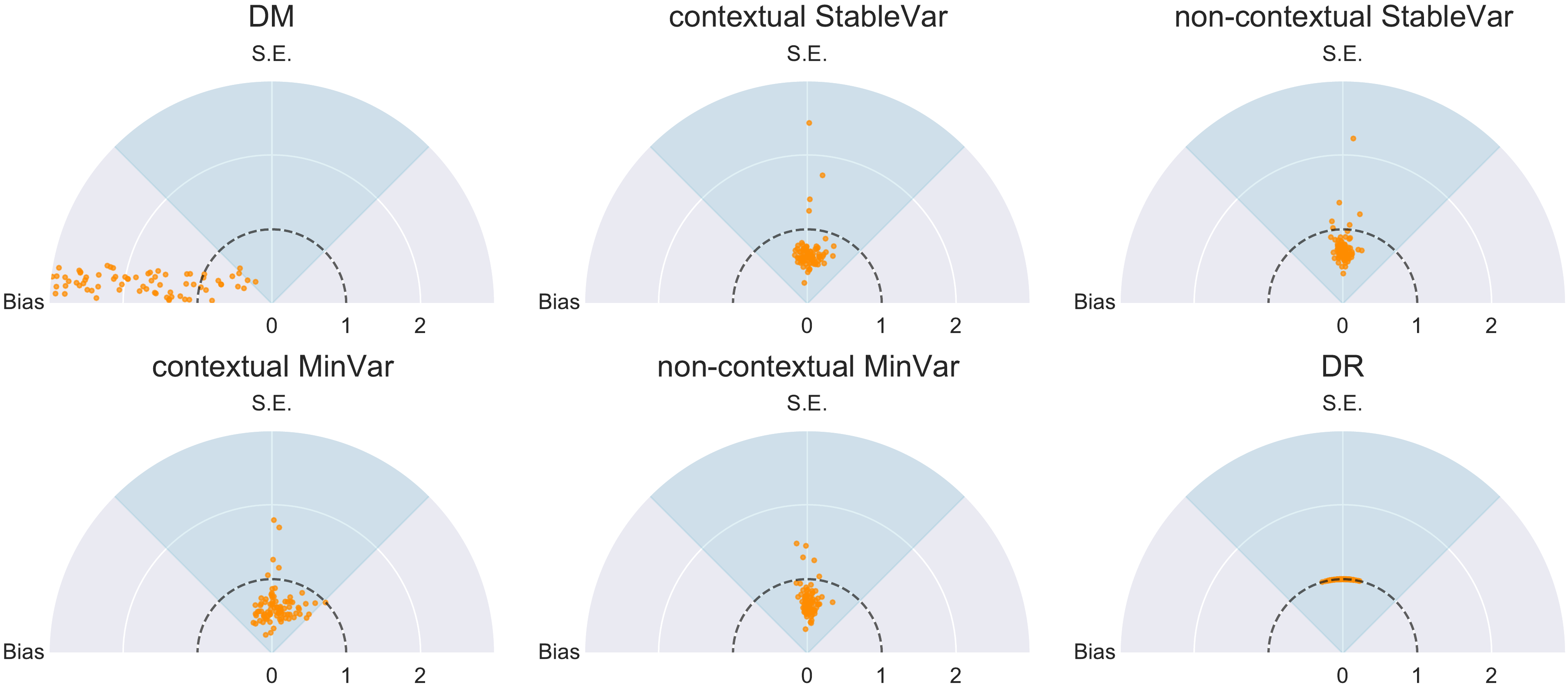}
    \caption{Comparison among different estimators when evaluating the contrast between optimal contextual policy and best fixed-arm policy across $86$ datasets. Each point corresponds to one dataset, and its cartesian coordinates are bias and standard error normalized by RMSE of DR estimate for the same dataset. Note that the distance of each point from zero is its relative RMSE.}
    \label{fig:dataset}
\end{figure}

\begin{figure}
    \centering
    \begin{subfigure}[t]{\textwidth}
    \centering
    \includegraphics[width=\linewidth]{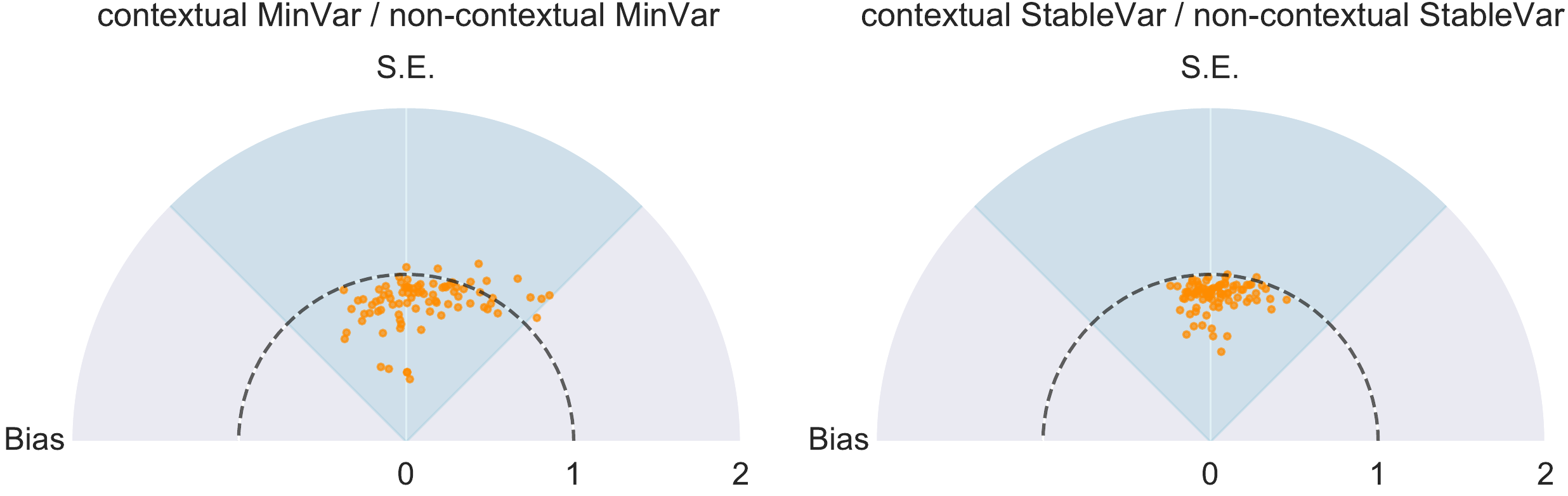}
    \caption{Contextual weighting \emph{v.s.} non-contextual weighting. Each point corresponds to one dataset, and its cartesian coordinates are bias and S.E. of the contextual weighting normalized by the RMSE of its non-contextual counterpart. }
    \label{fig:dataset_compare}
    \end{subfigure}
    
    \vspace{0.5cm}
    
    \begin{subfigure}[t]{\textwidth}
    \includegraphics[width=\linewidth]{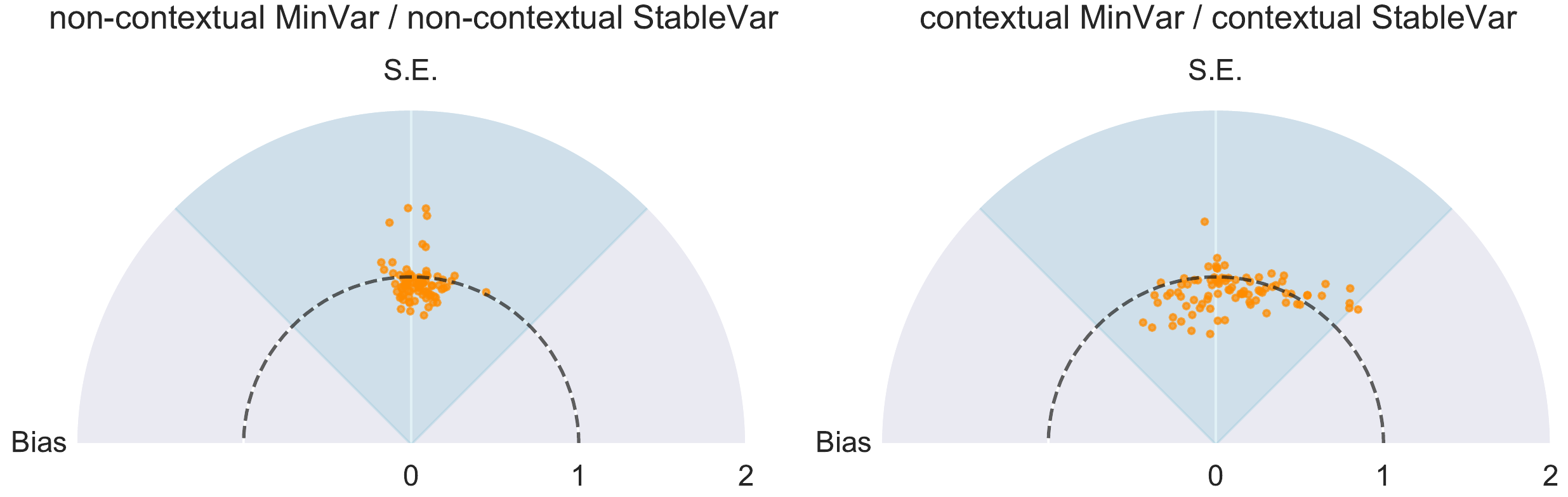}
    \caption{\texttt{MinVar} \emph{v.s.} \texttt{StableVar}. Each point corresponds to one dataset, and its cartesian coordinates are bias and standard error of the \texttt{MinVar} weighting normalized by the RMSE of the \texttt{StableVar} counterpart. }
    \label{fig:dataset_compare_2}
    \end{subfigure}
    \caption{Pairwise comparison when evaluating the contrast between best contextual policy and best non-contextual policy across $86$ datasets.}
\end{figure}

\paragraph{Data generating process} In this section we follow prior work \citep{dudik2011doubly,dimakopoulou2017estimation,su2020doubly,wang2017optimal} and use multi-class classification datasets to generate bandit problems. The idea is to transform a sequential classification problem into a bandit problem, by taking features as contexts and classes as arms. At each time step, the bandit algorithm is shown a vector of features sampled uniformly from the dataset and must choose a corresponding class. If it manages to label the observation correctly, the bandit algorithm receives reward $Y_t = 1 + \varepsilon_t$ for $\varepsilon_t \sim \N(0,1)$; otherwise, $Y_t = \varepsilon_t$. Here, we consider $86$ public datasets from OpenML \citep{OpenML2013} (see Appendix~\ref{sec:dataset} for the list of datasets), which vary in number of classes, number of features, and number of observations, as summarized in Table~\ref{tab:dataset}.

\paragraph{Data collection} A linear Thompson sampling agent \citep{agrawal2013thompson} is used to collect data with assignment probability floor $t^{-0.5}/K$, where $K$ is number of arms/classes. 
\label{sec:public_datasets}

\paragraph{Performance} For each dataset, we calculate bias and standard error (S.E.) of all estimators based on $100$ simulations. 


\begin{itemize}
    \item \textbf{Adaptive weighting \emph{v.s.} DR.} Figure~\ref{fig:dataset} uses DR as a baseline and shows its comparison with adaptive weighting and DM on various datasets. Each subplot  corresponds to an estimation method, wherein each point denotes a dataset, and its $(x,y)$ coordinates are bias and standard error normalized by RMSE of DR estimate on the same dataset. It can be seen that all adaptive weighting schemes reduce RMSE as compared to DR in most datasets. In many, RMSE is reduced by half. We also note that across various datasets, standard error accounts for most of the  RMSE with adaptive weighting and DR estimates, while bias is the main factor for DM. 
    \item \textbf{Non-contextual  \emph{v.s.} contextual.} Figure~\ref{fig:dataset_compare} proceeds to investigate within adaptive weighting, how contextual weighting performs as compared to its non-contextual counterpart. In the left panel, each point denotes a dataset, and its $(x,y)$ coordinates are bias and standard error of contextual \texttt{MinVar} normalized by RMSE of non-contextual \texttt{MinVar} on the same dataset. The same visualization approach is applied to contextual and non-contextual \texttt{StableVar}  in the right panel. We see a clear improvement in RMSE of  contextual weighting over  the non-contextual. 
    \item \textbf{\texttt{MinVar} \emph{v.s.} \texttt{StableVar}.} We compare the two adaptive weighting choices \texttt{MinVar} and \texttt{StableVar} in Figure~\ref{fig:dataset_compare_2}. As before, each point corresponds to a dataset. The $(x,y)$ coordinates are bias and standard error of \texttt{MinVar} normalized by its \texttt{StableVar} counterpart in forms of both non-contextual weighting (left panel) and contextual weighting (right panel). We can see that \texttt{MinVar} typically reduces RMSE relative to \texttt{StableVar}. 
\end{itemize}

\section{Discussion}
Off-policy evaluation on adaptive data can be challenging, as observations are dependent and overlap between the target policy and data-collection policy often deteriorates over the course of data collection.
Many estimators suffer from bias or high variance. In this paper, we propose a set of heuristics that address the issue by modifying the DR estimator through averaging doubly robust scores using adaptive, non-uniform weights. When chosen appropriately, these weights are able to reduce variance, at the cost of a small finite-sample bias. We prove that our estimators have consistency and asymptotic normality under certain conditions. Empirically, our estimators outperform existing alternatives with lower RMSE and expected coverage.

A number of important research directions remain. For example, it would be of interest to know if some of our more stringent sufficient conditions for asymptotic normality can be weakened, allowing for inference based on approximate normality in more general settings. Another interesting avenue of research would be to consider whether adaptive weights as presented here could be beneficial for learning optimal policies using logged data from bandit algorithms.

\section*{Acknowledgements}
The authors would like to thank Stefan Wager, Sanath Kumar Krishnamurthy, and Kristine Koutout for  helpful discussions. The authors are also grateful for the generous support provided by Golub Capital Social Impact Lab. S.A. acknowledges  support from the Office of Naval Research grant N00014-19-1-2468 and Schmidt Futures. R.Z. acknowledges support from the Total Graduate Fellowship and PayPal Graduate Fellowship. D.A.H. acknowledges support from the Stanford Institute for Human-Centered Artificial Intelligence.

\bibliographystyle{apalike}
\bibliography{reference}

\clearpage
\appendix

\section{Additional proofs}
With some abuse of notation, starting from this section, we will write
\begin{equation}
\label{eq:dr_single}
\widehat{\Gamma}_t(X_t, w) 
:=  \widehat{\mu}_t(X_t, w) + \frac{\one\{W_t=w\}}{e_t(X_t,w)}(Y_t-\widehat{\mu}_t(X_t, w)),
\end{equation}
so that $\widehat{\Gamma}_t(X_t, \pi) \equiv \sum_{w} \pi(X_t, w)\widehat{\Gamma}_t(X_t, w)$. 

\subsection{Proof of Proposition~\ref{prop:aipw_unbiased}}
\label{sec:proof_aipw_unbiased}
Recall that $\widehat{\mu}(X_t, w)$ and $e_t(X_t, w)$ are functions of past data $H_{t-1}$ and the current context $X_t$. Therefore, the rightmost term of \eqref{eq:dr_single} has conditional expectation
\begin{equation*}
    \begin{aligned}
        &
        \bE\Big[
        \frac{\one\{W_t=w\}}{e_t(X_t,w)}(Y_t-\widehat{\mu}_t(X_t, w))
        \Big|H_{t-1}, X_t\Big] \\
         \stackrel{\mbox{(i)}}{=}~&  \bE\Big[
            \frac
                {(Y_t(w)-\widehat{\mu}_t(X_t, w))}
                {e_t(X_t,w)}
            \Big|H_{t-1}, X_t, W_{t}=w \Big]e_t(X_t,w) \\
       =~ &\bE\big[Y_t(w)|H_{t-1}, X_t, W_{t}=w\big]
        -\widehat{\mu}_t(X_t, w) \\
        \stackrel{\mbox{(ii)}}{=}~ &\condE{}{Y_t(w)}{X_t}
        -\widehat{\mu}_t(X_t, w) \\
        =~&\mu(X_t, w) -\widehat{\mu}_t(X_t, w),
    \end{aligned}
\end{equation*}

\noindent where in (i) we used the definition of $e_t(X_t, w)$ and in (ii) we used Assumption~\ref{assu:data} to drop the conditioning on $H_{t-1}$ and $W_t$. This implies that
\begin{equation*}
    \begin{aligned}
    \bE\big[\widehat{\Gamma}_t(X_t, \pi)\big|H_{t-1}, X_t\big]
    = \sum_{w} \pi(X_t, w) \bE\big[\widehat{\Gamma}_t(X_t, w) \big|H_{t-1}, X_t\big] = \sum_{w} \pi(X_t, w)\mu(X_t, w).
    \end{aligned}
\end{equation*}
Take expectation with respect to context, we have
\begin{equation*}
    \bE\big[\widehat{\Gamma}_t(X_t, \pi)\big|H_{t-1}\big] = \bE\big[\sum_{w} \pi(X_t, w)\mu(X_t, w)\big] = Q(\pi).
\end{equation*}


\subsection{Proof of Proposition~\ref{prop:aipw_var}}
\label{sec:proof_aipw_var}



\noindent We first prove the lower bound of the conditional variances in \eqref{eq:gamma_condvar_x}. Expanding the definition of $\widehat{\Gamma}_t$ from \eqref{eq:aipw} and substituting \eqref{eq:dr_single}, we have that 

\begin{equation}    
    \label{eq:cond_var_expand}
    \begin{split}
       \Var{}{\widehat{\Gamma}_t(X_t, \pi)|H_{t-1}, X_t}=&    \sum_{w}\pi^2(X_t, w)\Var{}{\widehat{\Gamma}_t(X_t, w) |H_{t-1}, X_t}\\
     & + \sum_{w_1\neq w_2}  \pi(X_t, w_1)\pi(X_t, w_2)\Cov{\widehat{\Gamma}_t(w_1) }{\widehat{\Gamma}_t(w_2) |H_{t-1}, X_t}
    \end{split}
\end{equation}

\noindent Let us analyze these sums. A routine calculation reveals that the conditional variances decompose as
\begin{equation}
\label{eq:var_decomp}
\begin{aligned}
    \Var{}{\widehat{\Gamma}_t(X_t, w) |H_{t-1}, X_t} = &
    \frac{\Var{}{Y_t(w) | X_t}}{e_t(X_t, w)}
    +  \Big(\frac{1}{e_t(X_t, w)} - 1\Big)
    \left(\widehat{\mu}_t(X_t, w) - \mu(X_t, w) \right)^2,
\end{aligned}
\end{equation}
while the conditional covariances reduce to
\begin{equation}
 \label{eq:covar_decomp}
    \begin{aligned}
         &\Cov{\widehat{\Gamma}_t(w_1) }{\widehat{\Gamma}_t(w_2) |H_{t-1}, X_t}=\widehat{\mu}_t(X_t, w_1){\mu}_t(X_t, w_2) + {\mu}_t(X_t, w_1)\widehat{\mu}_t(X_t, w_2) - {\mu}_t(X_t, w_1){\mu}_t(X_t, w_2).
    \end{aligned}
\end{equation}

\noindent  Substituting \eqref{eq:var_decomp} and \eqref{eq:covar_decomp} in \eqref{eq:cond_var_expand},
\begin{equation}
    \label{eq:var_decomp_a}
    \begin{split}
        \Var{}{\widehat{\Gamma}_t(X_t, \pi)|H_{t-1}, X_t}
        = \sum_w\frac{\pi^2(X_t, w)\Var{}{Y_t(w)|X_t}}{e_t(X_t, w)} + A,
    \end{split}
\end{equation}

\noindent where $A$ collects all the terms involving $\widehat{\mu}$ and $\mu$, and after some algebra can be shown to be
\begin{align*}
        A := \sum_w\frac{\pi^2(X_t, w)(\hat{\mu}_t(X_t, w)-\mu(X_t, w))^2}{e_t(X_t, w)} -  \Big(\sum_w \pi(X_t, w)\big(\hat{\mu}_t(X_t, w)-\mu(X_t, w)\big)\Big)^2.
\end{align*}
Note that $A \geq 0$, since by the Cauchy-Schwartz inequality and recalling that  $\sum_{w}e_t(X_t, w)=1$,
\begin{align*}
      & \Big(\sum_w\frac{\pi^2(X_t, w)(\hat{\mu}_t(X_t, w)-\mu(X_t, w))^2}{e_t(X_t, w)}\Big)\times \Big(\sum_{w}e_t(X_t, w)\Big) \geq \Big(\sum_w \pi(X_t, w)\bbracket{\hat{\mu}_t(X_t, w)-\mu(X_t, w)}\Big)^2.
\end{align*}

\noindent Thus  from \eqref{eq:var_decomp_a} and the fact that $A \geq 0$, we have
\begin{equation*}
\begin{split}
    \Var{}{\widehat{\Gamma}_t(X_t, \pi)|H_{t-1}, X_t}&\geq \sum_w\frac{\pi^2(X_t, w)\Var{}{Y_t(w)|X_t}}{e_t(X_t, w)}\geq C_0\sum_w\frac{\pi^2(X_t, w)}{e_t(X_t, w)},
\end{split}
\end{equation*}
wherein the last inequality is due to the assumption that  $\Var{}{Y_t(w)|X_t}$ is uniformly bounded  below by $C_0 > 0$.

To prove the upper bound on conditional variances in \eqref{eq:gamma_condvar_x}, 
\begin{equation*}
    \begin{split}
       \Var{}{\widehat{\Gamma}_t(X_t, \pi)|H_{t-1}, X_t}
      \leq & \sum_w\frac{\pi^2(X_t, w)\Var{}{Y_t(w)|X_t}}{e_t(X_t, w)} + \sum_w\frac{\pi^2(X_t, w)(\hat{\mu}_t(X_t, w)-\mu(X_t, w))^2}{e_t(X_t, w)}\\
      \leq & \sum_w\frac{C_1\pi^2(X_t, w)}{e_t(X_t, w)} +  \sum_w\frac{4C_2^2\pi^2(X_t, w)}{e_t(X_t, w)},
    \end{split}
\end{equation*}
where for the last inequality, we use the assumption that $\Var{}{Y_t(w)|X_t}$ is uniformly  bounded above by some constant $C_1$ and that $\widehat{\mu}_t, \mu$ are uniformly bounded above by $C_2$. This justifies the result \eqref{eq:gamma_condvar_x}.

\section{Auxiliary theorems}

Our proof is established on martingale central limit theorems. Particularly, we invoke the following result  multiple times throughout the proof.
\begin{proposition}[Martingale CLT \cite{hall2014martingale}]
\label{prop:martingale_clt}
Let $\{\xi_{t}, \mathcal{F}_{T,t}\}_{t=1}^T$ be a martingale difference sequence (MDS) with bounded $4^{th}$ moments. Suppose that the following two conditions are satisfied,
\begin{itemize}
    \item variance convergence: $\sum_{t=1}^T \E{}{\xi_{T,t}^2|H_{t-1}}\xrightarrow{p} 1$;
    \item moment decay: $\sum_{t=1}^T\condE{}{\xi_{T,t}^4}{H_{t-1}}\xrightarrow{p}0$.
\end{itemize}
Then, $\sum_{t=1}^T \xi_{T,t}\xrightarrow{d}\N(0, 1)$.
\end{proposition}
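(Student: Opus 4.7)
The plan is to establish consistency by a context-wise martingale law of large numbers, and asymptotic normality by replacing the random denominator $S_T(X_t):=\sum_{s=1}^T h_s(X_t)$ with a deterministic counterpart so that a genuine martingale difference representation emerges, to which Proposition~\ref{prop:martingale_clt} applies. Throughout, write $p(x):=\Pp(X_t=x)$.

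For consistency, discreteness of $\X$ lets me split
\[
\widehat{Q}_T^C(\pi) = \sum_{x \in \X} \frac{\sum_{t=1}^T \one\{X_t=x\}\, h_t(x)\,\widehat{\Gamma}_t(X_t,\pi)}{S_T(x)}.
\]
For each $x$ the weights $h_t(x)$ are $H_{t-1}$-measurable; Proposition~\ref{prop:aipw_unbiased} together with independence of $X_t$ from $H_{t-1}$ gives $\bE[\one\{X_t=x\}\widehat{\Gamma}_t(X_t,\pi)\mid H_{t-1}]=p(x)Q(x,\pi)$; and the polynomial lower bound on $e_t$ in Assumption~\ref{assu:clt_condition}(b) forces $\sum_t h_t^2(x)/S_T(x)^2\to 0$. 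A standard martingale strong law yields the per-context ratio $\xrightarrow{p}p(x)Q(x,\pi)$, and summing over the finite $\X$ gives $\widehat{Q}_T^C(\pi)\xrightarrow{p}Q(\pi)$.

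For asymptotic normality the obstacle is that $h_t(X_t)/S_T(X_t)$ is not $H_{t-1}$-measurable, since $S_T(X_t)$ mixes future $h_s$. I would introduce the deterministic proxies $\tilde h_t(x):=\phi\bigl(\sum_w\pi^2(x,w)\bE[e_t^{-1}(x,w)]\bigr)$ and $\tilde S_T(x):=\sum_s\tilde h_s(x)$; the first clause of \eqref{eq:e_pair_convergence} implies $\sup_x|h_t(x)/\tilde h_t(x)-1|\to 0$ a.s., and hence $\sup_x|S_T(x)/\tilde S_T(x)-1|\to 0$. This substitution costs only $o_p((\widehat V_T^C)^{1/2})$ in the studentized statistic, so it suffices to analyze the oracle $\tilde{\widehat Q}_T^C(\pi):=\sum_t\tilde h_t(X_t)\widehat\Gamma_t(X_t,\pi)/\tilde S_T(X_t)$. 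Decompose
\[
\tilde{\widehat Q}_T^C(\pi)-Q(\pi)=\sum_{t=1}^T\bigl(\xi_{T,t}+\eta_{T,t}\bigr),
\]
where $\xi_{T,t}:=\tilde h_t(X_t)\bigl(\widehat\Gamma_t(X_t,\pi)-Q(X_t,\pi)\bigr)/\tilde S_T(X_t)$ captures treatment-randomization noise and $\eta_{T,t}:=\tilde h_t(X_t)Q(X_t,\pi)/\tilde S_T(X_t)-\sum_x p(x)\tilde h_t(x)Q(x,\pi)/\tilde S_T(x)$ captures context-sampling noise; the telescoping identity $\sum_t\tilde h_t(x)/\tilde S_T(x)=1$ makes the deterministic offsets sum to $Q(\pi)$. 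Both are martingale differences with respect to $H_{t-1}$, and since $\eta_{T,t}$ depends only on $X_t$, iterated expectation gives $\bE[\xi_{T,t}\eta_{T,t}\mid H_{t-1}]=0$.

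The remaining work is to verify the hypotheses of Proposition~\ref{prop:martingale_clt} for $\sum_t(\xi_{T,t}+\eta_{T,t})/\sigma_T$, with $\sigma_T^2:=\sum_t\bE[(\xi_{T,t}+\eta_{T,t})^2\mid H_{t-1}]$; variance convergence then holds by construction, and fourth-moment decay follows from Assumption~\ref{assu:clt_condition}(a,b) bounding $\bE[\widehat\Gamma_t(X_t,\pi)^4\mid H_{t-1}]$ combined with the polynomial decay of $\tilde h_t^4/\tilde S_T^4$. The main obstacle, and the reason \eqref{eq:e_pair_convergence} carries a pairwise clause, is showing $\widehat V_T^C(\pi)/\sigma_T^2\xrightarrow{p}1$: expanding the inner square produces cross-terms of the form $h_t(X_s)h_s(X_s)/S_T^2(X_s)$, whose uniform-in-$(t,s)$ replacement by the deterministic $\tilde h_t(X_s)\tilde h_s(X_s)/\tilde S_T^2(X_s)$ needs exactly the pairwise condition $\bE[e_t^{-1}(x,w)e_t^{-1}(x',w')]/[e_t^{-1}(x,w)e_t^{-1}(x',w')]\to 1$. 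Once such approximations are in place, a direct calculation shows that the centering sum $\sum_s h_t(X_s)h_s(X_s)\widehat\Gamma_s(X_s,\pi)/S_T^2(X_s)$ converges to $\sum_x p(x)\tilde h_t(x)Q(x,\pi)/\tilde S_T(x)$, so each summand inside $\widehat V_T^C$ is asymptotically $(\xi_{T,t}+\eta_{T,t})^2$, and Slutsky combined with Proposition~\ref{prop:martingale_clt} delivers the stated studentized CLT.
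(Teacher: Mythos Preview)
Your proposal does not address the stated proposition. Proposition~\ref{prop:martingale_clt} is the martingale central limit theorem of \cite{hall2014martingale}, which the paper quotes as an auxiliary tool and does not prove; what you have written is instead a proof sketch for Theorem~\ref{thm:contextual}, the CLT for the contextually weighted estimator $\widehat Q_T^C$, which \emph{applies} Proposition~\ref{prop:martingale_clt} as a black box.

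Taking your write-up as an attempted proof of Theorem~\ref{thm:contextual}, the overall strategy differs from the paper's and is attractive, but there is a real gap. The paper constructs one MDS $\eta_{T,t}(x)$ per context, verifies the conditions of Proposition~\ref{prop:martingale_clt} for each, aggregates them into $\zeta_{T,t}=(V_T^C)^{-1/2}\sum_x\alpha_T(x)\eta_{T,t}(x)$ with \emph{deterministic} weights $\alpha_T(x)$ and normalizer $V_T^C$, and only at the end swaps in the random $\sum_s h_s(x)$ and $\widehat V_T^C$ via Lemma~\ref{lemma:weight_convergence}. You instead substitute deterministic proxies $\tilde h_t,\tilde S_T$ up front and work with a single MDS $\xi_{T,t}+\eta_{T,t}$ splitting treatment noise from context-sampling noise. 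The gap is your claim that, after dividing by $\sigma_T$ with $\sigma_T^2:=\sum_t\bE[(\xi_{T,t}+\eta_{T,t})^2\mid H_{t-1}]$, variance convergence ``holds by construction.'' It does not: $\sigma_T^2$ is random because $\Var{}{\widehat\Gamma_t\mid H_{t-1},X_t}$ depends on $e_t$ and $\hat\mu_t$, and $\sigma_T$ is $H_{T-1}$-measurable, so $(\xi_{T,t}+\eta_{T,t})/\sigma_T$ is not adapted and Proposition~\ref{prop:martingale_clt} is not directly applicable. You still owe the step that $\sigma_T^2$ is asymptotically equivalent to a deterministic quantity---which is precisely what the paper secures through Lemma~\ref{lemma:weight_convergence} and the two clauses of \eqref{eq:e_pair_convergence}, and is where the substantive work of the argument lies.
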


The following proposition is also useful in our proof.
\begin{proposition}[Convergence of weighted arrays of random variables, Lemma 10, \cite{hadad2021confidence}]
\label{prop:weighted_convergence}
Let $a_{T,t}$ be a triangular sequence of nonnegative weight vectors satisfying $\mathrm{plim}_{T\rightarrow\infty}\max_{1\leq t\leq T} a_{T,t}=0$ and $\mathrm{plim}_{T\rightarrow\infty} \sum_{t=1}^T a_{T,t}\leq C$ for some constant $C$. Also, let $x_t$ be a sequence of random variables satisfying $x_t\xrightarrow[t\rightarrow\infty]{a.s.}0$. Then $\sum_{t=1}^T a_{T,t}x_t\xrightarrow[p]{T\rightarrow\infty}0$.

\end{proposition}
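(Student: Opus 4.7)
The plan is a Toeplitz-style truncation argument: split the sum at a time $T_0$ chosen so that the tail $\sup_{t > T_0}|x_t|$ is small with high probability, then control the head using the ``$\max a_{T,t}\to 0$'' hypothesis and the tail using the ``$\sum a_{T,t}\leq C$'' hypothesis. The only subtlety is that both weight conditions hold in probability while the $x_t$ convergence is almost sure, so I need to freeze the truncation time and an envelope on $\{x_t\}$ \emph{before} letting $T\to\infty$.

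I would begin by recording two easy consequences of $x_t\to 0$ a.s.: (i) the envelope $M:=\sup_{t\geq 1}|x_t|$ is a.s.\ finite (since only finitely many $|x_t|$ exceed any positive level), and (ii) the tail supremum $M_{T_0}:=\sup_{t\geq T_0}|x_t|$ decreases to $0$ a.s., hence also in probability, as $T_0\to\infty$. Fix arbitrary $\eta,\delta>0$ and set $\epsilon=\eta/(3C)$. Using (i) pick a constant $M_\star$ with $P(M>M_\star)<\delta/3$, and using (ii) pick a deterministic $T_0$ with $P(M_{T_0+1}>\epsilon)<\delta/3$. Both choices depend only on $\eta$ and $\delta$, not on $T$.

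Next, for any $T>T_0$, use the triangle inequality to decompose
\[
\Big|\sum_{t=1}^T a_{T,t}\, x_t\Big|\ \leq\ \sum_{t=1}^{T_0} a_{T,t}\,|x_t|\ +\ \sum_{t=T_0+1}^{T} a_{T,t}\,|x_t|.
\]
On the event $\{M\leq M_\star\}$ the head is bounded by $T_0\, M_\star\,\max_{1\leq t\leq T}a_{T,t}$; since $T_0$ and $M_\star$ are now fixed constants, this tends to $0$ in probability as $T\to\infty$ by $\mathrm{plim}_{T\to\infty}\max_t a_{T,t}=0$. On the event $\{M_{T_0+1}\leq \epsilon\}$ the tail is bounded by $\epsilon\sum_{t=1}^T a_{T,t}$, and $\sum_t a_{T,t}\leq C + o_p(1)$ implies this is at most $\eta/3 + o_p(1)$.

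Combining via a union bound over the two ``bad'' $\delta/3$ events plus the $o_p(1)$ terms going below $\eta/3$ for $T$ large, one obtains $P(|\sum_t a_{T,t}x_t|>\eta)<\delta$ for all $T$ sufficiently large, which is the desired convergence in probability since $\eta,\delta$ were arbitrary. The main obstacle, as noted, is simply the interaction between modes of convergence: the ``in probability'' nature of the weight hypotheses forces us to freeze $T_0$ and $M_\star$ first and then take $T\to\infty$, rather than trying to take a diagonal limit. Once that order is fixed, the estimate is essentially the stochastic analog of the classical Toeplitz lemma.
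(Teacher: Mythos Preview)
Your proposal is correct. Note, however, that the paper does not actually prove this proposition: it is quoted as Lemma~10 of \cite{hadad2021confidence} and stated without proof. The paper does give a proof of the companion deterministic statement (Proposition~\ref{prop:weighted_convergence_2}), and that proof is precisely the Toeplitz-style truncation you outline---split at a fixed $T_0$, bound the head by $T_0\cdot(\max_{t\le T_0}|x_t|)\cdot\max_{1\le t\le T}a_{T,t}$, and bound the tail by $\epsilon\sum_t a_{T,t}$. Your argument is the natural stochastic extension of that same scheme, with the additional bookkeeping (freezing $T_0$ and the envelope $M_\star$ before sending $T\to\infty$, and using a union bound over the bad events) needed to accommodate the ``in probability'' hypotheses on the weights alongside the almost-sure convergence of $x_t$. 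There is nothing to correct.
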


A similar proposition will also be used. 
\begin{proposition}[Convergence of weighted arrays of deterministic numbers]
\label{prop:weighted_convergence_2}
Let $a_{T,t}$ be a triangular sequence of nonnegative weight vectors satisfying $\lim_{T\rightarrow\infty}\max_{1\leq t\leq T} a_{T,t}=0$ and $\lim_{T\rightarrow\infty} \sum_{t=1}^T a_{T,t}\leq C$ for some constant $C$. Also, let $x_t$ be a sequence of numbers satisfying $x_t\xrightarrow{t\rightarrow\infty}0$. Then $\sum_{t=1}^T a_{T,t}x_t\xrightarrow{T\rightarrow\infty}0$.
\end{proposition}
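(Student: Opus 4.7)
The statement is a deterministic Toeplitz-type lemma, so I would prove it directly from the definition of convergence by a standard ``split the sum at a truncation level'' argument. Fix $\varepsilon>0$. Because $x_t\to 0$, in particular the sequence $\{x_t\}$ is bounded, say $|x_t|\le M$ for all $t$, and there exists an index $N=N(\varepsilon)$ such that $|x_t|<\varepsilon$ for every $t>N$. The idea is then to split
\begin{equation*}
\Bigl|\sum_{t=1}^T a_{T,t} x_t\Bigr|
\;\le\; \sum_{t=1}^{N} a_{T,t}\,|x_t| \;+\; \sum_{t=N+1}^{T} a_{T,t}\,|x_t|,
\end{equation*}
keeping $N$ fixed while $T\to\infty$, and bound each piece separately.

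For the second (tail) piece, use $|x_t|<\varepsilon$ on $t>N$ and the hypothesis $\limsup_{T}\sum_{t=1}^T a_{T,t}\le C$ to get
\begin{equation*}
\sum_{t=N+1}^{T} a_{T,t}\,|x_t|\;\le\;\varepsilon\sum_{t=1}^{T} a_{T,t}\;\le\;\varepsilon(C+1)
\end{equation*}
for all sufficiently large $T$. For the first (head) piece, since $N$ is fixed and depends only on $\varepsilon$,
\begin{equation*}
\sum_{t=1}^{N} a_{T,t}\,|x_t|\;\le\; M\,N\,\max_{1\le t\le T} a_{T,t},
\end{equation*}
and by the assumption $\max_{1\le t\le T}a_{T,t}\to 0$ as $T\to\infty$, this bound tends to $0$ for $T$ large enough (with $N$ still fixed). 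Combining the two gives $\limsup_{T\to\infty}\bigl|\sum_{t=1}^T a_{T,t}x_t\bigr|\le\varepsilon(C+1)$; since $\varepsilon>0$ was arbitrary, the limit is $0$.

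\textbf{Expected obstacles.} There is essentially no hard step; the only subtlety is the correct order of quantifiers in the split-and-bound argument, namely choosing $N$ based on $\varepsilon$ first and then letting $T\to\infty$ with $N$ held fixed, so that the head bound $MN\max_t a_{T,t}$ can be driven below $\varepsilon$. I would also briefly justify that $\{x_t\}$ is bounded (immediate from $x_t\to 0$) so that $M$ in the head bound is finite, and note that the inequality $\limsup_T \sum_{t=1}^T a_{T,t}\le C$ (rather than equality) is all that is used. The conclusion then follows from the arbitrariness of $\varepsilon$.
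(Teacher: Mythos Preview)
Your proposal is correct and follows essentially the same approach as the paper: both fix $\varepsilon>0$, choose a truncation index $N$ (the paper calls it $T_0$) so that $|x_t|<\varepsilon$ for $t>N$, split the sum into head and tail, bound the head by $N\cdot(\text{bound on }|x_t|)\cdot\max_t a_{T,t}\to 0$ and the tail by $\varepsilon\sum_t a_{T,t}\le \varepsilon C$, and conclude by letting $\varepsilon\to 0$. The only cosmetic difference is that you invoke a global bound $M$ on $|x_t|$ whereas the paper uses $\max_{1\le t\le T_0}|x_t|$, but this is immaterial.
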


\begin{proof}
    For any given $\epsilon>0$, since $x_t\rightarrow 0$, there exists a $T_0$ such that when $t>T_0$, $|x_t|<\epsilon$.
    \begin{equation}
        \begin{split}
            \bigg|\sum_{t=1}^T a_{T,t}x_t\bigg|& = \bigg| \sum_{t=1}^{T_0} a_{T,t}x_t +\sum_{t=T_0+1}^{T} a_{T,t}x_t \bigg|\\
           &\leq  \bigg| \sum_{t=1}^{T_0} a_{T,t}x_t \bigg|+ \bigg|\sum_{t=T_0+1}^{T} a_{T,t}x_t \bigg|\\
           &\leq T_0\max_{1\leq t\leq T_0} |a_{T,t}| \max_{1\leq t\leq T_0} |x_t| + \epsilon \sum_{t=1}^T a_{T,t}\\
           &\leq T_0\max_{1\leq t\leq T_0} |x_t|\max_{1\leq t\leq T} |a_{T,t}|  + \epsilon \sum_{t=1}^T a_{T,t},\\
        \end{split}
    \end{equation}
\end{proof}
where $\max_{1\leq t\leq T} |a_{T,t}|$ vanishes as $T\rightarrow\infty$, and $\sum_{t=1}^T a_{T,t}$ bounded, we thus have $ \bigg|\sum_{t=1}^T a_{T,t}x_t\bigg| = O(\epsilon)$ for large $T$, concluding the proof.

\section{Limit theorems of non-contextual weighting}
\label{sec:proof_noncontextual_clt}

In this section, we establish Theorem~\ref{thm:non_contextual}. At a high level, Theorem~\ref{thm:non_contextual} is an extension of limiting theorems for adaptive weighting in multi-armed bandits from \cite{hadad2021confidence}. Note that the non-contextual adaptive weight $h_t$ proposed in \eqref{eq:hnc_infeasible} is independent of the realized context $X_t$; when proving its asymptotics, we can view context as  exogenous  randomness and follow a similar martingale analysis framework as that in \cite{hadad2021confidence}. 

Recall that our goal is to evaluate a policy  $\pi$, whose policy value is $Q(\pi)=\bE[\sum_{w}\mu(x,w)]$. Our policy value estimate and the associated variance estimate are 
\begin{align*}
    \widehat{Q}_T^{NC}(\pi) =  \frac{\sum_{t=1}^T h_t \widehat{\Gamma}_t(X_t, \pi) }{\sum_{t=1}^T h_t}\quad \mbox{and}\quad \widehat{V}_T^{NC}(\pi)  = \frac{\sum_{t=1}^T h_t^2 (\widehat{\Gamma}_t(X_t, \pi)  - \widehat{Q}_T^{NC}(\pi) )^2 }{(\sum_{t=1}^T h_t)^2},
\end{align*}
with \StableVar weights $h_t = \sqrt{1 / \bE[\sum_{w}\pi^2(X_t, w)/e_t(X_t, w)|H_{t-1}]}$.

To start, we define an auxiliary martingale difference sequence (MDS) $\{\xi_{T,t}\}_{t=1}^T$ for   estimator $\widehat{Q}^{NC}_T$,
\begin{equation}
\label{eq:def_xi}
        \xi_{T, t} = \frac{h_t\big(\widehat{\Gamma}_t(X_t, \pi) - Q(\pi)\big)}{\sqrt{\sum_{t=1}^T \bE\big[h_t^2\big(\widehat{\Gamma}_t(X_t, \pi) -Q(\pi)\big)^2 \big]} }, 
\end{equation}
which martingale difference property is justified since $\bE[\xi_{T,t}|H_{t-1}]=0$ by Proposition~\ref{prop:aipw_unbiased}, which immediately yields $\bE[(\sum_{t=1}^T\xi_{T,t})^2] = 1$: 
\begin{equation}
    \begin{aligned}
   \bE[(\sum_{t=1}^T\xi_{T,t})^2] =   \bE[\sum_{t=1}^T\xi^2_{T,t}] = \frac{\sum_{t=1}^T \bE\big[h_t^2\big(\widehat{\Gamma}_t(X_t, \pi) -Q(\pi)\big)^2 \big] }{
   \sum_{t=1}^T \bE\big[h_t^2\big(\widehat{\Gamma}_t(X_t, \pi) -Q(\pi)\big)^2 \big]
   } = 1.
\end{aligned}
\label{eq:xi_2}
\end{equation}
We proceed to show the following steps in details.

\begin{itemize}
    \item \emph{Step I: consistency of $\widehat{Q}^{NC}_T(\pi) $,} that is, $\widehat{Q}^{NC}_T(\pi) \xrightarrow{p} Q(\pi)$.
    \item \emph{Step II: CLT of MDS $\{\xi_{T,t}\}_{t=1}^T$,} that is, $\sum_{t=1}^T \xi_{T,t}\xrightarrow{d}\N(0,1)$.
    \item \emph{Step III: CLT of $\widehat{Q}^{NC}_T(\pi) $,} that is,  $\big(\widehat{Q}^{NC}_T(\pi)  - Q(\pi)\big) / \big(\widehat{V}_T^{NC}(\pi) \big)^{1/2}\xrightarrow{d}\N(0, 1)$.
\end{itemize}

\subsection{Consistency of \texorpdfstring{$\widehat{Q}^{NC}_T$}{Qhat-NC}}
We now  show that $\widehat{Q}^{NC}_T(\pi) $ converges to true policy value $Q(\pi)$. We have
    \begin{equation}
         \begin{aligned}
       \big|\widehat{Q}^{NC}_T(\pi)  - Q(\pi)\big| & = \Big|\frac{\sum_{t=1}^T h_t(\widehat{\Gamma}_t(X_t, \pi)  - Q(\pi))}{\sum_{t=1}^T h_t}\Big|\\
       & =  \Big|\frac{\sum_{t=1}^T h_t(\widehat{\Gamma}_t(X_t, \pi)  - Q(\pi))}{\sqrt{\sum_{t=1}^T \bE\big[h_t^2\big(\widehat{\Gamma}_t(X_t, \pi) -Q(\pi)\big)^2}\big]}\Big|\cdot \frac{\sqrt{\sum_{t=1}^T \bE\big[h_t^2\big(\widehat{\Gamma}_t(X_t, \pi) -Q(\pi)\big)^2}\big]}{\sum_{t=1}^T h_t}\\
        & = \Big|\sum_{t=1}^T \xi_{T,t}\Big|\cdot \frac{\sqrt{\sum_{t=1}^T \bE\big[h_t^2\big(\widehat{\Gamma}_t(X_t, \pi) -Q(\pi)\big)^2}\big]}{\sum_{t=1}^T h_t}\\
         & = \Big|\sum_{t=1}^T \xi_{T,t}\Big|\cdot \frac{\sqrt{\sum_{t=1}^T \bE\big[h_t^2\bE\big[\big(\widehat{\Gamma}_t(X_t, \pi) -Q(\pi)\big)^2|H_{t-1}\big]\big]}}{\sum_{t=1}^T h_t}\\
          &\stackrel{\mbox{(i)}}{\leq}\Big|\sum_{t=1}^T \xi_{T,t}\Big|\cdot \frac{\sqrt{\sum_{t=1}^T \bE\big[(\bE[\sum_w \pi(X_t, w)/e_t(X_t, w)|H_{t-1}])^{-1}\cdot U \bE[\sum_w \pi(X_t, w)/e_t(X_t, w)|H_{t-1}]\big]}}{\sum_{t=1}^T (\bE[\sum_w \pi(X_t, w)/e_t(X_t, w) |H_{t-1}])^{-1/2}}\\
                    &=\Big|\sum_{t=1}^T \xi_{T,t}\Big|\cdot \frac{\sqrt{\sum_{t=1}^T U}}{\sum_{t=1}^T (\bE[\sum_w \pi(X_t, w)/e_t(X_t, w)|H_{t-1}])^{-1/2}}\\
         &\stackrel{\mbox{(ii)}}{\lesssim} \Big|\sum_{t=1}^T \xi_{T,t}\Big| \cdot \frac{\sqrt{T}}{\sum_{t=1}^T t^{-\alpha/2} } =\Big|\sum_{t=1}^T \xi_{T,t}\Big|\cdot O(T^{(\alpha-1)/2}),
    \end{aligned}
    \end{equation}
where in (i), we use \eqref{eq:gamma_condvar} and the definition of \StableVar weights; in (ii), we use Assumption~\ref{assu:clt_condition}b that $e_t\geq C t^{-\alpha}$. Therefore, for any $\epsilon>0$,
\begin{align*}
    \p\big(\big|\widehat{Q}^{NC}_T(\pi)  - Q(\pi)\big| >\epsilon \big) &\leq \epsilon^{-2} \bE\big[ \big|\widehat{Q}^{NC}_T(\pi)  - Q(\pi)\big|^2 \big] \\
   & \leq  \epsilon^{-2} \bE\big[ \big(\sum_{t=1}^T \xi_{T,t}\big)^2 \big]O(T^{\alpha-1})  
   = \epsilon^{-2}O(T^{\alpha-1})\rightarrow 0,
\end{align*}
where we use the fact that $\bE\big[ \big(\sum_{t=1}^T \xi_{T,t}\big)^2 \big]=\bE\big[\sum_{t=1}^T \xi_{T,t}^2 \big]=1$ by \eqref{eq:xi_2}.

\subsection{CLT of martingale difference sequence \texorpdfstring{$\{\xi_{T,t}\}_{t=1}^T$}{MDS-NC}}
\label{appendix:clt_xi}
We show the convergence of  $\sum_{t=1}^T\xi_{T,t}$ by verifying two martingale CLT conditions stated in Proposition~\ref{prop:martingale_clt}. 

\subsubsection{Variance convergence.} 
\label{sec:variance-convergence-noncontextual}

We want to show that the following ratio converges in probability to 1. 
\begin{align}
    \label{eq:xi2}
    \sum_{t=1}^T\condE{}{\xi^2_{T,t}}{H_{t-1}} 
    = \frac{\sum_{t=1}^T h_t^2 \bE\big[\big(\widehat{\Gamma}_t(X_t, \pi) - Q(\pi)\big)^2| H_{t-1}\big]
     }{\sum_{t=1}^T \bE\big[h_t^2\big(\widehat{\Gamma}_t(X_t, \pi) -Q(\pi)\big)^2\big]}.
\end{align}

We will later show that \eqref{eq:xi2} can be rewritten as $Z_T/\bE[Z_T]$, where
\begin{equation}
    \label{eq:zt_num}
    Z_T = 
     \sum_{t=1}^{T} \EE\Big[h_t^2 \sum_w \frac{C_1(X_t, w)\pi^2(X_t, w)}{e_t(X_t, w)} | H_{t-1}\Big] + C_2\sum_{t=1}^{T} h_t^2 +   o_p\left( \sum_{t=1}^{T} \EE\Big[h_t^2\Big\{ \sum_w \frac{C_1(X_t, w)\pi^2(X_t, w)}{e_t(X_t, w)} + C_2\Big\} | H_{t-1}\Big] \right),
\end{equation}
and
\begin{equation}
    \label{eq:zt_denom}
    \EE[Z_T] = 
   \sum_{t=1}^{T} \EE\Big[h_t^2 \sum_w \frac{C_1(X_t, w)\pi^2(X_t, w)}{e_t(X_t, w)} \Big] + C_2\sum_{t=1}^{T} \EE[h_t^2] +   o\left( \sum_{t=1}^{T} \EE\Big[h_t^2 \Big\{\sum_w \frac{C_1(X_t, w)\pi^2(X_t, w)}{e_t(X_t, w)}+ C_2\Big\}\Big] \right).
\end{equation}
Above,  $C_1$ is a function of $X_t$ and $w$ but not the history, bounded above and away from zero; $C_2$ is a finite constant. This characterization implies that that \eqref{eq:xi2} is asymptotically equivalent to the following ratio of dominant terms in \eqref{eq:zt_num} and \eqref{eq:zt_denom}, which we denote as 
\begin{equation}
\label{eq:xi2_target}
    \frac{A_T}{\EE A_T} := 
    \frac{  \sum_{t=1}^{T} \EE\Big[h_t^2 \sum_w C_1(X_t, w)\frac{\pi^2(X_t, w)}{e_t(X_t, w)} | H_{t-1}\Big] + C_2\sum_{t=1}^{T} h_t^2  }
         { \sum_{t=1}^{T} \EE\Big[h_t^2 \sum_w C_1(X_t, w)\frac{\pi^2(X_t, w)}{e_t(X_t, w)} \Big] + C_2\sum_{t=1}^{T} \EE[h_t^2]}.
\end{equation}

For now, let's assume that we have characterized \eqref{eq:xi2} as $Z_T/\EE Z_T$ and show that \eqref{eq:xi2_target} converges in probability to one. We will show that both the numerator and the denominator of \eqref{eq:xi2_target}  are equivalent to $M_T$ asymptotically, where
\begin{equation}
    \label{eq:M}
    M_T := \sum_{t=1}^T \EE\Big[\sum_w \frac{C_1(X_t, w)\pi^2(X_t, w)}{e_t(X_t, w)}\Big]\Big/\bE\Big[\sum_w \frac{\pi^2(X_t, w)}{e_t(X_t, w)}\Big] + \sum_{t=1}^TC_2\Big/\bE\Big[\sum_w \frac{\pi^2(X_t, w)}{e_t(X_t, w)}\Big].
\end{equation}    

It will be useful to note that the sequence $M_T$ is 
$\Omega(T)$. This is shown later in \eqref{eq:M_lower}.

\vspace{\baselineskip}

Let's begin by showing that $A_T/M_T \to 1$ almost surely as $T \to \infty$ or, equivalently, that $A_T/M_T - 1 \to 0$ almost surely. This ratio decomposes as follows,
\begin{align}
\label{eq:xi_hCe2_p}
      \frac{A_T}{M_T} - 1 =  & \frac{\sum_{t=1}^{T} \EE\Big[h_t^2 \sum_w\frac{ C_1(X_t, w)\pi^2(X_t, w)}{e_t(X_t, w)} | H_{t-1}\Big] + C_2\sum_{t=1}^{T} h_t^2  }{M_T} - 1
    = \frac{
    \sum_{t=1}^{T}a_t m_t +C_2  b_t n_t }{M_T}\\
\mbox{where}\quad & a_t := \frac{ \EE\Big[h_t^2 \sum_w\frac{ C_1(X_t, w)\pi^2(X_t, w)}{e_t(X_t, w)} | H_{t-1}\Big]}{\EE[\sum_w \frac{C_1(X_t, w)\pi^2(X_t, w)}{e_t(X_t, w)}]\big/\bE[\sum_w \frac{\pi^2(X_t, w)}{e_t(X_t, w)}]} - 1, \quad b_t := \frac{ h_t^2}{\bE[\sum_w \frac{\pi^2(X_t, w)}{e_t(X_t, w)}]^{-1}} -1\nonumber\\
 & m_t := \EE[\sum_w \frac{C_1(X_t, w)\pi^2(X_t, w)}{e_t(X_t, w)}]\big/\bE[\sum_w \frac{\pi^2(X_t, w)}{e_t(X_t, w)}], \quad n_t :=\bE[\sum_w \frac{\pi^2(X_t, w)}{e_t(X_t, w)}]^{-1}.\nonumber 
\end{align}
Below, we will show that $a_t\xrightarrow{a.s.}0$ and $b_t\xrightarrow{a.s.}0$. Consequently, for any $\epsilon>0$, there exists $T_0$ such that $|a_t|<\epsilon$ and $|b_t|<\epsilon$ for $t \ge T_0$ on an event of probability one. We work on this event. Note that $m_t$ is bounded because $C_1$ is bounded; $n_t \in (0,1]$ because $e_t \le 1$ and $\sum_w \pi(x,w)^2 \le \sum_w \pi(x,w) = 1$ for all $x$;
and below, in  \eqref{eq:M_lower},
we show that  $M_T=\Omega(T)$. It follows that $\sum_{t=T_0}^T a_tm_t+C_2 b_tn_t\le \epsilon \sum_{t=T_0}^T (m_t + C_2 n_t) \le \epsilon M_T$. 
Furthermore, because $a_t$ and $b_t$ are convergent, they are bounded, and
it follows that $\sum_{t=1}^{T_0} a_tm_t+C_2 b_tn_t$ is bounded and therefore less than $\epsilon M_T = \Omega(T)$ for large $T$. We have decomposed the numerator of the right side of  \eqref{eq:xi_hCe2_p} into two terms, and shown both are  arbitrarily small relative to the denominator; it follows that  $A_T / M_T-1$
converges to $0$ on the aforementioned event, i.e., almost surely.


\vspace{\baselineskip}

Because the ratio $A_T / M_T$ is bounded, $\bE[A_T/M_T]$ also converges to one as consequence of the above and dominated convergence.  To see that it is bounded, recall that $M_T=\Omega(T)$ and observe that $A_T=O(T)$, as each summand in $A_T$ is $O(1)$. In particular, the summands satisfy
\begin{equation}
    \begin{split}
    &h_t^2\{\bE[\sum_w\frac{ C_1(X_t, w)\pi^2(X_t, w)}{e_t(X_t, w)} | H_{t-1}] + C_2\} \\
\leq& h_t^2 (\sup_{x,w}|C_1(x,w)|+|C_2|)\bE[\sum_w\frac{ \pi^2(X_t, w)}{e_t(X_t, w)} | H_{t-1}]\\
=&(\sup_{x,w}|C_1(x,w)|+|C_2|) < \infty,
\end{split}
\end{equation}
as $h_t^2$ is the inverse of the conditional expectation in the second line above. 

\vspace{\baselineskip}

Note that $M_T$ is not random, so this implies that $\EE[A_T] / M_T \to 1$.
Thus, as $A_T/M_T \xrightarrow{a.s.} 1$,
it follows that $A_T / E A_T = (A_T / M_T) / (\EE[A_T] / M_T) \xrightarrow{a.s.} 1$.

\vspace{\baselineskip}

Next, let's show that  $b_t$ defined in \eqref{eq:xi_hCe2_p} converges to $0$ almost surely. We'll show that the equivalent is true:
\begin{equation}
\label{eq:eX_converged}
\begin{split}
       &\bigg| \frac{\bE\big[\sum_w\frac{\pi^2(X_t, w)}{e_t(X_t,w)} |H_{t-1}\big]}{\bE\big[\sum_w\frac{\pi^2(X_t, w)}{e_t(X_t,w)}\big]} - 1 \bigg| \\
       \stackrel{(i)}{=}&
           \bigg| \frac{\int_{x}\sum_w\frac{\pi^2(x, w)}{e_t(x,w)} d\p_{x}}{\bE\big[\sum_w\frac{\pi^2(X_t, w)}{e_t(X_t,w)}\big]} - 1 \bigg| 
           \\
                 \stackrel{(ii)}{=}&    \bigg| \frac{\int_{x}\sum_w\frac{\pi^2(x, w)}{e_t(x,w)}  - \bE[\sum_w\frac{\pi^2(x, w)}{e_t(x,w)} ] d\p_{x}}{\bE\big[\sum_w\frac{\pi^2(X_t, w)}{e_t(X_t,w)}\big]}  \bigg|  
                \\
           \leq&     \frac{\int_{x}|\sum_w\frac{\pi^2(x, w)}{e_t(x,w)}  - \bE[\sum_w\frac{\pi^2(x, w)}{e_t(x,w)} ]| d\p_{x}}{\bE\big[\sum_w\frac{\pi^2(X_t, w)}{e_t(X_t,w)}\big]}     =  \frac{ \int_{x}  \Big|\frac{\sum_w\frac{\pi^2(x, w)}{e_t(x,w)}}{\bE[\sum_w\frac{\pi^2(x, w)}{e_t(x,w)}]} -1
       \Big|\bE[\sum_w\frac{\pi^2(x, w)}{e_t(x,w)}]d\p_{x}  }{
       \bE\big[\sum_w\frac{\pi^2(X_t, w)}{e_t(X_t,w)}\big]
       }\\  
       \leq & \frac{\sup_{x}\Big|\frac{\sum_w\frac{\pi^2(x, w)}{e_t(x,w)}}{\bE[\sum_w\frac{\pi^2(x, w)}{e_t(x,w)}]} -1
       \Big| \int_{x} \bE[\sum_w\frac{\pi^2(x, w)}{e_t(x,w)}]d\p_{x}  }{
       \bE\big[\sum_w\frac{\pi^2(X_t, w)}{e_t(X_t,w)}\big]
       }  \\
       = & \frac{\sup_{x}\Big|\frac{\sum_w\frac{\pi^2(x, w)}{e_t(x,w)}}{\bE[\sum_w\frac{\pi^2(x, w)}{e_t(x,w)}]} -1
       \Big| \cdot \bE\big[\sum_w\frac{\pi^2(X_t, w)}{e_t(X_t,w)}\big]  }{
       \bE\big[\sum_w\frac{\pi^2(X_t, w)}{e_t(X_t,w)}\big]
       }  \\
       = & \sup_{x}\Big|\frac{\sum_w\frac{\pi^2(x, w)}{e_t(x,w)}}{\bE[\sum_w\frac{\pi^2(x, w)}{e_t(x,w)}]} -1\Big|\\
        \stackrel{(iii)}{\leq} & \sup_x \bigg|\sum_w \frac{\frac{\pi^2(x, w)}{e_t(x,w)} }{\bE[\frac{\pi^2(x, w)}{e_t(x,w)}]}-1\bigg| \leq  \sup_x \sum_w \bigg|\frac{\frac{\pi^2(x, w)}{e_t(x,w)} }{\bE[\frac{\pi^2(x, w)}{e_t(x,w)}]}-1\bigg|\\
    \stackrel{(iv)}{=}& \sup_{x}\sum_w  \bigg|\frac{e_t^{-1}(x,w) }{\bE[e_t^{-1}(x,w)]}-1\bigg|\leq K \sup_{x,w} \bigg|\frac{e_t^{-1}(x,w) }{\bE[e_t^{-1}(x,w)]}-1\bigg|\xrightarrow{a.s.}0
       ,
\end{split}
\end{equation}
where in (i), we rewrite $ \bE[\sum_w\frac{\pi^2(X_t, w)}{e_t(X_t,w)} |H_{t-1}]$ as $\int_{x}\sum_w\frac{\pi^2(x, w)}{e_t(x,w)} d\p_{x}$ since $ \bE[\sum_w\frac{\pi^2(X_t, w)}{e_t(X_t,w)} |H_{t-1}]$ is an expectation over $X_t=x$ conditioning on history $H_{t-1}$; in (ii), we rewrite $ \bE[\sum_w\frac{\pi^2(X_t, w)}{e_t(X_t,w)}]$ as  
$\int_{x}\bE[\sum_w\frac{\pi^2(x, w)}{e_t(x,w)} ] d\p_{x}$, since
$ \bE[\sum_w\frac{\pi^2(X_t, w)}{e_t(X_t,w)}]$ is an expectation over $(X_t, H_{t-1})$, and here, we use law of iterated expectations to integrate out $H_{t-1}$ first; in (iii), we substitute a smaller denominator for each term in the numerator then use the triangle inequality;
(iv),  the equation is true since we can pull the $\pi^2(x,w)$ out of the expectation with fixed $(x,w)$. This concludes that $b_t\xrightarrow{a.s.}0$.

\vspace{\baselineskip}


\vspace{\baselineskip}

Next we are proving $a_t\xrightarrow{a.s.}0$. Note that $(a_t+1) = (b_t+1) \cdot \frac{\bE\big[\sum_w\frac{C_1(X_t, w)\pi^2(X_t, w)}{e_t(X_t,w)} |H_{t-1}\big]}{\bE\big[\sum_w\frac{C_1(X_t, w)\pi^2(X_t, w)}{e_t(X_t,w)}\big]}$. We already have $b_t\xrightarrow{a.s.}0$, to show $a_t\xrightarrow{a.s.}0$, we only need to show that the latter factor converges to one. We use an argument similar to the one above.
\begin{equation}
\label{eq:xi_hCe2}
    \begin{split}
     &\bigg| \frac{\bE\big[\sum_w\frac{C_1(X_t, w)\pi^2(X_t, w)}{e_t(X_t,w)} |H_{t-1}\big]}{\bE\big[\sum_w\frac{C_1(X_t, w)\pi^2(X_t, w)}{e_t(X_t,w)}\big]} - 1 \bigg|  \\
     &= \bigg| \frac{\int_{x}\sum_w\frac{C_1(x, w)\pi^2(x, w)}{e_t(x,w)} d\p_{x}}{\bE\big[\sum_w\frac{C_1(X_t, w)\pi^2(X_t, w)}{e_t(X_t,w)}\big]} - 1 \bigg|\\
    &= \bigg| \frac{\int_{x}\sum_w\frac{C_1(x, w)\pi^2(x, w)}{e_t(x,w)} - \bE[\sum_w\frac{C_1(x, w)\pi^2(x, w)}{e_t(x,w)}] d\p_{x} }{\bE\big[\sum_w\frac{C_1(X_t, w)\pi^2(X_t, w)}{e_t(X_t,w)}\big]}  \bigg|\\
    &\leq  \frac{\int_{x}\Big|\frac{\sum_w\frac{C_1(x, w)\pi^2(x, w)}{e_t(x,w)}}{ \bE[\sum_w\frac{C_1(x, w)\pi^2(x, w)}{e_t(x,w)}]}-1\Big|\bE[\sum_w\frac{C_1(x, w)\pi^2(x, w)}{e_t(x,w)}] d\p_{x} }{\bE\big[\sum_w\frac{C_1(X_t, w)\pi^2(X_t, w)}{e_t(X_t,w)}\big]} \\
       &\leq \frac{\sup_{x}\Big|\frac{\sum_w\frac{C_1(x, w)\pi^2(x, w)}{e_t(x,w)}}{\bE\big[\sum_w\frac{C_1(x, w)\pi^2(x, w)}{e_t(x,w)}\big]} -1
       \Big| \int_{x}\bE[ \sum_w\frac{C_1(x, w)\pi^2(x, w)}{e_t(x,w)}]d\p_{x}  }{
       \bE\big[\sum_w\frac{C_1(X_t, w)\pi^2(X_t, w)}{e_t(X_t,w)}\big]
       }\\
       & = \sup_{x}\Big|\frac{\sum_w\frac{C_1(x, w)\pi^2(x, w)}{e_t(x,w)}}{\bE\big[\sum_w\frac{C_1(x, w)\pi^2(x, w)}{e_t(x,w)}\big]} -1
       \Big|\\
       &\leq \sup_x  \bigg|\sum_w\frac{\frac{C_1(x,w)\pi^2(x, w)}{e_t(x,w)} }{\bE[\frac{C_1(x,w)\pi^2(x, w)}{e_t(x,w)}]}-1\bigg| \leq \sup_x \sum_w \bigg|\frac{\frac{C_1(x,w)\pi^2(x, w)}{e_t(x,w)} }{\bE[\frac{C_1(x,w)\pi^2(x, w)}{e_t(x,w)}]}-1\bigg|\\
   & =\sup_{x}\sum_w  \bigg|\frac{e_t^{-1}(x,w) }{\bE[e_t^{-1}(x,w)]}-1\bigg|\leq K \sup_{x,w} \bigg|\frac{e_t^{-1}(x,w) }{\bE[e_t^{-1}(x,w)]}-1\bigg|\xrightarrow{a.s.}0.
\end{split}
\end{equation}

This completes the proof that $a_t$ converges to one almost surely. 

\clearpage

Next we'll show  \eqref{eq:xi2} can be rewritten as $Z_T/\bE[Z_T]$. Let's start by understanding the behavior the quantity in the numerator of \eqref{eq:xi2}. 

\begin{equation}
\label{eq:decompose_numerator_a}
    \begin{aligned}
        \bE[(\hGamma_t(X_t,\pi) - Q(\pi))^2|H_{t-1}, X_t] & =  \bE[(\hGamma_t(X_t,\pi) - \sum_w \pi(X_t, w)\mu(X_t, w))^2|H_{t-1}, X_t] + (\sum_w \pi(X_t, w)\mu(X_t, w)-Q(\pi))^2.
    \end{aligned}
\end{equation}

By derivation in Appendix \ref{sec:proof_aipw_var} (see Equation~\ref{eq:var_decomp_a}) , we have
\begin{equation}
\label{eq:decompose_numerator_b}
\begin{aligned}
    &\bE[(\hGamma_t(X_t,\pi) - \sum_w \pi(X_t, w)\mu(X_t, w))^2|H_{t-1}, X_t] =   \Var{}{\widehat{\Gamma}_t(X_t, \pi)|H_{t-1}, X_t} \\
    &= \sum_w\frac{\pi^2(X_t, w)\Var{}{Y_t(w)|X_t}}{e_t(X_t, w)} + \sum_w\frac{\pi^2(X_t, w)(\hat{\mu}_t(X_t, w)-\mu(X_t, w))^2}{e_t(X_t, w)} -  \Big(\sum_w \pi(X_t, w)\big(\hat{\mu}_t(X_t, w)-\mu(X_t, w)\big)\Big)^2.
\end{aligned}
\end{equation}

Combing \eqref{eq:decompose_numerator_a} and \eqref{eq:decompose_numerator_b}, we have
\begin{equation}
\label{eq:decompose_numerator}
    \begin{aligned}
        &\bE[(\hGamma_t(X_t,\pi) - Q(\pi))^2|H_{t-1}] =   \bE[\bE[(\hGamma_t(X_t,\pi) - Q(\pi))^2|X_t, H_{t-1}] |H_{t-1}]\\
    = &\bE\Big[
    \sum_w\frac{\pi^2(X_t, w)[\Var{}{Y_t(w)|X_t} + (\hat{\mu}_t(X_t, w)-\mu(X_t, w))^2  ]}{e_t(X_t, w)}  |H_{t-1} \Big]\\
    &\quad - \bE\Big[\Big(\sum_w \pi(X_t, w)\big(\hat{\mu}_t(X_t, w)-\mu(X_t, w)\big)\Big)^2 |H_{t-1}\Big] +\bE[(\sum_w \pi(X_t, w)\mu(X_t, w)-Q(\pi))^2] \\
     = & \bE\Big[
    \sum_w C_1(X_t, w)\frac{\pi^2(X_t, w)}{e_t(X_t, w)}  |H_{t-1} \Big] + C_2 + \bE[m_t(X_t)|H_{t-1}] 
    \end{aligned}
\end{equation}
where $C_1(X_t, w) = \Var{}{Y_t(w)|X_t} + (\mu_\infty(X_t, w)-\mu(X_t, w))^2  $, $C_2 =\bE[(\sum_w \pi(X_t, w)\mu(X_t, w)-Q(\pi))^2]- \bE[(\sum_w \pi(X_t, w)(\mu_\infty(X_t, w)-\mu(X_t, w)))^2 ] $. Note neither $C_1$ or $C_2$ depend on the history $H_{t-1}$. The variable $m_t$ collects the following terms:
\begin{equation}
    \label{eq:mt_def}
\begin{split}
    m_t(X_t) =& \sum_{w}\frac{\pi^2(X_t, w)}{e_t(X_t, w)} (\hat{\mu}_t(X_t, w)-\mu_\infty(X_t, w))^2 -(\sum_w \pi(X_t, w)(\hat{\mu}_t(X_t, w)-\mu_\infty(X_t, w)))^2\\
    &+ 2\sum_{w}\frac{\pi^2(X_t, w)}{e_t(X_t, w)}(\hat{\mu}_t(X_t, w)-\mu_\infty(X_t, w))(\mu_\infty(X_t, w)-\mu(X_t, w)) 
    \\&-2(\sum_w \pi(X_t, w)(\hat{\mu}_t(X_t, w)-\mu_\infty(X_t, w)))(\sum_w \pi(X_t, w)(\mu_\infty(X_t, w)-\mu(X_t, w))).
\end{split}
\end{equation}
Below in \eqref{eq:mt_upper_deriv} we will prove that $m_t$ is upper bounded as
\begin{equation}
    \label{eq:mt_upper}
    \begin{aligned}
   | m_t(X_t) |
  \lesssim & \sup_{x, w}|\hat{\mu}_t(x, w)-\mu_\infty(x, w)|\sum_{w}\frac{\pi^2(X_t, w)}{e_t(X_t, w)}.
\end{aligned}
\end{equation}
For now, let's take this as fact and show that $Z_T$ and $\EE Z_T$ decompose as claimed in \eqref{eq:zt_num} and \eqref{eq:zt_denom}.

To show that the negligible terms in \eqref{eq:zt_num} are indeed negligible, we consider the ratio between these terms and the dominant term in $Z_T$,
\begin{equation}
\label{eq:negligible_x}
\begin{split}
 & \frac{\sum_{t=1}^{T} \EE[h_t^2 m_t(X_t)| H_{t-1}]}
     { \sum_{t=1}^{T} \EE\Big[h_t^2 \Big\{\sum_w\frac{ C_1(X_t, w)\pi^2(X_t, w)}{e_t(X_t, w)}+C_2\Big\} | H_{t-1}\Big] } \\
     \lesssim &
           \frac{\sum_{t=1}^{T} \sup_{x, w}|\hat{\mu}_t(x, w)-\mu_\infty(x, w)| \EE[h_t^2 \sum_{w}\frac{\pi^2(X_t, w)}{e_t(X_t, w)}  |H_{t-1}]}
     { \sum_{t=1}^{T} \inf_{x,w}\var(Y_t(w)|x)\EE[h_t^2 \sum_w \frac{\pi^2(X_t, w)}{e_t(X_t, w)} | H_{t-1}] } \\
     = &\frac{\sum_{t=1}^{T} \sup_{x, w}|\hat{\mu}_t(x, w)-\mu_\infty(x, w)| }
     { \inf_{x,w}\var(Y_t(w)|x)T }\xrightarrow{a.s}0,
\end{split}
\end{equation}
where in the first inequality we substituted $m_t$ by its upper bound \eqref{eq:mt_upper} in the numerator, and in the denominator we used a lower bound shown below in \eqref{eq:C1C2}. In the equality we used the definition of $h_t$. The limit is due to the fact that $\sup_{x,w} |\hat{\mu}_t(x,w) - \mu_{\infty}(x,w)|$ converges to zero almost surely, and therefore so does $(1/T) \sum_{t} \sup_{x,w} |\hat{\mu}_t(x,w) - \mu_{\infty}(x,w)|$.
\vspace{\baselineskip}

Similarly, to show that the negligible terms in \eqref{eq:zt_denom} are indeed negligible, we consider the following ratio,
\begin{equation}
\begin{aligned}
              \frac{\sum_{t=1}^{T} \EE[h_t^2 m_t(X_t)]}
             { \sum_{t=1}^{T} \EE\Big[h_t^2 \Big\{\sum_w \frac{C_1(X_t, w)\pi^2(X_t, w)}{e_t(X_t, w)}+C_2\Big\}  \Big] } 
             &\lesssim
                   \frac{\sum_{t=1}^{T} \bE\big[\sup_{x, w}|\hat{\mu}_t(x, w)-\mu_\infty(x, w)| h_t^2\EE[ \sum_{w}\frac{\pi^2(X_t, w)}{e_t(X_t, w)} |H_{t-1} ]\big]}
             { \sum_{t=1}^{T} \inf_{x,w}\var(Y_t(w)|x)\EE[h_t^2 \sum_w \frac{\pi^2(X_t, w)}{e_t(X_t, w)}] }\\
            &=\frac{\sum_{t=1}^{T} \bE\big[\sup_{x, w}|\hat{\mu}_t(x, w)-\mu_\infty(x, w)| \big]}
             {  \inf_{x,w}\var(Y_t(w)|x) T }
\end{aligned}
\end{equation}
Since $\hat{\mu}_t, \mu_\infty$ is bounded and $\sup_{x, w}|\hat{\mu}_t(x, w)-\mu_\infty(x, w)|\xrightarrow{a.s}0$ by Assumption \ref{assu:clt_condition}, we have $\sup_{x, w}|\hat{\mu}_t(x, w)-\mu_\infty(x, w)|\xrightarrow{L_1}0$ and thus $  \bE[\sup_{x, w}|\hat{\mu}_t(x, w)-\mu_\infty(x, w)|]\rightarrow 0$; this yields
\begin{equation}
  \frac{\sum_{t=1}^{T} \EE[h_t^2 m_t(X_t)]}
             { \sum_{t=1}^{T} \EE\Big[h_t^2 \sum_w C_1(X_t, w)\frac{\pi^2(X_t, w)}{e_t(X_t, w)} \Big] } \rightarrow 0.
\end{equation}

Finally, to conclude the proof, let's establish some auxiliary results used above. First, a lower bound on the dominant terms in $Z_T$. By expanding the definition of $C_1$ and $C_2$ from the discussion following \eqref{eq:decompose_numerator},
\begin{equation}
    \label{eq:C1C2}
  \begin{aligned}
          \bE\Big[\sum_w \frac{C_1(X_t,w)\pi^2(X_t, w)}{e_t(X_t, w)}\Big] + C_2 
        &=\bE\Big[\sum_w \frac{[\Var{}{Y_t(w)|X_t}\pi^2(X_t, w)}{e_t(X_t, w)} \Big] \\
          &+ \EE \Big[(\mu_\infty(X_t, w)-\mu(X_t, w))^2 \frac{\pi^2(X_t, w)}{e_t(X_t, w)}\Big]\\
          &+ \bE[(\sum_w \pi(X_t, w)\mu(X_t, w)-Q(\pi))^2] \\ 
          &- \bE[(\sum_w \pi(X_t, w)(\mu_\infty(X_t, w)-\mu(X_t, w)))^2 ]
    \end{aligned}
\end{equation}

The third term of this decomposition is non-negative. The second term is larger than the fourth in magnitude, since by Cauchy-Schwartz inequality,
\begin{equation*}
    \begin{aligned}
    \EE[(\sum_w \pi(X_t, w)(\mu_\infty(X_t, w)-\mu(X_t, w)))^2 ]
    &=
    \EE[(\sum_w \frac{\sqrt{e_t(X_t, w)} \pi(X_t, w)}{\sqrt{e_t(X_t, w)}}(\mu_\infty(X_t, w)-\mu(X_t, w)))^2  ] \\
    &\leq
    \EE\left[\sum_w \frac{\pi^2(X_t, w)}{e_t(X_t, w)}(\mu_\infty(X_t, w) - \mu(X_t, w))^2 \left( \sum_w e_t(X_t, w) \right) \right] \\
    &= \EE\left[\sum_w \frac{\pi^2(X_t, w)}{e_t(X_t, w)}(\mu_\infty(X_t, w) - \mu(X_t, w))^2 \right].
    \end{aligned}
\end{equation*}

This implies the last three terms sum to a non-negative number, and hence we have the following lower bound:
\begin{equation}
      \bE\Big[\sum_w \frac{C_1(X_t,w)\pi^2(X_t, w)}{e_t(X_t, w)}\Big] + C_2 
    \geq \bE\Big[\sum_w \frac{[\Var{}{Y_t(w)|X_t}\pi^2(X_t, w)}{e_t(X_t, w)} \Big] \gtrsim \inf_{x,w} \Var{}{Y_t(w)|x}.
\end{equation}

The result above implies that that the sequence $M_T=\Omega(T)$ since
\begin{equation}
\begin{split}
    \label{eq:M_lower}
    M_T &=\sum_{t=1}^T \EE\Big[\sum_w \frac{C_1(X_t, w)\pi^2(X_t, w)}{e_t(X_t, w)} + C_2\Big]\Big/\bE\Big[\sum_w \frac{\pi^2(X_t, w)}{e_t(X_t, w)}\Big] \\
    &\geq  \sum_{t=1}^T \EE\Big[\sum_w \frac{\var(Y_t(w)|X_t)\pi^2(X_t, w)}{e_t(X_t, w)}\Big]\Big/\bE\Big[\sum_w \frac{\pi^2(X_t, w)}{e_t(X_t, w)}\Big]\\
    &\geq \inf_{x,w}\var(Y_t(w)|x) T.
\end{split}
\end{equation}

The upper bound on $m_t$ referred to in \eqref{eq:mt_upper} can be derived as follows,
\begin{equation}
    \label{eq:mt_upper_deriv}
    \begin{split}
         | m_t(X_t) | =& \bigg|
         \sum_{w}\frac{\pi^2(X_t, w)}{e_t(X_t, w)} (\hat{\mu}_t(X_t, w)-\mu_\infty(X_t, w))^2 -(\sum_w \pi(X_t, w)(\hat{\mu}_t(X_t, w)-\mu_\infty(X_t, w)))^2\\
    &+ 2\sum_{w}\frac{\pi^2(X_t, w)}{e_t(X_t, w)}(\hat{\mu}_t(X_t, w)-\mu_\infty(X_t, w))(\mu_\infty(X_t, w)-\mu(X_t, w)) 
    \\&-2(\sum_w \pi(X_t, w)(\hat{\mu}_t(X_t, w)-\mu_\infty(X_t, w)))(\sum_w \pi(X_t, w)(\mu_\infty(X_t, w)-\mu(X_t, w)))
         \bigg|\\
    \leq & \bigg| \sum_{w}\frac{\pi^2(X_t, w)}{e_t(X_t, w)} (\hat{\mu}_t(X_t, w)-\mu_\infty(X_t, w))^2\bigg| + \bigg|(\sum_w \pi(X_t, w)(\hat{\mu}_t(X_t, w)-\mu_\infty(X_t, w)))^2\bigg|\\
    &+2\bigg|\sum_{w}\frac{\pi^2(X_t, w)}{e_t(X_t, w)}(\hat{\mu}_t(X_t, w)-\mu_\infty(X_t, w))(\mu_\infty(X_t, w)-\mu(X_t, w)) \bigg|
    \\&+ 2\bigg|(\sum_w \pi(X_t, w)(\hat{\mu}_t(X_t, w)-\mu_\infty(X_t, w)))(\sum_w \pi(X_t, w)(\mu_\infty(X_t, w)-\mu(X_t, w)))
         \bigg|\\
    \leq & 2\bigg| \sum_{w}\frac{\pi^2(X_t, w)}{e_t(X_t, w)} (\hat{\mu}_t(X_t, w)-\mu_\infty(X_t, w))^2\bigg|\\
    &+2\sup_{x,w}|\mu_\infty(x,w)-\mu(x,w)|\cdot \sup_{x, w}|\hat{\mu}_t(x, w)-\mu_\infty(x, w)| \sum_{w}\frac{\pi^2(X_t, w)}{e_t(X_t, w)}
    \\&+ 2\sup_{x,w}|\mu_\infty(x,w)-\mu(x,w)|\cdot \sup_{x, w}|\hat{\mu}_t(x, w)-\mu_\infty(x, w)|\\
   \leq &\sup_{x, w}|\hat{\mu}_t(x, w)-\mu_\infty(x, w)| \cdot \sum_{w}\frac{\pi^2(X_t, w)}{e_t(X_t, w)} \cdot  \Big\{2+ 4 \sup_{x,w}|\mu_\infty(x,w)-\mu(x,w)|\Big\}  \\
  \lesssim & \sup_{x, w}|\hat{\mu}_t(x, w)-\mu_\infty(x, w)|\sum_{w}\frac{\pi^2(X_t, w)}{e_t(X_t, w)}.
    \end{split}
\end{equation}

\clearpage

\begin{remark}
By Proposition~\ref{prop:aipw_var}, $\sum_{t=1}^T \condE{}{\xi_{T,t}^2}{H_{t-1}}\leq \frac{U}{L}$, we thus have $\sum_{t=1}^T \condE{}{\xi_{T,t}^2}{H_{t-1}}\xrightarrow{L^p}1$ for any $p\geq 1$.
\end{remark}

\subsubsection{Moment decay.}
\label{sec:moment-decay-noncontextual}
Now consider the conditional fourth moment of $\xi_{T,t}$, we have
\begin{align}
         &\bE\big[\big(\widehat{\Gamma}_t(X_t, \pi)  - Q(\pi)\big)^4| H_{t-1}, X_t\big]\nonumber\\
        = & \bE\big[\big(\widehat{\Gamma}_t(X_t, \pi) -\sum_w \pi(X_t, w)\mu(X_t, w)+  \sum_w \pi(X_t, w)\mu(X_t, w)- Q(\pi)\Big)^4|H_{t-1}, X_t\Big]\nonumber\\
        = & \bE\big[\big(\widehat{\Gamma}_t(X_t, \pi) -  \sum_w \pi(X_t, w)\mu(X_t, w)\big)^4|H_{t-1}, X_t\big]\label{eq:xi_fourth_i} \\ 
       & + 4\big(\sum_w \pi(X_t, w)\mu(X_t, w)- Q(\pi)\big)\bE\big[\big(\widehat{\Gamma}_t(X_t, \pi) -  \sum_w \pi(X_t, w)\mu(X_t, w)\big)^3|H_{t-1}, X_t\big]\label{eq:xi_fourth_ii}\\
        &+ 6\big(\sum_w \pi(X_t, w)\mu(X_t, w)- Q(\pi)\big)^2\bE\big[\big(\widehat{\Gamma}_t(X_t, \pi) -  \sum_w \pi(X_t, w)\mu(X_t, w)\big)^2|H_{t-1}, X_t\big]\label{eq:xi_fourth_iii}\\
        &+ 4\big(\sum_w \pi(X_t, w)\mu(X_t, w)- Q(\pi)\big)^3\bE\big[\big(\widehat{\Gamma}_t(X_t, \pi) -  \sum_w \pi(X_t, w)\mu(X_t, w)\big)|H_{t-1}, X_t\big]\label{eq:xi_fourth_iv}\\
        & + \big(\sum_w \pi(X_t, w)\mu(X_t, w)- Q(\pi)\big)^4\label{eq:xi_fourth_v}
\end{align}
We analyze each term respectively. 
\begin{enumerate}[(i)]
    \item For term \eqref{eq:xi_fourth_i}, we have
   \[
     \begin{split}
  \eqref{eq:xi_fourth_i}   =~ & 
    \sum_w e_t(X_t, w)\Big(\frac{\pi(X_t, w)}{e_t(X_t, w)}(Y_t(w)-\hat{\mu}_t(X_t, w)) +  \sum_{w'}\pi(X_t, w')(\hat{\mu}_t(X_t, w')-\mu(X_t, w')) \Big)^4
  \\
     \leq~ & c_1\Big(1 + \sum_w \frac{\pi^2(X_t, w)}{e_t(X_t, w)} 
     + \sum_w \frac{\pi^3(X_t, w)}{e_t^2(X_t, w)}
     + \sum_w \frac{\pi^4(X_t, w)}{e_t^3(X_t, w)}\Big),
     \end{split}  
    \]
    for some constant $c_1$.
    \item Similarly, for term \eqref{eq:xi_fourth_ii}, there exists a constant $c_2$ such that
    \[
          \eqref{eq:xi_fourth_ii}
     \leq~  c_2\Big(1 +1 + \sum_w \frac{\pi^2(X_t, w)}{e_t(X_t, w)} 
     + \sum_w \frac{\pi^3(X_t, w)}{e_t^2(X_t, w)} \Big).
     \]
    \item For term \eqref{eq:xi_fourth_iii}, by Proposition~\ref{prop:aipw_var}, there exists $U>0$ such that
    \[
    \bE\big[\big(\widehat{\Gamma}_t(X_t, \pi) -  \sum_w \pi(X_t, w)\mu(X_t, w)\big)^2|H_{t-1}, X_t\big] = \mbox{Var}(\widehat{\Gamma}_t|H_{t-1}, X_t) \leq U \sum_w \frac{\pi^2(X_t, w)}{e_t(X_t, w)} .
    \]
    \item By Proposition~\ref{prop:aipw_unbiased}, term $\eqref{eq:xi_fourth_iv}=0$.
    \item Term \eqref{eq:xi_fourth_v} is bounded by a constant by Assumption~\ref{assu:clt_condition}.
\end{enumerate}
Thus we have
\begin{align*}
        \sum_{t=1}^T \condE{}{\xi_{T,t}^4}{H_{t-1}} &= \frac{\sum_{t=1}^T h_t^4 \bE\big[\big(\widehat{\Gamma}_t(X_t, \pi)  - Q( \pi)\big)^4| H_{t-1}\big]}{\bE\big[h_t^2  \big(\widehat{\Gamma}_t(X_t,  \pi)  - Q( \pi)\big)^2\big]^2}\\
        &\lesssim  \frac{\sum_{t=1}^T h_t^4  \bE\big[ 1 + \sum_w \frac{\pi^2(X_t, w)}{e_t(X_t, w)} 
     + \sum_w \frac{\pi^3(X_t, w)}{e_t^2(X_t, w)}
     + \sum_w \frac{\pi^4(X_t, w)}{e_t^3(X_t, w)} \big] }{
        \big(\sum_{t=1}^T \bE\big[h_t^2\sum_w \frac{\pi^2(X_t, w)}{e_t(X_t, w)}\big]\big)^2
        }\\
        &\stackrel{\mbox{(i)}}{\leq} \frac{\sum_{t=1}^T h_t^4  \bE\big[ \sum_w \frac{\pi^4(X_t, w)}{e_t^3(X_t, w)} |H_{t-1}\big] }{
        \big(\sum_{t=1}^T  \bE\big[ h_t^2\sum_w \frac{\pi^2(X_t, w)}{e_t(X_t, w)}\big]\big)^2}\\
        &\stackrel{\mbox{(ii)}}{=} \frac{\sum_{t=1}^T 
       \bE\big[  \sum_w \frac{\pi^4(X_t, w)}{e_t^3(X_t, w)} |H_{t-1}\big]
       \big/
       \bE\big[\sum_w \frac{\pi^2(X_t, w)}{e_t(X_t, w)}|H_{t-1}\big]^2
        }{T^2}\\
        &\stackrel{\mbox{(iii)}}{
        \lesssim}  \frac{\sum_{t=1}^T 
       \bE\big[  \sum_w \frac{\pi^2(X_t, w)}{e_t(X_t, w)} t^{2\alpha} |H_{t-1}\big]
       \big/
       \bE\big[\sum_w \frac{\pi^2(X_t, w)}{e_t(X_t, w)}|H_{t-1}\big]
        }{T^2}=\frac{\sum_{t=1}^T t^{2\alpha}  }{T^2}=O(T^{2\alpha-1})\xrightarrow 0 , 
\end{align*}
where in (i), we use Lemma \ref{lemma: pi_e}
; in (ii), we use the definition of non-contextual \StableVar weights that $h_t=\big(\bE\big[ \sum_w \frac{\pi^2(X_t, w)}{e_t(X_t, w)}|H_{t-1} \big]\big)^{-1/2}$; in (iii), we use the condition that $1\geq e_t\geq C t^{-\alpha}$.

\begin{lemma}
\label{lemma: pi_e}
For any $x,w, t$, we have 
\begin{equation}
    1\leq \sum_w \frac{\pi^2(X_t, w)}{e_t(X_t, w)}\leq \sum_w \frac{\pi^3(X_t, w)}{e_t^2(X_t, w)}\leq \sum_w \frac{\pi^4(X_t, w)}{e_t^3(X_t, w)}.
\end{equation}
\end{lemma}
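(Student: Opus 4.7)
\textbf{Proof plan for Lemma~\ref{lemma: pi_e}.}
Suppressing the argument $X_t$, write $S_k := \sum_w \pi^k(w)/e_t^{k-1}(w)$ for $k=1,2,3,4$, so the claim becomes $1 = S_1 \leq S_2 \leq S_3 \leq S_4$ (noting that $S_1 = \sum_w \pi(w) = 1$). Each inequality will follow from a single application of the Cauchy--Schwarz inequality, with the exponents split to telescope the chain.

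First, for $1 \leq S_2$, apply Cauchy--Schwarz to the factorization $\pi(w) = (\pi(w)/\sqrt{e_t(w)})\cdot \sqrt{e_t(w)}$:
\begin{equation*}
1 = \Big(\sum_w \pi(w)\Big)^2 \leq \Big(\sum_w \tfrac{\pi^2(w)}{e_t(w)}\Big)\Big(\sum_w e_t(w)\Big) = S_2,
\end{equation*}
using $\sum_w e_t(w) = 1$.

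Next, for $S_2 \leq S_3$, factor $\pi^2(w)/e_t(w) = (\pi^{3/2}(w)/e_t(w))\cdot \pi^{1/2}(w)$ and apply Cauchy--Schwarz:
\begin{equation*}
S_2^2 \leq \Big(\sum_w \tfrac{\pi^3(w)}{e_t^2(w)}\Big)\Big(\sum_w \pi(w)\Big) = S_3.
\end{equation*}
Combined with $S_2 \geq 1$ from the first step, this gives $S_2 \leq S_2^2 \leq S_3$.

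Finally, for $S_3 \leq S_4$, factor $\pi^3(w)/e_t^2(w) = (\pi^2(w)/e_t^{3/2}(w))\cdot(\pi(w)/e_t^{1/2}(w))$ and apply Cauchy--Schwarz:
\begin{equation*}
S_3^2 \leq \Big(\sum_w \tfrac{\pi^4(w)}{e_t^3(w)}\Big)\Big(\sum_w \tfrac{\pi^2(w)}{e_t(w)}\Big) = S_4 \, S_2 \leq S_4\, S_3,
\end{equation*}
where the last inequality uses the previous step. Dividing by $S_3 > 0$ yields $S_3 \leq S_4$. There is no real obstacle here; the only nontrivial choice is the exponent split at each stage, and the underlying fact being exploited is the log-convexity $S_k^2 \leq S_{k-1} S_{k+1}$ of the sequence $\{S_k\}$ together with the base point $S_1 = 1$.
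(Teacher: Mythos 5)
Your proof is correct and follows essentially the same route as the paper: repeated Cauchy--Schwarz applications exploiting $\sum_w e_t(X_t,w)=1$ and $\sum_w \pi(X_t,w)=1$, i.e.\ the log-convexity $S_k^2 \le S_{k-1}S_{k+1}$ anchored at $S_1=1$. The only (immaterial) difference is in the last step, where the paper first derives $S_4 \ge S_2$ and bounds $S_4^2 \ge S_4 S_2 \ge S_3^2$, whereas you reuse the already-established $S_2 \le S_3$ to get $S_3^2 \le S_4 S_2 \le S_4 S_3$.
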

\begin{proof}
    We have 
    \begin{align*}
        \sum_w \frac{\pi^2(X_t, w)}{e_t(X_t, w)} = \sum_w \frac{\pi^2(X_t, w)}{e_t(X_t, w)} \cdot \sum_w e_t(X_t, w) \geq 1
    \end{align*}
    by Cauchy-Schwartz inequality. Similarly,
      \begin{align*}
        \sum_w \frac{\pi^3(X_t, w)}{e_t^2(X_t, w)} = \sum_w \frac{\pi^3(X_t, w)}{e_t^2(X_t, w)} \cdot \sum_w \pi_t(X_t, w) \geq  \sum_w \Big(\frac{\pi^2(X_t, w)}{e_t(X_t, w)}\Big)^2\geq\frac{\pi^2(X_t, w)}{e_t(X_t, w)} .
    \end{align*}
    Also,
    \begin{align*}
         \sum_w \frac{\pi^4(X_t, w)}{e_t^3(X_t, w)} = \sum_w \frac{\pi^4(X_t, w)}{e_t^3(X_t, w)}\cdot \sum_w e_t(X_t, w) \geq  \Big(  \sum_w \frac{\pi^2(X_t, w)}{e_t(X_t, w)}  \Big)^2\geq \sum_w \frac{\pi^2(X_t, w)}{e_t(X_t, w)}.
    \end{align*}
    Lastly, 
    \begin{equation}
       \Big(\sum_w \frac{\pi^4(X_t, w)}{e_t^3(X_t, w)}\Big)^2 \geq \Big(\sum_w \frac{\pi^4(X_t, w)}{e_t^3(X_t, w)}\Big)\Big(\sum_w \frac{\pi^2(X_t, w)}{e_t(X_t, w)}\Big)\geq  \Big(\sum_w \frac{\pi^3(X_t, w)}{e_t^2(X_t, w)}\Big)^2.
    \end{equation}
\end{proof}

Collectively, $\xi_{T,t}$ satisfies the two  conditions for martingale central limit theorem in Proposition~\ref{prop:martingale_clt}, and thus we have $\sum_{t=1}^T \xi_{T,t}\xrightarrow{d}\N(0,1)$.

\subsection{CLT of \texorpdfstring{$\widehat{Q}^{NC}_T$}{Qhat-NC}}
\label{appendix:clt_nc_final}
We now establish Theorem~\ref{thm:non_contextual}. We have
\begin{equation}
\label{eq:t-stat-nc}
    \begin{split}
        \frac{\widehat{Q}^{NC}_T(\pi)  - Q(\pi)}{\sqrt{\widehat{V}_T^{NC}(\pi) }} = \frac{
        \sum_{t=1}^T h_t(\widehat{\Gamma}_t(X_t, \pi)  - Q(\pi))
        }{\sqrt{ \sum_{t=1}^T h_t^2(\widehat{\Gamma}_t(X_t, \pi)  - \widehat{Q}_T^{NC}(\pi)  )^2}} = \sum_{t=1}^T \xi_{T,t} \cdot \frac{\sqrt{\bE[  \sum_{t=1}^T h_t^2(\widehat{\Gamma}_t(X_t, \pi)  - Q(\pi) )^2] } }{\sqrt{   \sum_{t=1}^T h^2_t(\widehat{\Gamma}_t(X_t, \pi)  -  \widehat{Q}_T^{NC}(\pi) )^2} }.
    \end{split}
\end{equation}
We have shown that $\sum_{t=1}^T \xi_{T,t}$ is asymptotically standard normal, so by Slutsky's theorem it suffices to show that the rightmost factor in \eqref{eq:t-stat-nc} converges to $1$ in probability. We have
\begin{equation*}
    \begin{split}
        \frac{ \sum_{t=1}^T h^2_t(\widehat{\Gamma}_t(X_t, \pi)  -  \widehat{Q}_T^{NC}(\pi) )^2  }{
        \bE[ \sum_{t=1}^T h^2_t(\widehat{\Gamma}_t(X_t, \pi)  - Q(\pi))^2]
        } = \underbracket[0.4pt]{\frac{ \sum_{t=1}^T h^2_t(\widehat{\Gamma}_t(X_t, \pi)  -  Q(\pi))^2  }{
        \bE[ \sum_{t=1}^T h^2_t(\widehat{\Gamma}_t(X_t, \pi)  - Q(\pi))^2]
        }}_{\mbox{(a)}} 
        + \underbracket[0.4pt]{ \frac{ \sum_{t=1}^T h^2_t(Q(\pi)-  \widehat{Q}_T^{NC}(\pi) )^2  }{
        \bE[ \sum_{t=1}^T h^2_t(\widehat{\Gamma}_t(X_t, \pi)  - Q(\pi))^2]
        } }_{\mbox{(b)}}
        \\+ \underbracket[0.4pt]{2~\frac{ \sum_{t=1}^T h^2_t(\widehat{\Gamma}_t(X_t, \pi)  - Q(\pi))(Q(\pi)- \widehat{Q}_T^{NC}(\pi) )  }{
        \bE[ \sum_{t=1}^T h^2_t(\widehat{\Gamma}_t(X_t, \pi)  - Q(\pi))^2]
        }
        }_{\mbox{(c)}}.
    \end{split}
\end{equation*}
We shall show that term(a)$\xrightarrow{p}1$, term(b)$\xrightarrow{p}0$, and term(c)$\xrightarrow{p}0$, which concludes the proof. 
\begin{itemize}
    \item \emph{Term (a).}
We have 
\begin{align*}
    \frac{ \sum_{t=1}^T h^2_t(\widehat{\Gamma}_t(X_t, \pi)  -  Q(\pi))^2  }{
        \bE[ \sum_{t=1}^T h^2_t(\widehat{\Gamma}_t(X_t, \pi)  - Q(\pi))^2]
        } = \frac{ \sum_{t=1}^T h^2_t\big((\widehat{\Gamma}_t(X_t, \pi)  -  Q(\pi))^2 - \bE[h^2_t(\widehat{\Gamma}_t(X_t, \pi)  -  Q(\pi))^2|H_{t-1}] \big) } {
        \bE[ \sum_{t=1}^T h^2_t(\widehat{\Gamma}_t(X_t, \pi)  - Q(\pi))^2]
        } 
        \\+ \frac{ \sum_{t=1}^T h^2_t\bE[h^2_t(\widehat{\Gamma}_t(X_t, \pi)  -  Q(\pi))^2|H_{t-1} ]}{
        \bE[ \sum_{t=1}^T h^2_t(\widehat{\Gamma}_t(X_t, \pi)  - Q(\pi))^2]
        }
\end{align*}
The first term converges to $0$ (to be shown shortly), and the second term is $\sum_{t=1}^T \bE[\xi_{T,t}^2|H_{t-1}]\xrightarrow{p}1$. The first term is the sum of a martingale difference sequence, and  we have
\begin{align*}
 & \bE\Big[ \Big| \frac{ \sum_{t=1}^T h^2_t\big((\widehat{\Gamma}_t(X_t, \pi)  -  Q(\pi))^2 - \bE[h^2_t(\widehat{\Gamma}_t(X_t, \pi)  -  Q(\pi))^2|H_{t-1}] \big) } {
        \bE[ \sum_{t=1}^T h^2_t(\widehat{\Gamma}_t(X_t, \pi)  - Q(\pi))^2]
        } \Big|^2\Big]\\
         =~ &\bE\Big[\sum_{t=1}^T \bE\Big[ 
         \frac{ h^4_t\big((\widehat{\Gamma}_t(X_t, \pi)  -  Q(\pi))^4 - \bE[h^2_t(\widehat{\Gamma}_t(X_t, \pi)  -  Q(\pi))^2|H_{t-1}] \big)^2 }{
        \bE[ \sum_{t=1}^T h^2_t(\widehat{\Gamma}_t(X_t, \pi)  - Q(\pi))^2]^2} 
        \Big|H_{t-1}\Big]\Big]\\
        \leq~ &\bE\Big[\sum_{t=1}^T \bE\Big[ 
         \frac{ h^4_t(\widehat{\Gamma}_t(X_t, \pi)  -  Q(\pi))^4  }{
        \bE[ \sum_{t=1}^T h^2_t(\widehat{\Gamma}_t(X_t, \pi)  - Q(\pi))^2]^2} 
        \Big|H_{t-1}\Big]\Big] = \bE\Big[\sum_{t=1}^T \bE\big[\xi_{T,t}^4 |H_{t-1}\big]\Big] = O(T^{2\alpha-1})\rightarrow 0.\\
\end{align*}
In the last step, we've used a fourth moment bound from Section~\ref{sec:moment-decay-noncontextual}.

    \item \emph{Term (b).} This term vanishes because $\widehat{Q}_T^{NC}\xrightarrow{p}Q$. The ratio
\begin{align*}
    \frac{ (Q(\pi)-  \widehat{Q}_T^{NC}(\pi) )^2 \sum_{t=1}^T h^2_t  }{
        \bE[ \sum_{t=1}^T h^2_t(\widehat{\Gamma}_t(X_t, \pi)  - Q(\pi))^2]
        }
\end{align*}
goes to zero because (i) the squared \StableVar weights  $h_t^2 = 1/\bE[{\sum_{w}\frac{\pi^2(X_t, w)}{e_t(X_t, w)}}|H_{t-1}]$ are each less than one and their sum, therefore, is less than $T$
and (ii) each term in the denominator is bounded away from zero, as \eqref{eq:gamma_condvar} implies that
the inverse of $h_t^2$ is on the order of $\bE[(\widehat{\Gamma}_t(X_t, \pi)  - Q(\pi))^2|H_{t-1}]$, so the denominator itself is $\Omega(T)$.

    \item \emph{Term (c).} This term also vanishes by  Cauchy-Schwartz inequality. We have 
\begin{align*}
    & \Big(\frac{ \sum_{t=1}^T h^2_t(\widehat{\Gamma}_t(X_t, \pi)  - Q(\pi))(Q(\pi)- \widehat{Q}_T^{NC}(\pi) )  }{
        \bE[ \sum_{t=1}^T h^2_t(\widehat{\Gamma}_t(X_t, \pi)  - Q(\pi))^2]
        }\Big)^2 \\\leq &\frac{ \sum_{t=1}^T h^2_t(\widehat{\Gamma}_t(X_t, \pi)  - Q(\pi))^2  }{
        \bE[ \sum_{t=1}^T h^2_t(\widehat{\Gamma}_t(X_t, \pi)  - Q(\pi))^2]
        }
        \times \frac{ \sum_{t=1}^T h^2_t(Q(\pi)- \widehat{Q}_T^{NC}(\pi) )^2  }{
        \bE[ \sum_{t=1}^T h^2_t(\widehat{\Gamma}_t(X_t, \pi)  - Q(\pi))^2]
        }\xrightarrow{p}0.
\end{align*}
\end{itemize}

\section{Limit theorems of contextual weighting}
\label{sec:proof_contextual_clt}

In this section, we establish Theorem~\ref{thm:contextual}. For a target policy $\pi$, the  policy value is $Q(\pi)=\bE[\sum_w\pi(x,w)\mu(x,w)]$.   Our goal is to show that, with contextual \StableVar weights $h_t(x)=1/\sqrt{\sum_w\frac{\pi^2(x,w)}{e_t(x,w)}}$, the estimate $\widehat{Q}_T^C$ is consistent and asymptotically normal. In what follows, we shall use $P_x=\p(X=x)$ and $Q_x=Q(x,\pi)=\sum_w\pi(x,w)\mu(x,w)$ for notational convenience.

\vspace{\baselineskip}

The following lemma characterizes the convergence of  \StableVar weights for each context and each pair of contexts, which be will invoked multiple times throughout the proof. 
\begin{lemma}
\label{lemma:weight_convergence}
Suppose that for any $x, w$, almost surely and in $L_1$,
\begin{equation}
    \begin{aligned}
    \frac{\bE[e_t^{-1}(x,w)]}{e_t^{-1}(x,w)}\to 1 \quad \mbox{ and } \quad \frac{\bE[e_t^{-1}(x,w) e_t^{-1}(x',w')]}{e_t^{-1}(x,w) e_t^{-1}(x',w')}\to 1
    \end{aligned}
\end{equation} Given a policy $\pi$, for any context $x$, its \StableVar weights $h_t(x) =1/ \sqrt{\sum_{w}\frac{\pi^2(x,w)}{e_t(x,w)}} $ satisfies
\begin{equation}
\label{eq:ht_convergence}
    \frac{\sum_{t=1}^T h_t(x)}{\bE\big[\sum_{t=1}^T h_t(x)\big]}\xrightarrow{p}1 
\end{equation}
and
\begin{equation}
\label{eq:ht_cross_convergence}
    \frac{\sum_{t=1}^T h_t(x)h_t(x') }{\bE\big[\sum_{t=1}^Th_t(x)h_t(x')\big]}\xrightarrow{p}1.
\end{equation}

\end{lemma}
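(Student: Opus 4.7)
The plan is to reduce each claim to an application of Proposition~\ref{prop:weighted_convergence}. Writing $v_t(x) := \sum_w \pi^2(x,w)/e_t(x,w)$ so that $h_t(x) = v_t(x)^{-1/2}$, I would first establish the per-$t$ almost-sure convergence $h_t(x)/\bE[h_t(x)] \to 1$, and then invoke Proposition~\ref{prop:weighted_convergence} with summands $x_t := h_t(x)/\bE[h_t(x)] - 1$ and weights $a_{T,t} := \bE[h_t(x)]/\sum_{s=1}^T \bE[h_s(x)]$. These weights are nonnegative, sum to one, and satisfy $\max_t a_{T,t} \to 0$: indeed $h_t(x)\le 1$ (by Cauchy--Schwarz, $v_t(x)\cdot \sum_w e_t(x,w) \ge (\sum_w\pi(x,w))^2 = 1$), and Assumption~\ref{assu:clt_condition}(b) yields the deterministic lower bound $h_t(x) \ge C' t^{-\alpha/2}$, whose partial sum diverges since $\alpha<1/2$. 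The conclusion~\eqref{eq:ht_convergence} then reads $\sum_t a_{T,t} x_t \xrightarrow{p} 0$. The pair claim~\eqref{eq:ht_cross_convergence} follows by the identical template with $H_t := h_t(x)h_t(x')$ in place of $h_t(x)$; the lower bound $H_t \ge (C')^2 t^{-\alpha}$ still yields divergent partial sums when $\alpha<1$.

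The substance is the per-$t$ convergence. I would first express $\bE[v_t(x)]/v_t(x)$ as the random convex combination
\begin{equation*}
  \frac{\bE[v_t(x)]}{v_t(x)} \;=\; \sum_w \frac{\pi^2(x,w)\,e_t^{-1}(x,w)}{v_t(x)} \cdot \frac{\bE[e_t^{-1}(x,w)]}{e_t^{-1}(x,w)},
\end{equation*}
whose outer weights are nonnegative and sum to one, so that $|\bE[v_t(x)]/v_t(x) - 1| \le \sup_{x,w}\bigl|\bE[e_t^{-1}(x,w)]/e_t^{-1}(x,w) - 1\bigr|$. The hypothesis~\eqref{eq:e_pair_convergence} then gives $\bE[v_t(x)]/v_t(x) \to 1$ almost surely and in $L^1$. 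Taking square roots, $h_t(x)\sqrt{\bE[v_t(x)]} = \sqrt{\bE[v_t(x)]/v_t(x)} \to 1$ almost surely.

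The delicate step is establishing the deterministic companion $\bE[h_t(x)]\sqrt{\bE[v_t(x)]} \to 1$, i.e.\ $\bE\!\left[\sqrt{\bE[v_t(x)]/v_t(x)}\right] \to 1$, since a square root does not commute with expectation. Jensen's inequality supplies the upper bound
\begin{equation*}
  \bE\!\left[\sqrt{\bE[v_t(x)]/v_t(x)}\right] \;\le\; \sqrt{\bE\!\left[\bE[v_t(x)]/v_t(x)\right]} \;=\; \sqrt{\bE[v_t(x)]\,\bE[1/v_t(x)]} \;\to\; 1,
\end{equation*}
where the final limit uses the $L^1$ convergence of $\bE[v_t(x)]/v_t(x)$ established above, while Fatou's lemma applied to the a.s.\ limit $1$ gives the matching $\liminf \ge 1$. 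Dividing the a.s.\ convergent numerator $h_t(x)\sqrt{\bE[v_t(x)]}$ by this deterministic convergent denominator yields $h_t(x)/\bE[h_t(x)] \to 1$ almost surely, completing step (i). The pair case is obtained by replacing $v_t(x)$ with $V_t := v_t(x)v_t(x')$ throughout and invoking the second half of~\eqref{eq:e_pair_convergence} in the convex combination identity. I expect this Jensen-plus-Fatou transfer through the square root to be the only nontrivial point; everything else is bookkeeping with convex combinations and the weighted-convergence lemma.
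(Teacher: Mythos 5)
Your proposal is correct and follows essentially the same route as the paper's proof: both hinge on showing, with $v_t(x):=\sum_w \pi^2(x,w)/e_t(x,w)$, that $\sqrt{\bE[v_t(x)]/v_t(x)}\to 1$ almost surely and in expectation via a convex-combination bound in terms of $\bE[e_t^{-1}(x,w)]/e_t^{-1}(x,w)$, and then aggregating over $t$ with the weighted-convergence lemmas, using $h_t(x)\le 1$ and $h_t(x)\gtrsim t^{-\alpha/2}$ to control the weights. The only cosmetic differences are that you normalize per-$t$ by $\bE[h_t(x)]$ and apply Proposition~\ref{prop:weighted_convergence} once, whereas the paper compares numerator and denominator separately to the common reference sequence $\sum_{t=1}^T \bE[v_t(x)]^{-1/2}$ (invoking the deterministic variant, Proposition~\ref{prop:weighted_convergence_2}, for the denominator), and that you pass the expectation through the square root by Jensen plus Fatou where the paper uses the elementary bound $|\sqrt{z}-1|\le|z-1|$ to get $L_1$ convergence directly.
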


\begin{proof}
   Fix a context $x$, we want to show that \eqref{eq:ht_convergence}. This holds if the numerator can be written as 
   \begin{equation}
       \label{eq:ht_num} 
       \sum_{t=1}^T h_t(x) = o_p\Big(\sum_{t=1}^T \bE\Big[\sum_{w}\frac{\pi^2(x,w)}{e_t(x,w)}\Big]^{-1/2}\Big) + \sum_{t=1}^T \bE\Big[\sum_{w}\frac{\pi^2(x,w)}{e_t(x,w)}\Big]^{-1/2},
   \end{equation}
   and the denominator can be written as 
   \begin{equation}
       \label{eq:ht_denom} 
       \sum_{t=1}^T \bE[h_t(x)] = o\Big(\sum_{t=1}^T \bE\Big[\sum_{w}\frac{\pi^2(x,w)}{e_t(x,w)}\Big]^{-1/2}\Big) + \sum_{t=1}^T \bE\Big[\sum_{w}\frac{\pi^2(x,w)}{e_t(x,w)}\Big]^{-1/2}.
   \end{equation}
We will show this is the case. To show  \eqref{eq:ht_num}, we have 
 \begin{equation}
     \begin{split}
       \sum_{t=1}^T h_t(x) - \sum_{t=1}^T \bE\Big[\sum_{w}\frac{\pi^2(x,w)}{e_t(x,w)}\Big]^{-1/2} & =     \sum_{t=1}^T \Big\{\Big(\sum_{w}\frac{\pi^2(x,w)}{e_t(x,w)}\Big)^{-1/2}
              - \bE\Big[\sum_{w}\frac{\pi^2(x,w)}{e_t(x,w)}\Big]^{-1/2}\Big\} \\
              & =    \sum_{t=1}^T \bigg( \sqrt{\frac{\bE [ \sum_{w}\frac{\pi^2(x,w)}{e_t(x,w)}]}{\sum_{w}\frac{\pi^2(x,w)}{e_t(x,w)}}}
              -1 \bigg)\bE\Big[\sum_{w}\frac{\pi^2(x,w)}{e_t(x,w)}\Big]^{-1/2}.
     \end{split}
 \end{equation}
 This parenthesized expression converges to zero. This follows, via the continuous mapping theorem, from the almost-sure convergence of $\bE[e_t^{-1}(x,w)]/e_t^{-1}(x,w)$ to one:
              \begin{equation}
              \label{eq:ht_as_converge}
              \begin{split}
                  \bigg|\frac{\bE [ \sum_{w}\frac{\pi^2(x,w)}{e_t(x,w)}]}{\sum_{w}\frac{\pi^2(x,w)}{e_t(x,w)}} - 1   \bigg|\leq \bigg|\sum_{w} \frac{\bE [ \frac{\pi^2(x,w)}{e_t(x,w)}] - \frac{\pi^2(x,w)}{e_t(x,w)}}{\frac{\pi^2(x,w)}{e_t(x,w)}}   \bigg| \leq \sum_w  \bigg|\frac{\bE [ \frac{\pi^2(x,w)}{e_t(x,w)}] - \frac{\pi^2(x,w)}{e_t(x,w)}}{\frac{\pi^2(x,w)}{e_t(x,w)}}   \bigg| = \sum_w  \bigg| 
                   \frac{\bE[e_t^{-1}(x,w)]}{e_t^{-1}(x,w)} - 1
                   \bigg| \xrightarrow{a.s.}0.
              \end{split}
              \end{equation}

 We have 
    \begin{equation}
    \label{eq:ht_o1}
    \begin{split}
         &\frac{  \sum_{t=1}^T h_t(x) - \sum_{t=1}^T \bE[\sum_{w}\frac{\pi^2(x,w)}{e_t(x,w)}]^{-1/2}}{\sum_{t=1}^T \bE[\sum_{w}\frac{\pi^2(x,w)}{e_t(x,w)}]^{-1/2}} \\
        = &  \frac{   \sum_{t=1}^T ( \sqrt{\frac{\bE [ \sum_{w}\frac{\pi^2(x,w)}{e_t(x,w)}]}{\sum_{w}\frac{\pi^2(x,w)}{e_t(x,w)}}}
              -1 )\bE\Big[\sum_{w}\frac{\pi^2(x,w)}{e_t(x,w)}\Big]^{-1/2} }{\sum_{t=1}^T \bE[\sum_{w}\frac{\pi^2(x,w)}{e_t(x,w)}]^{-1/2}},
    \end{split}
    \end{equation}
    where in the numerator,  $\big|\frac{\bE [ \sum_{w}\frac{\pi^2(x,w)}{e_t(x,w)}]}{\sum_{w}\frac{\pi^2(x,w)}{e_t(x,w)}} - 1   \big|$ goes to zero almost surely;
    and in the denominator,  $ \sum_{t=1}^T \bE[\sum_{w}\frac{\pi^2(x,w)}{e_t(x,w)}]^{-1/2} \geq  \sum_{t=1}^T \bE[\sum_{w}\pi^2(x,w)]^{-1/2} t^{-\frac{\alpha}{2}}$ goes to infinity and thus\\ $\max_{1\leq t\leq T}\bE[\sum_{w}\frac{\pi^2(x,w)}{e_t(x,w)}]^{-1/2} /\sum_{t=1}^T\bE[\sum_{w}\frac{\pi^2(x,w)}{e_t(x,w)}]^{-1/2}\rightarrow 0$. By Proposition \ref{prop:weighted_convergence}, we have  \eqref{eq:ht_o1} goes to zero  in probability, proving \eqref{eq:ht_num}.

\vspace{\baselineskip}   
 To show  \eqref{eq:ht_denom}, condition  $ \frac{\bE[e_t^{-1}(x,w)]}{e_t^{-1}(x,w)}\xrightarrow{L_1} 1$ yields $\sqrt{\frac{\bE [ \sum_{w}\frac{\pi^2(x,w)}{e_t(x,w)}]}{\sum_{w}\frac{\pi^2(x,w)}{e_t(x,w)}}}-1
              \xrightarrow{L_1}0$, which follows from the below argument,
    \begin{equation}
        \begin{split}
            \bE\Big[\Big|
            \sqrt{\frac{\bE [ \sum_{w}\frac{\pi^2(x,w)}{e_t(x,w)}]}{\sum_{w}\frac{\pi^2(x,w)}{e_t(x,w)}}}-1
            \Big|\Big] & =   \bE\Big[\Big|
          \Big(\frac{\bE [ \sum_{w}\frac{\pi^2(x,w)}{e_t(x,w)}]}{\sum_{w}\frac{\pi^2(x,w)}{e_t(x,w)}}-1\Big)\Big/  \sqrt{\frac{\bE [ \sum_{w}\frac{\pi^2(x,w)}{e_t(x,w)}]}{\sum_{w}\frac{\pi^2(x,w)}{e_t(x,w)}}}+1
            \Big|\Big]\\
            &\leq \bE\Big[\Big|
          \frac{\bE [ \sum_{w}\frac{\pi^2(x,w)}{e_t(x,w)}]}{\sum_{w}\frac{\pi^2(x,w)}{e_t(x,w)}}-1
            \Big|\Big] \leq  \bE\Big[\Big|
            \sum_w \frac{\bE[\frac{\pi^2(x,w)}{e_t(x,w)}]}{\frac{\pi^2(x,w)}{e_t(x,w)}}-1
                 \Big|\Big] \\
        &\leq \sum_w  \bE\Big[\Big|
        \frac{\bE[\frac{\pi^2(x,w)}{e_t(x,w)}]}{\frac{\pi^2(x,w)}{e_t(x,w)}}-1
         \Big|\Big] =  \sum_w  \bE\Big[\Big|
        \frac{\bE[e^{-1}_t(x,w)]}{e_t^{-1}(x,w)}-1
         \Big|\Big]\rightarrow 0
        \end{split}
    \end{equation}

     Therefore,
    \begin{equation}
    \label{eq:ht_denom_decompose}
    \begin{aligned}
               \frac{ \bigg|\bE\Big[ \sum_{t=1}^T h_t(x) - \sum_{t=1}^T \bE\Big[\sum_{w}\frac{\pi^2(x,w)}{e_t(x,w)}\Big]^{-1/2} \Big]\bigg|}{\sum_{t=1}^T \bE\Big[\sum_{w}\frac{\pi^2(x,w)}{e_t(x,w)}\Big]^{-1/2} }&\leq \frac{ \bE\Big[\Big| \sum_{t=1}^T h_t(x) - \sum_{t=1}^T \bE\Big[\sum_{w}\frac{\pi^2(x,w)}{e_t(x,w)}\Big]^{-1/2} \Big|\Big]}{\sum_{t=1}^T \bE\Big[\sum_{w}\frac{\pi^2(x,w)}{e_t(x,w)}\Big]^{-1/2} }\\
                & \leq  \frac{\sum_{t=1}^T \bE\Big[\bigg|\sqrt{\frac{\bE [ \sum_{w}\frac{\pi^2(x,w)}{e_t(x,w)}]}{\sum_{w}\frac{\pi^2(x,w)}{e_t(x,w)}}}
              -1 \bigg|\Big]\bE[\sum_{w}\frac{\pi^2(x,w)}{e_t(x,w)}]^{-1/2}}{\sum_{t=1}^T \bE\Big[\sum_{w}\frac{\pi^2(x,w)}{e_t(x,w)}\Big]^{-1/2} },
    \end{aligned}
    \end{equation}
     where in the numerator,  $\bE\big[\big|\frac{\bE [ \sum_{w}\frac{\pi^2(x,w)}{e_t(x,w)}]}{\sum_{w}\frac{\pi^2(x,w)}{e_t(x,w)}} - 1   \big|\big]$ goes to zero almost surely;
    and in the denominator,  $ \sum_{t=1}^T \bE[\sum_{w}\frac{\pi^2(x,w)}{e_t(x,w)}]^{-1/2} \geq  \sum_{t=1}^T \bE[\sum_{w}\pi^2(x,w)]^{-1/2} t^{-\frac{\alpha}{2}}$ goes to infinity and thus\\ $\max_{1\leq t\leq T}\bE[\sum_{w}\frac{\pi^2(x,w)}{e_t(x,w)}]^{-1/2} /\sum_{t=1}^T\bE[\sum_{w}\frac{\pi^2(x,w)}{e_t(x,w)}]^{-1/2}\rightarrow 0$. By Proposition \ref{prop:weighted_convergence_2}, we have  \eqref{eq:ht_denom_decompose} goes to zero. Together with \eqref{eq:ht_o1} converges to zero in probability, we  conclude the proof for \eqref{eq:ht_convergence}.
    
    \vspace{\baselineskip}

  Similarly, for fixed contexts $x, x'$, we want to show that \eqref{eq:ht_cross_convergence}, which can be showed by a similar argument as we did for showing \eqref{eq:ht_convergence}. We shall show that the numerator can be written as 
     \begin{equation}
       \label{eq:h2t_num} 
       \sum_{t=1}^T h_t(x)h_t(x') = o_p(\sum_{t=1}^T \bE[\sum_{w}\frac{\pi^2(x,w)}{e_t(x,w)}\sum_{w}\frac{\pi^2(x',w)}{e_t(x',w)}]^{-1/2}) + \sum_{t=1}^T \bE[\sum_{w}\frac{\pi^2(x,w)}{e_t(x,w)}\sum_{w}\frac{\pi^2(x',w)}{e_t(x',w)}]^{-1/2},
   \end{equation}
   and the denominator can be written as 
   \begin{equation}
       \label{eq:h2t_denom} 
       \sum_{t=1}^T \bE[h_t(x)h_t(x')] = o(\sum_{t=1}^T \bE[\sum_{w}\frac{\pi^2(x,w)}{e_t(x,w)} \sum_{w}\frac{\pi^2(x',w)}{e_t(x',w)} ]^{-1/2}) + \sum_{t=1}^T \bE[\sum_{w}\frac{\pi^2(x,w)}{e_t(x,w)} \sum_{w}\frac{\pi^2(x',w)}{e_t(x',w)}]^{-1/2}.
   \end{equation}
Once we have \eqref{eq:h2t_num} and \eqref{eq:h2t_denom}, we have  the term $\sum_{t=1}^T h_t(x)h_t(x') / \bE[\sum_{t=1}^T h_t(x)h_t(x')]\xrightarrow{p}1$. 

\vspace{\baselineskip}

To show \eqref{eq:h2t_num}, we first have  $ \sqrt{\frac{
        \bE[\sum_{w}\frac{\pi^2(x,w)}{e_t(x,w)}\cdot \sum_{w}\frac{\pi^2(x',w)}{e_t(x',w)}] 
        }{\sum_{w}\frac{\pi^2(x,w)}{e_t(x,w)}\cdot \sum_{w}\frac{\pi^2(x',w)}{e_t(x',w)}  } }- 1
              \xrightarrow{a.s.}0$ by a similar argument as \eqref{eq:ht_as_converge}. Therefore,

\begin{equation}
\label{eq:h2t_inter}
    \begin{split}
 & \sum_{t=1}^T h_t(x)h_t(x') -    \sum_{t=1}^T  \bE[\sum_{w}\frac{\pi^2(x,w)}{e_t(x,w)}\cdot \sum_{w}\frac{\pi^2(x',w)}{e_t(x',w)}]^{-1/2}\\   
 = & \sum_{t=1}^T \Big(  \sqrt{\frac{
        \bE[\sum_{w}\frac{\pi^2(x,w)}{e_t(x,w)}\cdot \sum_{w}\frac{\pi^2(x',w)}{e_t(x',w)}] 
        }{\sum_{w}\frac{\pi^2(x,w)}{e_t(x,w)}\cdot \sum_{w}\frac{\pi^2(x',w)}{e_t(x',w)}  } }- 1\Big)  \bE[\sum_{w}\frac{\pi^2(x,w)}{e_t(x,w)}\cdot \sum_{w}\frac{\pi^2(x',w)}{e_t(x',w)}]^{-1/2}\\
= & o_p\bigg(
\sum_{t=1}^T  \bE[\sum_{w}\frac{\pi^2(x,w)}{e_t(x,w)}\cdot \sum_{w}\frac{\pi^2(x',w)}{e_t(x',w)}]^{-1/2}
\bigg),
    \end{split}
\end{equation}
by a similar argument as we show \eqref{eq:ht_o1} is $o_p(1)$. Thus we have  \eqref{eq:h2t_num} holds.
              
\vspace{\baselineskip}

To show \eqref{eq:h2t_denom}, from condition  $ \frac{\bE[e_t^{-1}(x,w) e_t^{-1}(x',w')]}{e_t^{-1}(x,w) e_t^{-1}(x',w')}\xrightarrow{L_1} 1$, we have $ \sqrt{\frac{
        \bE[\sum_{w}\frac{\pi^2(x,w)}{e_t(x,w)}\cdot \sum_{w}\frac{\pi^2(x',w)}{e_t(x',w)}] 
        }{\sum_{w}\frac{\pi^2(x,w)}{e_t(x,w)}\cdot \sum_{w}\frac{\pi^2(x',w)}{e_t(x',w)}  } }- 1
              \xrightarrow{L_1}0$.  By \eqref{eq:h2t_inter}, we have 
    \begin{equation}
        \bE\Big[ \Big|\sum_{t=1}^T h_t(x)h_t(x') -    \sum_{t=1}^T  \bE[\sum_{w}\frac{\pi^2(x,w)}{e_t(x,w)}\cdot \sum_{w}\frac{\pi^2(x',w)}{e_t(x',w)}]^{-1/2}\Big| \Big] = o\Big(\sum_{t=1}^T  \bE[\sum_{w}\frac{\pi^2(x,w)}{e_t(x,w)}\cdot \sum_{w}\frac{\pi^2(x',w)}{e_t(x',w)}]^{-1/2}\Big),
    \end{equation}
    proving \eqref{eq:h2t_denom}. This concludes the proof for \eqref{eq:ht_cross_convergence}.

\end{proof}

\subsection{Consistency of \texorpdfstring{$\widehat{Q}_T^C$}{Qhat-C}}
\label{appendix:consistency_of_contextual_weighting}
Estimate $\widehat{Q}_T^C$ is in fact a summation of policy value estimates conditional on covariates, that is,
\begin{align*}
   \widehat{Q}_T^C (\pi) = \sum_x (\widehat{P_x  Q_x})_T, \quad\mbox{where}\quad (\widehat{P_x  Q_x})_T := \frac{\sum_{t=1}^T h_t(x)\one\{X_t=x\}\widehat{\Gamma}_t(x, \pi)  }{\sum_{t=1}^T h_t(x)}.
\end{align*}
We have
\begin{align*}
    \widehat{Q}_T^C(\pi)  - Q(\pi)=\sum_x  \sum_{t=1}^T  \frac
        { h_t(x) (\one\{ X_t = x\}\widehat{\Gamma}_t(x, \pi)  - P_x Q_x)}
        {\sum_t h_t(x) } = \sum_x (\widehat{P_x  Q_x})_T - P_xQ_x.
\end{align*}
It thus suffices to show that each $(\widehat{P_x  Q_x})_T$ is consistent for $ P_xQ_x$. To do so, we introduce a context-specific MDS $\eta_{T,t}(x)$ for each covariate $x$,
\begin{equation}
\label{eq:eta_x}
    \eta_{T,t}(x) = \frac{h_t(x)(\one\{X_t=x\}\widehat{\Gamma}_t(x, \pi)  - P_xQ_x)}{\sqrt{\bE[\sum_{s=1}^T  h_t^2(x)(\one\{X_s=x\}\widehat{\Gamma}_s(w)  - P_xQ_x)^2 ]}}.
\end{equation}
which martingale difference property is justified by observing  $\bE [ \eta_{T,t} | H_{t-1} ]=0$ by routine calculation. We immediately have $\bE [ (\sum_{t=1}^T\eta_{T,t})^2 ]= \bE [ \sum_{t=1}^T\eta_{T,t}^2 ]=1$.

\vspace{\baselineskip}

The conditional variance of the numerator in \eqref{eq:eta_x} can be calculated explicitly and dominated by $\sum_w \frac{\pi^2(x,w)}{e_t(x,w)}$, that is, 
\begin{equation}
\label{eq:eta_second_moment}
\begin{aligned}
     &\condE{}{(\one\{X_t=x\}\widehat{\Gamma}_t(x, \pi)  - P_xQ_x)^2}{H_{t-1}} \\
    = & P_x \condE{}{(\widehat{\Gamma}_t(x, w)  - P_xQ_x)^2}{H_{t-1}, X_t=x} + (1-P_x)P^2_xQ^2_x\\
    = & P_x  \condE{}{(\widehat{\Gamma}_t(x,w)  - Q_x)^2}{H_{t-1}, X_t=x} + P_x (1-P_x)^2 Q^2_x + (1-P_x)P_x^2Q^2_x\\
    = &  P_x  \condE{}{(\widehat{\Gamma}_t(x,w)  - Q_x)^2}{H_{t-1}, X_t=x}  + P_x(1-P_x)Q^2_x\\
    = & P_x \var(\widehat{\Gamma}_t(x,w)|H_{t-1}, X_t=x) 
    + P_x(1-P_x)Q^2_x,
\end{aligned}
\end{equation}
which is lower bounded by $c_1 \cdot \sum_w \frac{\pi^2(x,w)}{e_t(x,w)}$ and upper bounded by $c_2 \cdot \sum_w \frac{\pi^2(x,w)}{e_t(x,w)}$ for some constant $c_1, c_2$ by Proposition \ref{prop:aipw_var}. Moving on,  we have,
\begin{align*}
   |(\widehat{P_xQ_x})_T - P_xQ_x|^2 =  &\Big|\frac{\sum_{t=1}^T h_t(x)(\one\{X_t=x\}\widehat{\Gamma}_t(x, \pi) -  P_xQ_x) }{\sum_{t=1}^T h_t(x)}\Big|^2 \\
    = &\Big|\sum_{t=1}^T \eta_{T,t}(x) \Big|^2  \frac{\bE[\sum_{t=1}^T\big(h_t(x)\one\{X_t=x\}\widehat{\Gamma}_t(x, \pi)  - P_xQ_x\big)^2]] }{(\sum_{t=1}^T h_t(x))^2}  \\
    \stackrel{(i)}{\lesssim} &  \Big|\sum_{t=1}^T \eta_{T,t}(x) \Big|^2 \frac{T}{(\sum_{t=1}^T t^{-\alpha/2})^2 } =\Big| \sum_{t=1}^T \eta_{T,t}(x)\Big|^2 O(T^{\alpha-1}),
\end{align*}
where in (i), we use \eqref{eq:eta_second_moment}, $h_t(x)=1/\sqrt{\sum_w \frac{\pi^2(x,w)}{e_t(x,w)}}$, and that $e_t\geq Ct^{-\alpha}$ for some $\alpha\in[0,1/2)$ by Assumption~\ref{assu:clt_condition}. Thus for any $\epsilon>0$,
\begin{align*}
    \p(|(\widehat{P_xQ_x})_T - P_xQ_x|>\epsilon)\leq& \epsilon^{-2} \bE[|(\widehat{P_xQ_x})_T - P_xQ_x|^2]\\
    \leq &\epsilon^2 \bE\Big[\Big|\sum_{t=1}^T \eta_{T,t}(x)\Big|^2\Big]O(T^{\alpha-1})\\
    = & \epsilon^2 O(T^{\alpha-1})\rightarrow 0,
\end{align*}
which concludes the consistency of $(\widehat{P_xQ_x})_T$ and in turn $\widehat{Q}_T^C$.

\subsection{Asymptotic normality of \texorpdfstring{$\widehat{Q}_T^C$}{Qhat-C}}
 Expanding the definitions, we have 
\begin{equation}
    \label{eq:studentized}
    \frac{\widehat{Q}_T^C(\pi)  - Q(\pi)}{\sqrt{\widehat{V}^C_T(\pi) }} = \frac{1}{\sqrt{\widehat{V}^C_T(\pi) }} \sum_{t=1}^T \sum_x  \frac
        { h_t(x) (\one\{ X_t = x\}\widehat{\Gamma}_t(x, \pi)  - P_xQ_x)}
        {\sum_t h_t(x) },
\end{equation}
where the variance estimate given in \eqref{eq:studentized} can be rewritten as
\begin{equation}
    \label{eq:vhatc}
    \widehat{V}_T^C(\pi)  :=   \sum_{t=1}^T 
         \Big( \sum_{x}\frac{
      h_t(x)(\one\{X_t=x\}\widehat{\Gamma}_t(x, \pi)  - (\widehat{P_x  Q_x})_T)
         }{
         \sum_{t=1}^T h_t(x)
         }
         \Big)^2.
\end{equation}

One challenge in proving \eqref{eq:studentized} is asymptotically normal is that the weight normalization term $\sum_t h_t(x)$ in \eqref{eq:studentized} is context-specific, and hence we cannot leverage a single martingale difference sequence (MDS) to establish the central limit theorem as we did in Appendix~\ref{sec:proof_noncontextual_clt} for non-contextual weighting. However, we notice that \eqref{eq:studentized} is related to the sum of context-specific MDSs $\eta_{T,t}(x)$. Therefore, instead of working with \eqref{eq:studentized} directly, we will  show that the following ``auxiliary'' statistic (which is also a MDS) is asymptotically normal:
\begin{equation}
    \label{eq:sum_zeta}
    \begin{split}
    \sum_{t=1}^T \zeta_{T,t} &:= 
    \frac{1}{\sqrt{V_T^C}}
    \sum_{t=1}^T \sum_{x}\alpha_T(x)\eta_{T,t}(x),
    \end{split}
\end{equation}
where the aggregation weight is given by
\begin{equation}
\label{eq:alpha_T}
    \alpha_T(x) =  \frac{\sqrt{\sum_{t=1}^T \bE[h^2_t(x)(\one\{X_t=x\}\widehat{\Gamma}_t(x, \pi)  - P_xQ_x)^2]}}{ \sum_{t=1}^T\E{}{h_t(x)} } .
\end{equation}
and the variance $V_T^C$ is given by
\begin{equation}
    \label{eq:vc}
     \begin{split}
  & \quad V_T^C = \sum_{t=1}^T 
        \bE\Big[  \Big( \sum_{x}\frac{
      h_t(x)(\one\{X_t=x\}\widehat{\Gamma}_t(x, \pi)  - P_xQ_x)
         }{
         \sum_{t=1}^T\E{}{h_t(x)}
         }
         \Big)^2\Big].
    \end{split}
\end{equation}

Once this is established, we then show that we can replace the population quantities by their sample counterparts (e.g., $\widehat{V}_T^C$ by $V_T^C$) without perturbing the asymptotic behavior of the statistic. This is done in the following steps.
\begin{itemize}
    \item \emph{Step I: CLT of context-specific MDS $\eta_{T,t}(x)$.} \label{sec:convergence-xi} For a fixed covariate $x$, this essentially reduces to a non-contextual problem, so we can prove asymptotic normality of $\sum_{t=1}^T \eta_{T, t}(x)$ by  repeating a similar proof to Theorem~\ref{thm:non_contextual}.

    \item \emph{Step II: CLT of the aggregation MDS $\zeta_{T,t}$.} We  shall show that this $\zeta_{T,t}$ satisfies two martingale CLT conditions stated in Proposition~\ref{prop:martingale_clt} leveraging that each $\eta_{T,t}(x)$ satisfies the same conditions; hence $\sum_{t=1}^T \zeta_{T,t}$ is asymptotically standard normal.

\item \emph{Step III: CLT of \eqref{eq:studentized}.} The final step in the proof is to show that we can replace the population quantities $V_T^{C}$ and $\sum_{t=1}^{T} \bE[h_t(X_t)]$ by their sample counterparts.

\end{itemize}

\subsubsection{Step I:  CLT of context-specific MDS \texorpdfstring{$\eta_{T,t}(x)$}{MDS-C-x}}
Fixing a covariate $x$, the context-specific MDS $\eta_{T,t}(x)$ reduces to that constructed in multi-armed bandits, which  has been studied in \cite{hadad2021confidence}. The subtlety here is the indicator function $\one\{X_t=x\}$, but the entire argument from \cite{hadad2021confidence} directly applies. For completeness, we lay out the details and verify two CLT conditions of $\eta_{T,t}(x)$ required in Proposition~\ref{prop:martingale_clt}.

\paragraph{Variance convergence.}
We want to show that 
\begin{equation}
\label{eq:eta2}
\begin{aligned}
    \sum_{t=1}^T \bE[\eta^2_{T,t}(x)|H_{t-1}] = \frac{
   \sum_{t=1}^T h_t(x)^2 \bE[(\one\{X_t=x\}\widehat{\Gamma}_t(x, \pi)  - P_xQ_x)^2|H_{t-1}]
    }{
    \sum_{t=1}^T \bE[h_t(x)^2 (\one\{X_t=x\}\widehat{\Gamma}_t(x, \pi)  - P_xQ_x)^2]
    }\xrightarrow{p}1.
\end{aligned}    
\end{equation}

Each summand in the numerator has been computed analytically in Appendix \ref{sec:proof_aipw_var}, that is,
\begin{equation*}
\begin{aligned}
     &\condE{}{(\one\{X_t=x\}\widehat{\Gamma}_t(x, \pi)  - P_xQ_x)^2}{H_{t-1}} \\
    = & P_x \var(\widehat{\Gamma}_t(X_t,\pi)\mid H_{t-1}, X_t=x)
    + P_x(1-P_x)Q_x^2\\
    = & P_x \sum_w\frac{\pi^2(x, w)\Var{}{Y_t(w)\mid x}}{e_t(x, w)} + P_x \sum_w\frac{\pi^2(x, w)(\hat{\mu}_t(x, w)-\mu(x, w))^2}{e_t(x, w)} \\
    &\quad - P_x \Big(\sum_w \pi(x, w)\big(\hat{\mu}_t(x, w)-\mu(x, w)\big)\Big)^2 + P_x(1-P_x)Q_x^2
\end{aligned}
\end{equation*}
The numerator in \eqref{eq:eta2} can be decomposed as 
\begin{align*}
    Z_T =& \sum_{t=1}^T h_t^2(x) \Big\{P_x \sum_w\frac{\pi^2(x, w)\Var{}{Y_t(w)\mid x}}{e_t(x, w)} + P_x \sum_w\frac{\pi^2(x, w)(\hat{\mu}_t(x, w)-\mu(x, w))^2}{e_t(x, w)} \\
       & - P_x \Big(\sum_w \pi(x, w)\big(\hat{\mu}_t(x, w)-\mu(x, w)\big)\Big)^2 + P_x(1-P_x)Q_x^2\Big\}\\
       = & \sum_{t=1}^T h_t^2(x) \Big\{P_x \sum_w\frac{\pi^2(x, w)[\Var{}{Y_t(w)\mid x} + (\mu_\infty(x,w)-\mu(x,w))^2 ]}{e_t(x, w)} \\
       & - P_x \Big(\sum_w \pi(x, w)\big(\mu_\infty(x, w)-\mu(x, w)\big)\Big)^2 + P_x(1-P_x)Q_x^2\Big\}\\
       &  + \sum_{t=1}^T h_t^2(x)P_x\Big\{\sum_w\frac{\pi^2(x,w)}{e_t(x,w)}[(\hat{\mu}_t(x,w)-\mu_\infty(x,w))^2+2(\hat{\mu}_t(x,w)-\mu_\infty(x,w))(\mu_\infty(x,w)- \mu(x,w))]\\
       & -\big(\sum_w \pi(x,w)(\hat{\mu}_t(x, w) -\mu_\infty(x,w))\big)^2 - 2(\sum_w \pi(x,w)(\hat{\mu}_t(x, w) -\mu_\infty(x,w))(\sum_w \pi(x,w)(\mu_\infty(x, w) -\mu(x,w)) \Big\},
\end{align*}
which can be characterized as (to be shown shortly)
\begin{equation}
\label{eq:eta_ZT}
    \begin{split}
        Z_T =& \sum_{t=1}^T h_t^2(x) \sum_{w}\frac{C_1(x,w)\pi^2(x,w)}{e_t(x,w)} +C_2(x) \sum_{t=1}^T h_t^2(x)  + o_p\Big(\sum_{t=1}^T h_t^2(x) \Big\{\sum_{w}\frac{C_1(x,w)\pi^2(x,w)}{e_t(x,w)} +C_2(x)\Big\} \Big),
    \end{split}
\end{equation}
where $C_1(x,w) = P_x(\Var{}{Y_t(w)\mid x} + (\mu_\infty(x,w)-\mu(x,w))^2) $ and $C_2(x) = - P_x \Big(\sum_w \pi(x, w)\big(\hat{\mu}_t(x, w)-\mu(x, w)\big)\Big)^2 + P_x(1-P_x)Q_x^2$ do not depend on the history $H_{t-1}$. Note that $\sum_w \frac{C_1(x,w)\pi^2(x,w)}{e_t(x,w)}+C_2(x)\geq \sum_w \frac{\var(Y_t(w)|x)\pi^2(x,w)}{e_t(x,w)}$ by a calculation that is similar to the one done in \eqref{eq:C1C2}. Similarly, we have its expectation as 
\begin{equation}
\label{eq:eta_EZT}
    \begin{split}
        \bE[Z_T] =& \sum_{t=1}^T \bE[h_t^2(x) \sum_{w}\frac{C_1(x,w)\pi^2(x,w)}{e_t(x,w)}] +C_2(x) \sum_{t=1}^T \bE[ h_t^2(x)]  + o\Big(\sum_{t=1}^T\bE\Big[ h_t^2(x)\Big\{ \sum_{w}\frac{C_1(x,w)\pi^2(x,w)}{e_t(x,w)}+C_2(x)\Big\}\Big] \Big).
    \end{split}
\end{equation}
If we have \eqref{eq:eta_ZT} and \eqref{eq:eta_EZT}, to show \eqref{eq:eta2}, we only need to show 
\begin{equation}
\label{eq:eta2_target}
    \frac{\sum_{t=1}^T h_t^2(x) \sum_{w}\frac{C_1(x,w)\pi^2(x,w)}{e_t(x,w)} +C_2(x) \sum_{t=1}^T h_t^2(x)  }{\sum_{t=1}^T \bE[h_t^2(x) \sum_{w}\frac{C_1(x,w)\pi^2(x,w)}{e_t(x,w)}] +C_2(x) \sum_{t=1}^T \bE[ h_t^2(x)] }\xrightarrow{p}1.
\end{equation}

By Lemma~\ref{lemma:weight_convergence}, we have
\begin{equation}
    \sum_{t=1}^T h_t^2(x) \Big/ \bE\Big[\sum_{t=1}^T h_t^2(x)\Big]  \xrightarrow{p}1.
\end{equation}

In addition, by condition that $\bE[e_t^{-1}(x,w)]/e_t^{-1}(x,w) \xrightarrow{a.s.}1$, we have 
\begin{equation}
\label{eq:eta_h2c_as}
     h_t^2(x) \sum_{w}\frac{C_1(x,w)\pi^2(x,w)}{e_t(x,w)} \bigg/ \Big\{\bE\Big[\sum_{w}\frac{C_1(x,w)\pi^2(x,w)}{e_t(x,w)} \Big]\Big/\bE\Big[ \sum_{w}\frac{\pi^2(x,w)}{e_t(x,w)}\Big]\Big\} \xrightarrow{a.s.}1,
\end{equation}
by a caculation similar to \eqref{eq:ht_as_converge}.
 Recall the definition of \StableVar weights is $h_t(x) = 1/\sqrt{\sum_w\frac{\pi^2(x,w)}{e_t(x,w)}}$, we thus have $ h_t^2(x) \sum_{w}\frac{C_1(x,w)\pi^2(x,w)}{e_t(x,w)} $ is bounded above and away from zero, thus 
 
 \begin{equation}
 \label{eq:eta_h2c_L1}
     h_t^2(x) \sum_{w}\frac{C_1(x,w)\pi^2(x,w)}{e_t(x,w)} \bigg/ \Big\{\bE\Big[\sum_{w}\frac{C_1(x,w)\pi^2(x,w)}{e_t(x,w)} \Big]\Big/\bE\Big[ \sum_{w}\frac{\pi^2(x,w)}{e_t(x,w)}\Big]\Big\} \xrightarrow{L_1}1.
\end{equation}

Combining \eqref{eq:eta_h2c_as}, \eqref{eq:eta_h2c_L1}, and Lemma \ref{lemma:weight_convergence}, we have
\begin{align*}
     &\frac{\sum_{t=1}^T h_t^2(x) \sum_{w}\frac{C_1(x,w)\pi^2(x,w)}{e_t(x,w)} +C_2(x) \sum_{t=1}^T h_t^2(x)  }{\sum_{t=1}^T \bE[h_t^2(x) \sum_{w}\frac{C_1(x,w)\pi^2(x,w)}{e_t(x,w)}] +C_2(x) \sum_{t=1}^T \bE[ h_t^2(x)] } \\
     =& \frac{(\{\sum_{t=1}^T h_t^2(x) \sum_{w}\frac{C_1(x,w)\pi^2(x,w)}{e_t(x,w)} +C_2(x)h_t^2(x)\}  /M_T-1)M_T + M_T}{(\sum_{t=1}^T \bE[h_t^2(x) \sum_{w}\frac{C_1(x,w)\pi^2(x,w)}{e_t(x,w)}] +C_2(x) h_t^2(x)]/M_T-1)M_T+M_T }\\
     = & \frac{o_p(1)M_T+M_T}{o(1)M_T+M_T}\xrightarrow{p}1,
\end{align*}
where $M_T := \sum_{t=1}^T \bE\Big[\sum_{w}\frac{C_1(x,w)\pi^2(x,w)}{e_t(x,w)} + C_2(x) \Big]\big/\bE[ \sum_{w}\frac{\pi^2(x,w)}{e_t(x,w)}]$, proving \eqref{eq:eta2_target}.

\vspace{\baselineskip} 

To complete the proof, let's show  \eqref{eq:eta_ZT} and \eqref{eq:eta_EZT}. Define the next quantity, which collects the terms we'll prove to be asymptotically negligible relative to the dominant terms in these expressions:
\begin{equation}
\begin{split}
      &  m(x) := \sum_{t=1}^T h_t^2(x)P_x\Big\{\sum_w\frac{\pi^2(x,w)}{e_t(x,w)}[(\hat{\mu}_t(x,w)-\mu_\infty(x,w))^2+2(\hat{\mu}_t(x,w)-\mu_\infty(x,w))(\mu_\infty(x,w)- \mu(x,w))]\\
       & -\big(\sum_w \pi(x,w)(\hat{\mu}_t(x, w) -\mu_\infty(x,w))\big)^2 - 2(\sum_w \pi(x,w)(\hat{\mu}_t(x, w) -\mu_\infty(x,w))(\sum_w \pi(x,w)(\mu_\infty(x, w) -\mu(x,w)) \Big\}.
\end{split}
\end{equation}

For \eqref{eq:eta_ZT}, we need to show $m(x) = o_p(\sum_{t=1}^T h_t^2(x) \sum_{w}\frac{C_1(x,w)\pi^2(x,w)}{e_t(x,w)})$, which is true since each summand in $m_t$ is $o_p(h_t^2(x)\sum_w\frac{\pi^2(x,w)}{e_t(x,w)})$ since  $\sup_{x,w}|\hat{\mu}_{t}(x,w)-\mu_\infty(x,w)|\xrightarrow{a.s.}0$; also notice that $\sum_{t=1}^T h_t^2(x) \sum_{w}\frac{C_1(x,w)\pi^2(x,w)}{e_t(x,w)} = \Theta(T)$. Then invoking Proposition \ref{prop:weighted_convergence}, we have $m(x) / \big\{ \sum_{t=1}^T h_t^2(x) \sum_{w}\frac{C_1(x,w)\pi^2(x,w)}{e_t(x,w)}\big\} = o_p(1)$ and thus \eqref{eq:eta_ZT}.

\vspace{\baselineskip}

For \eqref{eq:eta_EZT}, we  need to show $\bE[m(x)] = o(\bE[\sum_{t=1}^T h_t^2(x) \sum_{w}\frac{C_1(x,w)\pi^2(x,w)}{e_t(x,w)}])$, which is true
since each summand in $\bE[m_t]$ is $o(\bE[h_t^2(x)\sum_w\frac{\pi^2(x,w)}{e_t(x,w)}])$ since  $\sup_{x,w}|\hat{\mu}_{t}(x,w)-\mu_\infty(x,w)|\xrightarrow{L_1}0$; also notice that $\bE[\sum_{t=1}^T h_t^2(x) \sum_{w}\frac{C_1(x,w)\pi^2(x,w)}{e_t(x,w)}] = \Theta(T)$. Then invoking Proposition \ref{prop:weighted_convergence_2}, we have \\$\bE[m(x)] / \big\{ \bE[\sum_{t=1}^T h_t^2(x) \sum_{w}\frac{C_1(x,w)\pi^2(x,w)}{e_t(x,w)}]\big\} = o(1)$ and thus \eqref{eq:eta_EZT}.

\begin{remark}
$\sum_{t=1}^T \bE[\eta_t^2(x)|H_{t-1}]$ is bounded by derivation in \eqref{eq:eta_second_moment}, we thus have $\sum_{t=1}^T \bE[\eta_t^2(x)|H_{t-1}]\xrightarrow{L_p}1$ for any $p>0$.
\end{remark}

\paragraph{Moment decay.}
A routine calculation (similar to what we have done in Appendix~\ref{appendix:clt_xi}) leads to 
\begin{align*}
    \condE{}{(\one\{X_t=x\}\widehat{\Gamma}_t(x, \pi)  - P_xQ_x)^4}{H_{t-1}} &\leq c\cdot \Big(
    1+\sum_w \frac{\pi^2(x,w)}{e_t(x,w)} +\sum_w \frac{\pi^3(x,w)}{e_t^2(x,w)}+\sum_w \frac{\pi^4(x,w)}{e_t^3(x,w)}
    \Big)\leq c'\cdot\sum_w \frac{\pi^4(x,w)}{e_t^3(x,w)},
\end{align*}
for some constants $c', c$. We thus have 
\begin{align*}
    \sum_{t=1}^T  \condE{}{\eta^4_{T,t}(x)}{H_{t-1}} &= \frac{\sum_{t=1}^T h_t^4(x)  \condE{}{(\one\{X_t=x\}\widehat{\Gamma}_t(x, \pi)  - P_xQ_x)^4}{H_{t-1}}}{ 
    \big( \sum_{t=1}^T \bE[h_t(x)^2 (\one\{X_t=x\}\widehat{\Gamma}_t(x, \pi)  - P_xQ_x)^2]\big)^2
    }\\
    &\lesssim \frac{\sum_{t=1}^T \sum_w \frac{\pi^4(x,w)}{e_t^3(x,w)}\big/(\sum_w \frac{\pi^2(x,w)}{e_t(x,w)})^2 }{T^2}\\
    &\leq \frac{\sum_{t=1}^T \sum_w \frac{\pi^4(x,w)}{e_t^3(x,w)}\big/\sum_w \frac{\pi^4(x,w)}{e_t^2(x,w)} }{T^2} \lesssim \frac{\sum_{t=1}^T t^{\alpha}}{T^2} = O(T^{\alpha-1})\xrightarrow{a.s.} 0.
\end{align*}
Invoking Proposition~\ref{prop:martingale_clt}, we have $\sum_{t=1}^T \eta_{T,t}(x)\xrightarrow{d}\N(0,1).$

\subsubsection{Step II: CLT of aggregated MDS \texorpdfstring{$\zeta_{T,t}$}{MDS-C}}
We set out to show that we can aggregate our auxiliary MDS $\eta_{T,t}(x)$ for each content $x$ into an asymptotically normal studentized statistic $\zeta_{T,t}$ for the value of the policy across $x$. To do so, we will again prove that $\zeta_{T,t}$ satisfies the conditions in Proposition~\ref{prop:martingale_clt}. Recall that $ \zeta_{T, t}  =  \sum_x \eta_{T, t}(x) \alpha_T(x)/ (V_T^C)^{-1/2}$, where $\alpha_T(x)$ and $V_T^C$ are defined in \eqref{eq:alpha_T} and \eqref{eq:vc} respectively. We in addition introduce the lemma below that will become handy in the subsequent proof. 

\begin{lemma}\label{lemma:diff-p-conv}
Suppose  $A_T/\E{}{A_T}\xrightarrow{p}1, B_T/\E{}{B_T}\xrightarrow{p}1 $. Also, suppose there exist constants $M_a,M_b>0$ such that $|\E{}{A_T} - \E{}{B_T}|\geq M_a|\E{}{A_T}|$ and $|\E{}{A_T} - \E{}{B_T}|\geq M_b|\E{}{B_T}|$, then we have 
$(A_T-B_T)/\E{}{A_T-B_T}\xrightarrow{p}1$.
\end{lemma}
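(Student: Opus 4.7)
The plan is to rewrite the numerator $A_T - B_T$ in a way that isolates a leading deterministic term equal to $\E{}{A_T - B_T}$ plus two remainder terms that are controlled by the given probability convergences. Specifically, I would define the scaled deviations $\epsilon_A := A_T/\E{}{A_T} - 1$ and $\epsilon_B := B_T/\E{}{B_T} - 1$, so that by hypothesis $\epsilon_A \xrightarrow{p} 0$ and $\epsilon_B \xrightarrow{p} 0$. Then $A_T = \E{}{A_T}(1+\epsilon_A)$ and $B_T = \E{}{B_T}(1+\epsilon_B)$ give the identity
\begin{equation*}
  \frac{A_T - B_T}{\E{}{A_T - B_T}} = 1 + \frac{\E{}{A_T}\,\epsilon_A - \E{}{B_T}\,\epsilon_B}{\E{}{A_T - B_T}}.
\end{equation*}

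The core of the argument is then to show that the second term on the right is $o_p(1)$. Here I would use the two lower bounds in the hypothesis, which imply
\begin{equation*}
  \frac{|\E{}{A_T}|}{|\E{}{A_T - B_T}|} \leq \frac{1}{M_a}, \qquad \frac{|\E{}{B_T}|}{|\E{}{A_T - B_T}|} \leq \frac{1}{M_b}.
\end{equation*}
Applying the triangle inequality to the remainder and combining with these bounds yields
\begin{equation*}
  \left|\frac{A_T - B_T}{\E{}{A_T - B_T}} - 1\right| \leq \frac{|\epsilon_A|}{M_a} + \frac{|\epsilon_B|}{M_b},
\end{equation*}
and the right-hand side converges to zero in probability since each $\epsilon$ does. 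A final invocation of Slutsky (or a direct $\epsilon$--$\delta$ argument on probabilities) then concludes $(A_T - B_T)/\E{}{A_T - B_T} \xrightarrow{p} 1$.

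I do not expect any serious obstacle here: the whole point of the two separation hypotheses $|\E{}{A_T} - \E{}{B_T}| \geq M_a|\E{}{A_T}|$ and $|\E{}{A_T} - \E{}{B_T}| \geq M_b|\E{}{B_T}|$ is precisely to prevent catastrophic cancellation between $\E{}{A_T}$ and $\E{}{B_T}$ in the denominator, which would otherwise inflate the relative error. Once these bounds are in hand, the argument is an elementary triangle-inequality calculation; the only thing to keep track of is that the constants $M_a, M_b$ are strictly positive and independent of $T$, which is exactly what the hypothesis supplies.
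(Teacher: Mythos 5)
Your proposal is correct and takes essentially the same route as the paper: after writing $A_T-\bE[A_T]=\bE[A_T]\,\epsilon_A$ and $B_T-\bE[B_T]=\bE[B_T]\,\epsilon_B$, your bound $\bigl|\tfrac{A_T-B_T}{\bE[A_T-B_T]}-1\bigr|\le \tfrac{1}{M_a}\bigl|\tfrac{A_T}{\bE[A_T]}-1\bigr|+\tfrac{1}{M_b}\bigl|\tfrac{B_T}{\bE[B_T]}-1\bigr|$ is exactly the paper's one-line triangle-inequality argument using the two separation constants.
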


\begin{proof}
 We have
 \begin{align*}
    \Big| \frac{A_T-B_T}{\E{}{A_T-B_T}} - 1\Big|\leq  \Big| \frac{A_T-\E{}{A_T}}{\E{}{A_T-B_T}}\Big| + \Big| \frac{B_T-\E{}{B_T}}{\E{}{A_T-B_T}}\Big| \leq \frac{1}{M_a}\Big|\frac{A_T}{\bE[A_T]} - 1 \Big| + \frac{1}{M_b}\Big|\frac{B_T}{\bE[B_T]} - 1 \Big|,
 \end{align*}
 which concludes the proof.
\end{proof}

\paragraph{Variance convergence.} We will invoke Lemma~\ref{lemma:diff-p-conv} to prove the result $\sum_{t=1}^T \bE[\zeta_{T,t}^2|H_{t-1}]\xrightarrow{p}1$.  In fact, the numerator of $\sum_{t=1}^T \bE[\zeta_{T,t}^2|H_{t-1}]$ can be written as $A_T-B_T$, and its denominator is $\bE[A_T - B_T]$, where
\begin{equation}
\label{eq:at_bt}
    \begin{split}
       & A_T = \sum_x \frac{\sum_{t=1}^T h_t^2(x)
        \bE[(\one\{X_t=x\}\widehat{\Gamma}_t(x, \pi) -P_xQ_x
)^2|H_{t-1}]}{\bE[\sum_{t=1}^T h_t(x)]^2}, \\
 &B_T = \sum_{x\neq y} \frac{ \sum_{t=1}^T
         h_t(x)h_t(y)P_xP_yQ_xQ_y
        }{\bE[\sum_{t=1}^T h_t(x)]\bE[\sum_{t=1}^T h_t(y)].
        }.
    \end{split}
\end{equation}
It suffices to verify that $A_T, B_T$ satisfy conditions in  Lemma~\ref{lemma:diff-p-conv}.
\begin{itemize}
    \item \emph{Convergence $A_T/\bE[A_T]\xrightarrow{p}1$.} By variance convergence of $\eta_{T,t}(x)$, we have for each $x\in\X$,
\begin{align*}
    \frac{\sum_{t=1}^T h_t^2(x)
        \bE[(\one\{X_t=x\}\widehat{\Gamma}_t(x, \pi) -P_xQ_x
)^2|H_{t-1}]}{\bE[\sum_{t=1}^T h_t(x)]^2} \Big/  \frac{ \sum_{t=1}^T
        \bE[h_t^2(x)(\one\{X_t=x\}\widehat{\Gamma}_t(x, \pi) -P_xQ_x
)^2] }{\bE[\sum_{t=1}^T h_t(x)]^2 } \xrightarrow{p}1.
\end{align*}
    
    \item \emph{Convergence $B_T/\bE[B_T]\xrightarrow{p}1$.} By Lemma \ref{lemma:weight_convergence}, for each $x\neq y$ we have
\begin{align*}
   &  \frac{\sum_{t=1}^T h_t(x)h_t(y) }{\bE[\sum_{t=1}^T h_t(x)h_t(y)]}\xrightarrow{p}1\\
    \Rightarrow \quad & \frac{ \sum_{t=1}^T
         h_t(x)h_t(y)P_xP_yQ_xQ_y
        }{\bE[\sum_{t=1}^T h_t(x)]\bE[\sum_{t=1}^T h_t(y)]
        }\Big/ \frac{ \sum_{t=1}^T \bE[h_t(x)h_t(y)]P_xP_yQ_xQ_y }{\bE[\sum_{t=1}^T h_t(x)]\bE[\sum_{t=1}^T h_t(y)]} \xrightarrow{p}1.
\end{align*}
    
    \item \emph{Show that $|A_T-B_T|\geq M_a A_T$ for some constant $M_a$.} 
       \begin{align*}
       & A_T- B_T \\
             =& \sum_x \frac{\sum_{t=1}^T h_t^2(x)
        \bE[(\one\{X_t=x\}\widehat{\Gamma}_t(x, \pi) -P_xQ_x
)^2|H_{t-1}, X_t=x]}{\bE[\sum_{t=1}^T h_t(x)]^2}
           - \sum_{x\neq y} \frac{ \sum_{t=1}^T
         h_t(x)h_t(y)P_xP_yQ_xQ_y
        }{\bE[\sum_{t=1}^T h_t(x)]\bE[\sum_{t=1}^T h_t(y)]
        }\\
          = &\sum_x \frac{\sum_{t=1}^T h_t^2(x)P_x
        \bE[(\widehat{\Gamma}_t(x, \pi) -Q_x
)^2|H_{t-1}, X_t=x]}{\bE[\sum_{t=1}^T h_t(x)]^2} +\sum_x \frac{\sum_{t=1}^T h_t^2(x)P_x(1-P_x)
        Q_x^2}{\bE[\sum_{t=1}^T h_t(x)]^2}
           \\
           &\quad \qquad- \sum_{x\neq y} \frac{ \sum_{t=1}^T
         h_t(x)h_t(y)P_xP_yQ_xQ_y
        }{\bE[\sum_{t=1}^T h_t(x)]\bE[\sum_{t=1}^T h_t(y)]
        }\\
         =& \sum_x \frac{\sum_{t=1}^T h_t^2(x)P_x
        \bE[(\widehat{\Gamma}_t(x,\pi ) -Q_x
)^2|H_{t-1}, X_t=x]}{\bE[\sum_{t=1}^T h_t(x)]^2} + \sum_{t=1}^T \sum_{x\neq y}P_xP_y\Big(
        \frac{
        h_t(x)Q_x
        }{\bE[\sum_{t=1}^T h_t(x)]} - \frac{
        h_t(y)Q_y
        }{\bE[\sum_{t=1}^T h_t(y)]}
        \Big)^2\\
           & \geq \sum_{t=1}^T \sum_x\frac{h^2_t(x)P_x\bE[(\widehat{\Gamma}_t(x, \pi) -Q_x
)^2|H_{t-1}, X_t=x]}{
\bE[\sum_{t=1}^T h_t(x)]^2 } \gtrsim \sum_{t=1}^T \sum_x\frac{h^2_t(x)P_x}{
\bE[\sum_{t=1}^T h_t(x)]^2 }
      \end{align*}
      Thus, we have
\begin{align*}
         A_T- B_T & \geq \sum_{t=1}^T \sum_x\frac{h^2_t(x)P_x\bE[(\widehat{\Gamma}_t(x, \pi) -Q_x
)^2|H_{t-1}, X_t=x]}{
\bE[\sum_{t=1}^T h_t(x)]^2 } \\
 &\gtrsim \sum_x \frac{\sum_{t=1}^T h_t^2(x)
        \bE[(\one\{X_t=x\}\widehat{\Gamma}_t(x, \pi) -P_xQ_x
)^2|H_{t-1}]}{\bE[\sum_{t=1}^T h_t(x)]^2}= A_T.
\end{align*}

    \item \emph{Show that $|A_T-B_T|\geq M_b B_T$ for some constant $M_b$.} Similarly, 
\begin{align*}
    A_T-B_T\gtrsim \sum_{t=1}^T \sum_x\frac{h^2_t(x)P_x}{
\bE[\sum_{t=1}^T h_t(x)]^2 }\gtrsim \sum_{t=1}^T \sum_x\frac{h^2_t(x)P_x(1-P_x) Q_x^2}{
\bE[\sum_{t=1}^T h_t(x)]^2 }\\
\stackrel{\mbox{(i)}}{\geq} \sum_{t=1}^T \sum_{x\neq y}\frac{h_t(x)h_t(y)P_xP_yQ_xQ_y }{
\bE[\sum_{t=1}^T h_t(x)] \bE[\sum_{t=1}^T h_t(y)] } = B_T,
\end{align*}
where (i) is by Cauchy-Schwartz inequality.
\end{itemize}

\paragraph{Moment decay.} We proceed to show that the fourth moment of $\sum_{t=1}^T\zeta_{T,t}=\sum_x\frac{\alpha_T(x)}{\sqrt{V_T^C}} \sum_{t=1}^T\eta_{T,t}(x)$ decays. To start, note that the aggregation weight $\frac{\alpha_T(x)}{\sqrt{V_T^C}} = \sqrt{\frac{\bE[A_T]}{\bE[A_T]-\bE[B_T]}}$ is bounded, where $A_T, B_T$ are defined in \eqref{eq:at_bt}. The conditional 4th moment of $\zeta_{T,t}$ can be bounded as follows.
\begin{equation}
    \label{eq:zeta_lyap}
    \begin{split}
    \sum_{t=1}^T\bE\Big[\zeta_{T,t}^4|H_{t-1}\Big]^{1/4}
    &= \sum_{t=1}^T\bE\Big[\Big(\sum_x \eta_{T,t}(x) \frac{\alpha_T(x)}{\sqrt{V_T^C}}\Big)^4
    |H_{t-1}\Big]^{1/4} \\
    &\leq \sum_{t=1}^T\sum_x 
     \frac{\alpha_T(x)}{\sqrt{V_T^C}}\left(\condE{}{\eta_{T,t}^4 (x)}{H_{t-1}} \right)^{1/4} \\
    &\lesssim  \sum_x \sum_{t=1}^T
    \condE{}{\eta_{T,t}^4 (x)}{H_{t-1}}^{1/4} \\
    &\leq |\X| \max_{x'} \Big(\sum_{t=1}^T\condE{}{\eta_{T,t}^4 (x')}{H_{t-1}}\Big)^{1/4} \xrightarrow{a.s.}0,
    \end{split}
\end{equation}
where the first inequality is due to Minkowski inequality, the second uses the fact that $\alpha_T(x)/\sqrt{V_T^C}$ is bounded by above; the limit follows from the previous subsection.

Therefore, by Proposition~\ref{prop:martingale_clt}, we have that $\sum_{t=1}^T \zeta_{T,t}\xrightarrow{d} \N(0,1)$.

\subsubsection{Step III: CLT of \texorpdfstring{$\widehat{Q}_T^{C}$}{Qhat-C}}
We now connect the convergence from $\zeta_{T,t}$ to $\widehat{Q}_T^C$. This procedure decomposes into two parts. 
\begin{enumerate}[(a)]
    \item From  $\sum_{t=1}^T \zeta_{T,t}\xrightarrow{d}\N(0,1)$ to $(\widehat{Q}^C_T(\pi)  - Q(\pi))\big/{\sqrt{V^C_T(\pi) }}\xrightarrow{d}\N(0,1)$.
    \item From $(\widehat{Q}^C_T(\pi)  - Q(\pi))\big/{\sqrt{V^C_T(\pi) }}\xrightarrow{d}\N(0,1)$ to  $(\widehat{Q}^C_T(\pi)  - Q(\pi))\big/{\sqrt{\widehat{V}^C_T(\pi) }}\xrightarrow{d}\N(0,1)$.
\end{enumerate}

\paragraph{Part (a).} It suffices to show the difference is vanishing,
\begin{equation}
\begin{split}
    &\sum_{t=1}^T \zeta_{T,t} - \frac{\widehat{Q}^C_T(\pi)  - Q(\pi)}{\sqrt{V^C_T(\pi) }} \stackrel{\Delta}{=} \sum_x \delta(x),\quad \mbox{where}\\
    & \delta(x) = \frac{\sum_{t=1}^T h_t(x)(\one\{X_t=x\}\widehat{\Gamma}_t(x, \pi)  - P_xQ_x)  }{ \sqrt{V^C_T(\pi) }\sum_{t=1}^T \E{}{h_t(x)} } \bbracket{1-\frac{\sum_{t=1}^T \E{}{h_t(x)} }{\sum_{t=1}^T h_t(x) } }.
\end{split}
\end{equation} 
We have
\[
    \begin{split}
        |\delta(x)| &\stackrel{(i)}{\lesssim} \left|
    \frac{\sum_{t=1}^T h_t(x)(\one\{X_t=x\}\widehat{\Gamma}_t(x, \pi)  - P_xQ_x)  }{  \sqrt{\sum_y \alpha^2_T(y)} \sum_{t=1}^T \E{}{h_t(x)} }\right| \cdot\left|1-\frac{\sum_{t=1}^T \E{}{h_t(x)} }{\sum_{t=1}^T h_t(x)} \right|\\
       &\leq \left|\frac{\sum_{t=1}^T h_t(x)(\one\{X_t=x\}\widehat{\Gamma}_t(x, \pi)  - P_xQ_x)  }{ \sqrt{ \alpha^2_T(x)} \sum_{t=1}^T \E{}{h_t(x)} }\right|\cdot \left|1-\frac{\sum_{t=1}^T \E{}{h_t(x)} }{\sum_{t=1}^T h_t(x) } \right|\\
       & = 
      \left| \sum_{t=1}^T \eta_{T,t}(x)\right|\cdot \left|1-\frac{\sum_{t=1}^T \E{}{h_t(x)} }{\sum_{t=1}^T h_t(x) }  \right|\xrightarrow{p}0,
    \end{split}
\]
where in (i), we use that $V_T^C=\bE[A_T-B_T]]\gtrsim \bE[A_T]=\sum_{y}\alpha_T^2(y)$ (where $A_T, B_T$ defined in \eqref{eq:at_bt}); and in the last convergence statement, we use the fact that $ \sum_{t=1}^T \eta_{T,t}(x)$ is asymptotically normal and thus $O_p(1)$, and that  with \texttt{StableVar} weights $\sum_{t=1}^T \bE[h_t(x)] / \sum_{t=1}^T h_t(x) \xrightarrow{p}1$ by Lemma \ref{lemma:weight_convergence}.
Together we have $\delta(x)$ is vanishing.

\paragraph{Part (b).} It suffices to show that the variance estimator converges to its expectation: $\widehat{V}_T^C/{V}_T^C \xrightarrow{p}1$. Expanding the definitions of $\widehat{V}_T^C$ and ${V}_T^C$, we have
\begin{align*}
    \widehat{V}_T^C(\pi)  =& \sum_x \underbracket[0.4pt]{ \frac{\sum_{t=1}^T 
        h_t^2(x)\big(\one\{X_t=x\}\hat{\Gamma}_t(x, \pi) - (\widehat{P_x  Q_x})_T\big)^2
        }{\big(\sum_{t=1}^Th_t(x)\big)^2}}_{\widehat{I}(x)} +\sum_{x\neq y}
   \underbracket[0.4pt]{\frac{\sum_{t=1}^T h_t(x)h_t(y) (\widehat{P_x  Q_x})_T(\widehat{P_y Q_y})_T }{
    \big(\sum_{t=1}^T h_t(x))\big(\sum_{t=1}^Th_t(y) )
    }}_{\widehat{II_a}(x,y)} 
    \\
    &\quad -\sum_{x\neq y} \underbracket[0.4pt]{\frac{ \sum_{t=1}^T h_t(x)h_t(y) \one\{X_t=x\}\hat{\Gamma}_t(x, \pi)  (\widehat{P_y Q_y})_T }{
     \big(\sum_{t=1}^T h_t(x)\big)\big( \sum_{t=1}^Th_t(y)\big)
    } }_{\widehat{II_b}(x,y)}  -\sum_{x\neq y}\underbracket[0.4pt]{ \frac{ \sum_{t=1}^T h_t(x)h_t(y) \one\{X_t=y\}\hat{\Gamma}_t(x, \pi) (\widehat{P_x  Q_x})_T }{
     \big(\sum_{t=1}^T h_t(x)\big)\big( \sum_{t=1}^Th_t(y)\big)
    } }_{\widehat{II_c}(x,y)}.\\
             V_T^C(x, \pi)  =&\sum_x  \underbracket[0.4pt]{ \frac{\sum_{t=1}^T\E{}{
        h_t^2(x)\big(\one\{X_t=x\}\hat{\Gamma}_t(x, \pi) - P_xQ_x\big)^2
      }}{\E{}{\sum_{t=1}^Th_t(x)}^2}}_{I(x)}
       - \sum_{x\neq y}\underbracket[0.4pt]{\frac{\sum_{t=1}^T \E{}{h_t(x)h_t(y)}P_xQ_xP_y Q_y }{\big(\sum_{t=1}^T \E{}{h_t(x)}\big)\bbracket{ \sum_{t=1}^T\E{}{h_t(y)} }}}_{II(x,y)}
\end{align*}
In fact, for any $x,y\in \X$, we have $\frac{\widehat{I}(x)}{I(x)}, \frac{\widehat{II_a}(x,y)}{ II(x,y)}, \frac{\widehat{II_b}(x,y)}{II(x,y)}, \frac{\widehat{II_c}(x,y)}{II(x,y)}$ all converge to $1$ in probability (to be shown shortly). Also, recall that $\sum_x I(x) = \bE[A_T]$ and $\sum_{x\neq y}II(x,y) = \bE[B_T]$ (where $A_T, B_T$ are defined in \eqref{eq:at_bt}); invoking Lemma~\ref{lemma:diff-p-conv} yields $\widehat{V}_T^C/{V}_T^C \xrightarrow{p}1$.

We now show that $\widehat{I}(x), \widehat{II_a}(x,y), \widehat{II_b}(x,y), \widehat{II_c}(x,y)$ converge to $I(x), II(x,y), II(x,y), II(x,y)$ respectively. 
\begin{itemize}
    \item Show $\widehat{I}(x) / I(x) \xrightarrow{p}1$.
    \begin{align*}
        \frac{\widehat{I}(x)}{I(x)} = \frac{\sum_{t=1}^T 
        h_t^2(x)\big(\one\{X_t=x\}\hat{\Gamma}_t(x,\pi) - (\widehat{P_x  Q_x})_T\big)^2
        }{\sum_{t=1}^T\E{}{
        h_t^2(x)\big(\one\{X_t=x\}\hat{\Gamma}_t(x,\pi) - P_xQ_x\big)^2
      }} \frac{\E{}{\sum_{t=1}^Th_t(x)}^2}{\big(\sum_{t=1}^Th_t(x)\big)^2} .
    \end{align*}
    For \StableVar weights, by Lemma~\ref{lemma:weight_convergence}, $\frac{\E{}{\sum_{t=1}^Th_t(x)}^2}{\big(\sum_{t=1}^Th_t(x)\big)^2} \xrightarrow{p}1$.
    On the other hand, following the same steps as in Appendix~\ref{appendix:clt_nc_final}, one can show that
    \begin{align}
    \label{eq:x_var_consistent}
        \frac{\sum_{t=1}^T 
        h_t^2(x)\big(\one\{X_t=x\}\hat{\Gamma}_t(x, \pi) - (\widehat{P_x  Q_x})_T\big)^2
        }{\sum_{t=1}^T\E{}{
        h_t^2(x)\big(\one\{X_t=x\}\hat{\Gamma}_t(x, \pi) - P_xQ_x\big)^2
      }}\xrightarrow{p}1,
    \end{align}
    by observing that $(\widehat{P_x  Q_x})_T$ is consistent to $P_xQ_x$ (shown in Appendix~\ref{appendix:consistency_of_contextual_weighting}), and that $\sum_{t=1}^T \bE[\eta_{T,t}^2|H_{t-1}]\xrightarrow{L_1}1$, $\sum_{t=1}^T \bE[\eta_{T,t}^4|H_{t-1}]\xrightarrow{a.s.}0$; we defer the \hyperlink{proof:theproof}{proof} to later text. 
    
    \item Show $\widehat{II_a}(x,y) / II(x,y)\xrightarrow{p}1$. By Lemma \ref{lemma:weight_convergence}, with \StableVar weights, we have 
    \begin{align*}
      \frac{  \sum_{t=1}^{T}h_t(x)}{ \sum_{t=1}^{T}\bE[h_t(x) ] } \xrightarrow{p}1,\quad  \frac{  \sum_{t=1}^{T}h_t(x)h_t(y) }{ \sum_{t=1}^{T}\bE[h_t(x)h_t(y) ] }  \xrightarrow{p}1.
    \end{align*}
    Meanwhile, $(\widehat{P_x  Q_x})_T, (\widehat{P_y Q_y})_T$ are consistent for $P_xQ_x, P_y Q_y$ respectively, yielding the desired result.
    
    \item Show $\widehat{II_b}(x,y) / II(x,y)\xrightarrow{p}1$. We have
    \begin{align*}
        \frac{\widehat{II_b}(x,y) }{II(x,y)} &= \frac{ \sum_{t=1}^T h_t(x)h_t(y) \one\{X_t=x\}\hat{\Gamma}_t(x, \pi)  (\widehat{P_y Q_y})_T }{
     \big(\sum_{t=1}^T h_t(x)\big)\big( \sum_{t=1}^Th_t(y)\big)
    }  \Big / \frac{\sum_{t=1}^T \E{}{h_t(x)h_t(y)}P_xQ_xP_y Q_y }{\big(\sum_{t=1}^T \E{}{h_t(x)}\big)\bbracket{ \sum_{t=1}^T\E{}{h_t(y)} }} \\
    & = \frac{\sum_{t=1}^T\E{}{h_t(x)}}{\sum_{t=1}^Th_t(y)} \cdot  \frac{\sum_{t=1}^T\E{}{h_t(y)}}{\sum_{t=1}^Th_t(y)} \cdot \frac{ (\widehat{P_y Q_y})_T }{P_y Q_y} \cdot \frac{\sum_{t=1}^Th_t(x)h_t(y)}{\sum_{t=1}^T \bE[h_t(x)h_t(y)]}\\
    &\quad \quad \times \frac{ \sum_{t=1}^T h_t(x)h_t(y) \one\{X_t=x\}\hat{\Gamma}_t(x, \pi) }{\sum_{t=1}^T h_t(x)h_t(y)} \Big/ \Big(P_xQ_x  \Big).
    \end{align*}
    We claim that 
    \begin{equation}
    \label{eq:cross_weights_consistency}
        \frac{ \sum_{t=1}^T h_t(x)h_t(y) \one\{X_t=x\}\hat{\Gamma}_t(x, \pi) }{\sum_{t=1}^T h_t(x)h_t(y)}\xrightarrow{p} P_xQ_x,
    \end{equation}
    which concludes $\widehat{II_b}(x,y) / II(x,y)\xrightarrow{p}1$. We defer the \hyperlink{proof:theproof2}{proof} of \eqref{eq:cross_weights_consistency} to later text.
    
        \item Show $\widehat{II_c}(x,y) / II(x,y) \xrightarrow{p}1$, which holds by the same argument applied to showing $\widehat{II_b}(x,y) / II(x,y) \xrightarrow{p}1$ above.
\end{itemize}

\vspace{1cm}

\begin{proof}[\hypertarget{proof:theproof}{Proof of \eqref{eq:x_var_consistent}}]
We now prove \eqref{eq:x_var_consistent}. 
\begin{align*}
   & \frac{\sum_{t=1}^T 
        h_t^2(x)\big(\one\{X_t=x\}\hat{\Gamma}_t(x, \pi) - (\widehat{P_x  Q_x})_T\big)^2
        }{\sum_{t=1}^T\E{}{
        h_t^2(x)\big(\one\{X_t=x\}\hat{\Gamma}_t(x, \pi) - P_xQ_x\big)^2
      }}\\
=&\underbracket[0.4pt]{ \frac{\sum_{t=1}^T 
        h_t^2(x)\big(\one\{X_t=x\}\hat{\Gamma}_t(x, \pi) - P_xQ_x\big)^2
        }{\sum_{t=1}^T\E{}{
        h_t^2(x)\big(\one\{X_t=x\}\hat{\Gamma}_t(x, \pi) - P_xQ_x\big)^2
      }}}_{A} + \underbracket[0.4pt]{\frac{\sum_{t=1}^T 
        h_t^2(x)\big(P_xQ_x - (\widehat{P_x  Q_x})_T\big)^2
        }{\sum_{t=1}^T\E{}{
        h_t^2(x)\big(\one\{X_t=x\}\hat{\Gamma}_t(x, \pi) - P_xQ_x\big)^2
      }} }_{B}\\
      &+\underbracket[0.4pt]{ 2 \frac{\sum_{t=1}^T 
        h_t^2(x)\big(P_xQ_x - (\widehat{P_x  Q_x})_T\big)\big(\one\{X_t=x\}\hat{\Gamma}_t(x, \pi) - P_xQ_x\big)
        }{\sum_{t=1}^T\E{}{
        h_t^2(x)\big(\one\{X_t=x\}\hat{\Gamma}_t(x, \pi) - P_xQ_x\big)^2
      }}}_{C}
\end{align*}
\begin{itemize}
    \item \emph{Term A.} We show this term converges to $1$ in $L_1$ and thus in probability.
\begin{align*}
&  \bE\big[\Big|  \frac{\sum_{t=1}^T 
        h_t^2(x)\big(\one\{X_t=x\}\hat{\Gamma}_t(x, \pi) - P_xQ_x\big)^2
        }{\sum_{t=1}^T\E{}{
        h_t^2(x)\big(\one\{X_t=x\}\hat{\Gamma}_t(x, \pi) - P_xQ_x\big)^2
      }} \Big|\big]\\
\leq & \bE[|\sum_{t=1}^T \bE[\eta_{T,t}^2|H_{t-1}]|]\\
      &\quad   +\bE\Big[\Big|  \frac{\sum_{t=1}^T 
        h_t^2(x)\big(\one\{X_t=x\}\hat{\Gamma}_t(x, \pi) - P_xQ_x\big)^2
        - \sum_{t=1}^T 
        \bE[h_t^2(x)\big(\one\{X_t=x\}\hat{\Gamma}_t(x, \pi) - P_xQ_x\big)^2|H_{t-1}]
        }{\sum_{t=1}^T\E{}{
        h_t^2(x)\big(\one\{X_t=x\}\hat{\Gamma}_t(x, \pi) - P_xQ_x\big)^2
      }} \Big|\Big] \\
      \leq & \bE[|\sum_{t=1}^T \bE[\eta_{T,t}^2|H_{t-1}]|]+\bE\big[  \frac{\sum_{t=1}^T 
        h_t^4(x)\big(\one\{X_t=x\}\hat{\Gamma}_t(x, \pi) - P_xQ_x\big)^4
        }{\sum_{t=1}^T\E{}{
        h_t^2(x)\big(\one\{X_t=x\}\hat{\Gamma}_t(x, \pi) - P_xQ_x\big)^4
      }} \big]^{1/2}\\
      = &\bE[|\sum_{t=1}^T \bE[\eta_{T,t}^2|H_{t-1}]|] + \bE[\sum_{t=1}^T \bE[\eta_{T,t}^4|H_{t-1}]]^{1/2}\rightarrow 1.
\end{align*}
 
    \item \emph{Term B.} We show this term converges to $0$ in probability.
\begin{align*}
    \frac{\sum_{t=1}^T 
        h_t^2(x)\big(P_xQ_x - (\widehat{P_x  Q_x})_T\big)^2
        }{\sum_{t=1}^T\E{}{
        h_t^2(x)\big(\one\{X_t=x\}\hat{\Gamma}_t- P_xQ_x\big)^2
      }}\lesssim \frac{T }{T}\big(P_xQ_x - (\widehat{P_x  Q_x})_T\big)^2 \xrightarrow{p}0.
\end{align*}

    \item \emph{Term C.} This term is vanished by Cauchy-Schwartz inequality. 
\end{itemize}

\noindent Collectively, we conclude the proof and have
\begin{equation*}
    \frac{\sum_{t=1}^T 
        h_t^2(x)\big(\one\{X_t=x\}\hat{\Gamma}_t(x, \pi) - (\widehat{P_x  Q_x})_T\big)^2
        }{\sum_{t=1}^T\E{}{
        h_t^2(x)\big(\one\{X_t=x\}\hat{\Gamma}_t(x, \pi) - P_xQ_x\big)^2
      }} \xrightarrow{p}1.
\end{equation*}
\end{proof}

\vspace{1cm}
\begin{proof}[\hypertarget{proof:theproof2}{Proof of \eqref{eq:cross_weights_consistency}}]
    We have
    \begin{equation}
           \begin{aligned}
         &\Big| \frac{ \sum_{t=1}^T h_t(x)h_t(y) \one\{X_t=x\}\hat{\Gamma}_t(x, \pi) }{\sum_{t=1}^T h_t(x)h_t(y)} - P_xQ_x\Big |=  \Big|\frac{
         \sum_{t=1}^T h_t(x)h_t(y)(\one\{X_t=x\}\hat{\Gamma}_t(x, \pi)  - P_xQ_x  )
        }{\sum_{t=1}^T h_t(x)h_t(y)} \Big|\\
        = & \Big| \frac{
         \sum_{t=1}^T h_t(x)h_t(y)(\one\{X_t=x\}\hat{\Gamma}_t(x, \pi)  - P_xQ_x  )
        }{\sqrt{\bE[\sum_{t=1}^T h_t^2(x)h_t^2(y)(\one\{X_t=x\}\hat{\Gamma}_t(x, \pi)  - P_xQ_x  )^2]}} \Big| \cdot \frac{\sqrt{\bE[\sum_{t=1}^T h_t^2(x)h_t^2(y)(\one\{X_t=x\}\hat{\Gamma}_t(x, \pi)  - P_xQ_x  )^2]}}{\sum_{t=1}^T h_t(x)h_t(y)} \\
        \stackrel{(i)}{\lesssim} & \Big| \frac{
         \sum_{t=1}^T h_t(x)h_t(y)(\one\{X_t=x\}\hat{\Gamma}_t(x, \pi)  - P_xQ_x  )
        }{\sqrt{\bE[\sum_{t=1}^T h_t^2(x)h_t^2(y)(\one\{X_t=x\}\hat{\Gamma}_t(x, \pi)  - P_xQ_x  )^2]}} \Big| \frac{\sqrt{T}}{\sum_{t=1}^T t^{-\alpha}}\\
        =  &  \Big|\frac{
         \sum_{t=1}^T h_t(x)h_t(y)(\one\{X_t=x\}\hat{\Gamma}_t(x, \pi)  - P_xQ_x  )
        }{\sqrt{\bE[\sum_{t=1}^T h^2_t(x)h^2_t(y)(\one\{X_t=x\}\hat{\Gamma}_t(x, \pi)  - P_xQ_x  )^2]}} \Big| O(T^{\alpha-\frac{1}{2}}),
    \end{aligned}
    \end{equation}
    where in (i)  we invoke \StableVar weights such that $h_t(x) = 1/\sqrt{\sum_w\pi^2(x,w)/e_t(x,w)}, h_t(y) = 1/\sqrt{\sum_w \pi^2(y,w)/e_t(y,w)}$ such that $h_t(x), h_t(y)\in [C\cdot t^{-\alpha/2}, 1]$. Thus, for any $\epsilon>0$,
    \begin{equation}
            \begin{aligned}
        & \p\Big(\Big|
        \frac{ \sum_{t=1}^T h_t(x)h_t(y) \one\{X_t=x\}\hat{\Gamma}_t(x, \pi) }{\sum_{t=1}^T h_t(x)h_t(y)} - P_xQ_x
        \Big|>\epsilon\Big) \leq \epsilon^{-2}\bE\Big[\Big|  \frac{ \sum_{t=1}^T h_t(x)h_t(y) \one\{X_t=x\}\hat{\Gamma}_t(x, \pi) }{\sum_{t=1}^T h_t(x)h_t(y)} - P_xQ_x\Big|^2 \Big]\\
        \leq & \epsilon^{-2}
       \bE\Big[ \Big|\frac{
         \sum_{t=1}^T h_t(x)h_t(y)(\one\{X_t=x\}\hat{\Gamma}_t(x, \pi)  - P_xQ_x  )
        }{\sqrt{\bE[\sum_{t=1}^T h^2_t(x)h^2_t(y)(\one\{X_t=x\}\hat{\Gamma}_t(x, \pi)  - P_xQ_x  )^2]}} \Big|^2 \Big]
       O(T^{2\alpha-1}) =  \epsilon^{-2}O(T^{2\alpha-1})\rightarrow 0,
    \end{aligned}
    \end{equation}
    where we use the condition that $\alpha<\frac{1}{2}$ in Assumption~\ref{assu:clt_condition}.
    
\end{proof}

\section{Classification dataset}
\label{sec:dataset}
The full list of the $86$  datasets from OpenML \cite{OpenML2013} is:

\hfill

\begin{verbatim}
 CreditCardSubset, 
 GAMETES_Epistasis_2-Way_20atts_0_4H_EDM-1_1,
 GAMETES_Heterogeneity_20atts_1600_Het_0_4_0_2_75_EDM-2_001, 
 LEV, Long, MagicTelescope, PhishingWebsites, PizzaCutter3, 
 SPECT, Satellite, abalone, allrep, artificial-characters, 
 autoUniv-au1-1000, balance-scale, banknote-authentication,
 blood-transfusion-service-center, boston, 
 boston_corrected, car, cardiotocography, chatfield_4, 
 chscase_census2, chscase_census6,
 cmc, coil2000, collins, credit-g, delta_ailerons, diabetes, dis,
 disclosure_x_noise, eeg-eye-state, eye_movements, fri_c0_1000_5,
 fri_c1_1000_25, fri_c1_1000_5, fri_c1_250_10, fri_c1_500_10,
 fri_c1_500_25, fri_c2_1000_25, fri_c2_1000_50, fri_c3_1000_25,
 fri_c3_250_10, fri_c3_250_5, fri_c3_500_5, fri_c4_1000_50, 
 haberman, heart-statlog, houses, ionosphere, 
 jEdit_4_2_4_3, jungle_chess_2pcs_endgame_elephant_elephant,
 jungle_chess_2pcs_endgame_panther_lion, kc3, kr-vs-kp, 
 mammography, mfeat-morphological, monks-problems-1, 
 monks-problems-2, monks-problems-3, nursery, oil_spill, 
 ozone_level, page-blocks, plasma_retinol, prnn_fglass, 
 qualitative-bankruptcy, ringnorm, rmftsa_sleepdata, 
 segment, solar-flare, spambase, splice, threeOf9,
 tic-tac-toe, vertebra-column, volcanoes-a1, 
 volcanoes-a2, volcanoes-b3, volcanoes-d4, volcanoes-e5, 
 wdbc, wilt, xd6, yeast
\end{verbatim}

\end{document}